\pgfplotsset{compat=1.18}
\definecolor{darkgreen}{rgb}{0,0.5,0}
\newtheorem{theorem}{Theorem}[section]
\newtheorem{lemma}[theorem]{Lemma}
\newtheorem{corollary}[theorem]{Corollary}
\newtheorem{proposition}[theorem]{Proposition}
\newtheorem{observation}[theorem]{Observation}
\newcommand{\defcal}[1]{\expandafter\newcommand\csname c#1\endcsname{{\mathcal{#1}}}}
\newcommand{\defbb}[1]{\expandafter\newcommand\csname b#1\endcsname{{\mathbb{#1}}}}
\newcommand{\defvec}[1]{\expandafter\newcommand\csname v#1\endcsname{{\mathbf{#1}}}}
\newcommand{\defmat}[1]{\expandafter\newcommand\csname m#1\endcsname{{\mathbf{#1}}}}
\newcounter{calBbCounter}
    \edef\capitalletter{\Alph{calBbCounter}}
		\edef\letter{\alph{calBbCounter}}
\newcommand{\eps}{\varepsilon}
\newcommand{\nnR}{{\bR_{\geq 0}}}
\newcommand{\dotcup}{\mathbin{\mathaccent\cdot\cup}}
\newcommand{\headerref}[1]{{\texorpdfstring{\ref{#1}}{\ref*{#1}}}}
\newcommand{\characteristic}{{\mathbf{1}}}
\newcommand{\RSet}{{\mathsf{R}}}
\newcommand{\SAT}{{\texttt{SAT}}}
\newcommand{\OurAlgorithm}{Min-as-Oracle}
\newcommand{\IterativeAlg}{Iterative-X-Growing}
\newcommand{\SingletonsAlg}{Min-by-Singletons}
\newif\ifcomments
\ifcomments\newcommand{\comments}[1]{#1}\else\newcommand{\comments}[1]{}\fi
\newcommand{\email}[1]{{\href{mailto:#1}{#1}}}
\title{Submodular Minimax Optimization: Finding Effective Sets}
\author{Loay Mualem\thanks{Computer Science Department, University of Haifa, Israel. E-mail: \email{loaymua@gmail.com}.} \and
				Ethan R. Elenberg\thanks{ASAPP, New York, NY. E-mail: \email{eelenberg@asapp.com}.} \and
				Moran Feldman\thanks{Computer Science Department, University of Haifa, Israel. E-mail: \email{moranfe@cs.haifa.ac.il}.} \and
				Amin Karbasi\thanks{School of Engineering and Applied Science, Yale University, and Google Research, E-mail: \email{amin.karbasi@yale.edu}.}
}
\begin{document}
\pagenumbering{Alph}
\maketitle
\begin{abstract}
Despite the rich existing literature about minimax optimization in continuous settings, only very partial results of this kind have been obtained for combinatorial settings. In this paper, we fill this gap by providing a characterization of submodular minimax optimization, the problem of finding a set (for either the min or the max player) that is effective against every possible response. We show when and under what conditions we can find such sets. We also demonstrate how minimax submodular optimization provides robust solutions for downstream machine learning applications such as (i) efficient prompt engineering for question answering, (ii) prompt engineering for dialog state tracking, (iii) identifying robust waiting locations for ride-sharing, (iv) ride-share difficulty kernelization, and (v) finding adversarial images. Our experiments demonstrate that our proposed algorithms  consistently outperform other baselines. 

\medskip

\noindent \textbf{Keywords:} submodular functions, minimax optimization, prompt engineering, personalized image summarization, ride-share optimization

\end{abstract}
\thispagestyle{empty}
\newpage

\pagenumbering{arabic}
\section{Introduction}

Many machine learning tasks, ranging from data selection to decision making, are inherently combinatorial and thus, require combinatorial optimization techniques that work at scale. Even though, in general, solving such problems is notoriously hard, practical problems are very often endowed with extra structures that lend them to optimization techniques. One common structure is submodularity, a condition that holds either exactly or approximately in a wide range of machine learning applications, including: dictionary selection~\cite{krause2010submodular}, sparse recovery, feature selection~\cite{das2011submodular}, neural network interpretability~\cite{elenberg2017streaming}, crowd teaching~\cite{singla2014near}, human brain mapping~\cite{salehi2017submodular}, data summarizarion~\cite{lin2011class,mualem2022resolving}, among many others. Submodular functions are often considered to be discrete analogs of concave functions, and like concave functions they can be (approximately) maximized. At the same time, submodular functions can also be exactly minimized as they can be extended into a continues convex function (known as the Lov\'{a}sz extension). These optimization properties of submodular functions has been often exploited in scalable machine learning algorithms. 

While scalable optimization methods are desirable, they are not the only requirements for ML algorithm deployment. Very often, it is also important to get solutions that are robust with respect to noise, outliers, adversarial examples, etc. In particular, problems looking for solutions that are robust with respect to worst-case scenarios have usually been expressed as minimax optimization. Accordingly, recent years have witnessed a large body of work addressing minimax optimization in the continuous settings (see, e.g., \cite{diakonikolas2021efficient,ibrahim2020linear,lin2020gradient,mokhtari2020convergance}). This line of research has given rise to a myriad of algorithms, and an ever increasing list of applications such as adversarial attack generation~\cite{wang2021adverserial}, robust statistics~\cite{agarwal2022minimax} and multi-agent systems~\cite{li2019robust}, to name a few. To ensure feasibility of finding a saddle point, one has to make some structural assumptions. For instance, many of the above-mentioned works assume that the minimization is taken over a convex function, and the maximization over a concave function. 

Despite the rich existing literature about minimax optimization in continuous settings, very few works have managed to obtain similar results for combinatorial settings. Staib et al.~\cite{staib2019distributionally} and 
 Adibi et al.~\cite{adibi2022minimax} considered hybrid settings in which the maximization is done with respect to a (discrete) submodular function, but the minimization is still done over a continuous domain. Krause et al.~\cite{krause2008robust}, Torrico et al.~\cite{torrico2021structred} and Iyer~\cite{iyer2019unified,iyer2020robust} considered settings in which both the maximization and the minimization are discrete, but one of them is done over a small domain that can be efficiently enumerated. In this paper we provide the first systematic study of the natural case of fully discrete minimax optimization with maximization and minimization domains that can both be large. To the best of our knowledge, the only previous works relevant to this case are works of Bogunovic et al.~\cite{bogunovic2017robust} and Orlin et al.~\cite{orlin2018robust}. These works studied a particular problem within the general setting we consider. Specifically, they studied the maximization of a monotone submodular function subject to a cardinality constraint in the presence of a worst case (represented by a minimization) removal of a small number of elements from the chosen solution.

As submodular functions cannot be maximized exactly, there is no hope to get a saddle point in our setting. Instead, like Adibi et al.~\cite{adibi2022minimax}, we take a game theoretic perspective on the setting. From this point of view, there are two players. Each player selects a set, and the objective function value is determined by the sets selected by both players. One of the players aims to minimize the objective function, while the other player wishes to maximize it. Our task is to select for one of the players (either the minimization or the maximization player) a set that is \textit{effective} in the sense that it guarantees a good objective value regardless of the set chosen by the other player.

We map the tractability and approximability of the above minimax submodular optimization task as function of various properties, such as: the player considered (minimization or maximization), the constraints (if any) on the sets that can be chosen by the players, and whether the objective function is submodular as a whole, or for each player separately. We refer the reader to Section~\ref{ssc:results} for our exact results (and Sections~\ref{sec:max_min} and~\ref{sec:min_max} for the proofs of these results). However, in a nutshell, we have fully mapped the approximability for the minimization player, and we also have non-trivial results for the maximization player.

Our proposed algorithms for minimax submodular optimization can lead to finding of robust solutions  for down-stream machine learning applications, including efficient prompt engineering, ride-share difficulty kernalization, adversarial attacks on image summarization and robust ride-share optimization.  
Empirical evaluation of our algorithms in the context of all the above applications can be found in Section~\ref{sec:exp}.

\subsection{Our theoretical contribution} \label{ssc:results}

Let us describe the formal model for our setting. There are two (disjoint) ground sets $\cN_1$ and $\cN_2$, one ground set for each one of the players. For each ground set $\cN_i$, we also have a constraint $\cF_i \subseteq 2^{\cN_i}$ specifying the sets that can be chosen from this ground set. Finally, there is a non-negative objective set function $f\colon 2^{\cN_1 \dotcup \cN_2} \to \nnR.$\footnote{Here, and throughout the paper, $\dotcup$ denotes the union of disjoint sets.} The minimization player gets to pick a set $X$ from $\cF_1$, and wishes to minimize the value of $f$, while the maximization players picks a set $Y$ from $\cF_2$, and aims to maximize the value of $f$. Our task is to find for each player a set $S$ that yields the best value for $f$ assuming the other player chooses the best response against $S$. In other words, for the minimization player we want to find a set $X$ that (approximately) minimizes
\[
	\min_{X \in \cF_1} \max_{Y \in \cF_2} f(X \dotcup Y)
	\enspace,
\]
and for the maximization player we should find a set $Y$ that (approximately) maximizes 
\[
	\max_{Y \in \cF_2} \min_{X \in \cF_1} f(X \dotcup Y)
	\enspace.
\]
Since optimization of general set functions cannot be done efficiently, we must assume that the objective function $f$ obeys some properties. Two common properties that are often considered in the literature are submodularity and monotonicity. However, to assume these properties, we first need to discuss what they mean in our setting.

Let us begin with the property of submodularity. In the following, given an element $u$ and a set $S$ we use $f(u \mid S) \triangleq f(S \cup \{u\}) - f(S)$ to denote the marginal contribution of the element $u$ to the set $S$.\footnote{Similarly, given two sets $S$ and $T$, we denote by $f(T \mid S) \triangleq f(S \cup T) - f(S)$ the marginal contribution of $T$ to $S$.} According to the standard definition of submodularity,\footnote{A set function $g\colon \cN \to \bR$ is submodular if $g(u \mid S) \geq g(u \mid T)$ for every two sets $S \subseteq T \subseteq \cN$ and element $u \in \cN \setminus T$.} $f$ is submodular if
\[
	f(u \mid S) \geq f(u \mid T)
	\qquad
	\forall\; S \subseteq T \subseteq \cN_1 \dotcup \cN_2, u \in (\cN_1 \dotcup \cN_2) \setminus T
	\enspace.
\]
Since this definition of submodularity treats $\cN_1$ and $\cN_2$ as two parts of one ground set, in the rest of this paper we call a function that obeys it \emph{jointly-submodular}. However, since the ground sets $\cN_1$ and $\cN_2$ play very different roles in our problems, it makes sense to consider also functions that are submodular when restricted to one ground set. For example, we say that $f$ is submodular when restricted to $\cN_1$ if it becomes a submodular function when we fix the set of elements of $\cN_2$ chosen. More formally, $f$ is submodular when restricted to $\cN_1$ if
\[
	f(u \mid S \cup A_2) \geq f(u \mid T \cup A_2)
	\qquad
	\forall\; S \subseteq T \subseteq \cN_1, u \in \cN_1 \setminus T, A_2 \subseteq \cN_2
	\enspace.
\]
The definition of being submodular when restricted to $\cN_2$ is analogous, and we say that $f$ is \emph{disjointly-submodular} if it is submodular when restricted to either $\cN_1$ or $\cN_2$.

Unfortunately, submodular minimization admits very poor approximation guarantees even subject to simple constraints such as cardinality (see Section~\ref{ssc:related_work} for more details). Therefore, we restrict attention to the case of $\cF_1 = 2^{\cN_1}$. Given this restriction, we cannot assume that $f$ is monotone\footnote{A set function $g\colon 2^\cN \to \bR$ is monotone if $g(S) \leq g(T)$ for every two sets $S \subseteq T \subseteq \cN$.} since this will guarantee that the best choice for the set $X$ is always $\varnothing$. However, some of our results assume that $f$ is monotone with respect to the elements of $\cN_2$. In other words, we say that $f$ is \emph{$\cN_2$-monotone} if
\[
	f(u \mid S) \geq 0
	\qquad
	\forall\; S \subseteq \cN_1 \dotcup \cN_2, u \in \cN_2 \setminus S
	\enspace.
\]

Table~\ref{tbl:results} summarizes the theoretical results proved in this paper. When we say in this table that we have an inapproximability result of $c$ for a problem, we mean that no polynomial time algorithm can produce a value that with probability at least $2/3$ approximates the exact value of this problem up to a factor of $c$. For example, if look at the optimization problem $\min_{X \subseteq \cN_1} \max_{Y \in \cF_2} f(X \dotcup Y)$, then an inapproximability result of $c$ means that no polynomial time algorithm can produce a value $v$ which with probability at least $2/3$ obeys
\begin{equation} \label{eq:v}
	\min_{X \subseteq \cN_1} \max_{Y \in \cF_2} f(X \dotcup Y)
	\leq
	v
	\leq
	c \cdot \min_{X \subseteq \cN_1} \max_{Y \in \cF_2} f(X \dotcup Y)
	\enspace.
\end{equation}
In contrast, we hold our algorithms to a higher standard. Specifically, when Table~\ref{tbl:results} states that we have a $c$-approximation algorithm for a problem, it means that the algorithm is able to produce with probability at least $2/3$ two things: a value $v$ of the above kind, and a solution set $S$ for the external $\min$ or $\max$ operation that leads to $c$-approximation when the internal $\min$ or $\max$ is solved to optimality. For example, given the above optimization problem, a $c$-approximation algorithm produces with probability at least $2/3$ both a value $v$ obeying Equation~\eqref{eq:v}, and solution set $S \subseteq \cN_1$ such that
\[
	\min_{X \subseteq \cN_1} \max_{Y \in \cF_2} f(X \dotcup Y)
	\leq
	\max_{Y \in \cF_2} f(S \dotcup Y)
	\leq
	c \cdot \min_{X \subseteq \cN_1} \max_{Y \in \cF_2} f(X \dotcup Y)
	\enspace.
\]
It is important to note that the success probability $2/3$ in the above definitions can always be increased via repetitions. However, such repetitions can usually be avoided since our algorithms are typically either deterministic or naturally have a high success probability.

\begin{table}
\caption{Our theoretical results. We denote by $\alpha$ the approximation ratio that can be obtained for maximizing a non-negative submodular function subject to $\cF_2$. If $f$ happens to be $\cN_2$-monotone, then $\alpha$ can be improved to be the approximation ratio that can be obtained for maximizing a non-negative \emph{monotone} submodular function subject to $\cF_2$.} \label{tbl:results}
\small
\centering
\begin{tabular}{ccc}
\toprule
\textbf{Expression to approximate} & \textbf{Assumptions} & \textbf{Result proved}\\
\midrule
$\max_{Y \in \cF_2} \min_{X \subseteq \cN_1} f(X \dotcup Y)$ & jointly-submodular & $(\alpha + \eps)$-approx. alg. (Thm~\ref{thm:min_submodular})\\
$\max_{Y \subseteq \cN_2} \min_{X \subseteq \cN_1} f(X \dotcup Y)$ & disjointly-submodular & \multicolumn{1}{l}{\multirow{3}{*}{\Bigg\} $\begin{array}{l} \text{No finite approximation ratio}\mspace{-9mu}\\ \text{possible unless $BPP = NP$}\\ \text{(Thms~\ref{thm:inapproximability_unconstrained} and~\ref{thm:inapproximability_cardinality})} \end{array}$}}\\
\multirow{2}{*}{$\max_{\substack{Y \subseteq \cN_2\\|Y| \leq k}} \min_{X \subseteq \cN_1} f(X \dotcup Y)$} & disjointly-submodular & \\
& $\cN_2$-monotone &\\
\midrule
$\min_{X \subseteq \cN_1} \max_{Y \subseteq \cN_2} f(X \dotcup Y)$ & disjointly-submodular & $(4 + \eps)$-approx. alg. (Thm~\ref{thm:min_max_unconstrained})\\
$\min_{X \subseteq \cN_1} \max_{Y \in \cF_2} f(X \dotcup Y)$ & jointly-submodular, $\varnothing \in \cF_2$ & \hspace{-1mm}$O(\alpha\sqrt{|\cN_1|})$-approx. alg. (Thm.~\ref{thm:min_max_sqrt})\hspace{-1mm}\\
\multirow{2}{*}{$\min_{X \subseteq \cN_1} \max_{Y \in \cF_2} f(X \dotcup Y)$} & disjointly-submodular & \multirow{2}{*}{$O(|\cN_2|)$-approx. alg. (Thm~\ref{thm:min_max_disjoint})}\\
& $\{u\} \in \cF_2 \; \forall u \in \cN_2$ &\\
\bottomrule
\end{tabular}
\end{table}

For $\max \min$ expressions (the problem of the maximization player) we have a good understanding of the approximability, and it turns out that this approximability strongly depends on the kind of submodularity guaranteed for $f$. If $f$ is jointly-submodular, then the problem admits roughly the same approximation ratio as the corresponding maximization problem (i.e., the same problem with the $\min$ operation omitted). In contrast, when $f$ is disjointly-submodular, the problem does not admit any finite approximation ratio even in the special cases of: (i) unconstrained maximization, or (ii) maximizing subject to a cardinality constraint on $Y$ of an $\cN_2$-monotone function.\footnote{Note that in the special case of unconstrained maximization of an $\cN_2$-monotone function the problem becomes trivial since it is always optimal to set $Y = \cN_2$.} See Section~\ref{sec:max_min} for formal statements and proofs of these results.

We also have some results for $\min \max$ expressions (the problem of the minimization player), although our understanding of the approximability of such expressions is worse than for $\max \min$ expressions. When $f$ is jointly-submodular and $\cF_2$ obeys some simple properties (that are obeyed, for example, by any down-closed constraint), we get a finite approximation ratio of $O(\min\{\alpha\sqrt{|\cN_1|},\allowbreak |\cN_2|\})$, where $\alpha$ is the approximation ratio that can be obtained for the corresponding maximization problem. Furthermore, when $f$ is disjointly-submodular, we still get $O(|\cN_2|)$-approximation, which improves to $(4 + \eps)$-approximation when both the minimization and the maximization are unconstrained. It is interesting to note the last results are in strict contrast to the situation with $\max \min$ expressions, in which no finite approximation ratio is possible even in the unconstrained case when $f$ is disjointly-submodular. Formal statements and proofs of our results for $\min \max$ expressions can be found in Section~\ref{sec:min_max}.

As is standard in the literature, we assume (throughout the paper) that the access to the objective function $f$ is done via a value oracle that given a set $S \subseteq \cN_1 \dotcup \cN_2$ returns $f(S)$. Furthermore, given a set $S$ and element $u$, we use $S + u$ and $S - u$ to denote $S \cup \{u\}$ and $S \setminus \{u\}$, respectively.

\subsection{Related work} \label{ssc:related_work}

\paragraph{Submodular minimization.} The first polynomial time algorithm for (unconstrained) submodular minimization was obtained by Gr\"{o}tschel et al.~\cite{grotschel1981ellipsoid} using the ellipsoids method. Almost twenty years later, Schrijver~\cite{schrijver2000combinatorial} and Iwata et al.~\cite{iwata2001combinatorial} obtained, independently, the first strongly polynomial time (and combinatorial) algorithms for the problem. Further works have improved over the time complexities of the last algorithms, and the current state-of-the-art algorithm was described by Lee et al.~\cite{lee2015faster} (see also~\cite{axelrod2020near} for a faster approximation algorithm for the problem). 

All the above results apply to unconstrained submodular minimization. Unfortunately, constrained submodular minimization often (provably) admits only very poor approximation guarantees even when the constraint is as simple as a cardinality constraint (see, for example,~\cite{goel2010approximability,svitkina2011submodular}). Nevertheless, there are rare examples of constraints that allow for efficient submodular minimization, such as the constraint requiring the output set to be of even size~\cite{goemans1995minimizing}.

\paragraph{Submodular maximization.} A simple greedy algorithm obtains the optimal approximation ratio of $1 - 1/e$ for maximization of a monotone submodular function subject to a cardinality constraint~\cite{nemhauser1978best,nemhauser1978analysis}. The same approximation ratio was later obtained for general matroid constraints via the continuous greedy algorithm~\cite{calinescu2011maximizing}. 
The best possible approximation ratio for unconstrained maximization of a non-monotone submodular function is $1/2$~\cite{feige2011maximizing,buchbinder2015buchbinder}, even for deterministic algorithms~\cite{buchbinder2018deterministic}. However, the approximability of constrained maximization of such functions is not as well understood. Following a long line of works~\cite{buchbinder2014submodular,ene2016constrained,feldman2011unified,lee2009non-monotone,oveisgharan2011submodular,vondrak2013symmetry}, the state-of-the-art algorithm for maximizing a non-monotone submodular function subject to a  cardinality or matroid constraint guarantees $0.385$-approximation~\cite{buchbinder2019constrained}, while the best inapproximability result for these constraints only shows that it is impossible to obtain for them $0.478$-approximation~\cite{gharan2011simulated,qi2022maximizing}.

It is also worth mentioning a line of work~\cite{mirzasoleiman2017deletion,mitrovic2017streaming} aiming to find a small core set such that even if some elements are adversarially chosen for deletion, it is still possible to produce a good solution based on the core set. Note that this line of work differs from the maximization player point of view in our setting, in which the aim is to find a single solution for the maximization player that is good against every choice of the minimization player.

\paragraph{Prompt engineering for natural language processing.}
In-context learning~\cite{dong2022survey} has emerged as a powerful technique to leverage very large language models~\cite{brown2020gpt3,chen2021codex,ouyang2022instructgpt} for Natural Language Processing (NLP) tasks to new tasks without fine-tuning.
Recent works, such as~\cite{min2022rethinking,wang2022supernatural,wei2022chain}, show the importance of crafting good natural language prompts for these models.

Our prompt engineering experiments build on related works which use a neural retrieval model to prompt large language models for open-domain question answering~\cite{si2023reliable} and dialog state tracking~\cite{hu2022incontext}. While these previous works only use the Top-$k$ candidates based on embedding similarity, we formulate a novel combinatorial optimization problem for each application.
 
Some works suggested algorithmic approaches to prompt engineering that learn parameters using gradient-based optimization~\cite{lester2021prompt,li2021prefix,shin2020autoprompt,wen2023hardprompts}.
More recently, \cite{zhou2022promptengineers} designed prompts by ranking generations from a secondary language model combined with iterative Monte Carlo search. All of these methods are complex, computationally expensive, and challenging to interpret.

\section{Results for \texorpdfstring{$\max \min$}{maxmin} optimization} \label{sec:max_min}

This section includes our theoretical results for approximation of $\max \min$ expressions. We begin with our positive result, showing that when $f$ is jointly-submodular, the $\min$ operation does not significantly affect the approximability of the problem. The proof of this result is based on the observation that the joint-submodularity of $f$ implies that $\min_{X \subseteq \cN_1} f(X \dotcup Y)$ is a submodular function of $Y$. See Section~\ref{ssc:min_submodular} for details.
\begin{restatable}{theorem}{thmMinSubmodular} \label{thm:min_submodular}
Assume that there exists an $\alpha$-approximation algorithm $ALG$ for the problem of maximizing a non-negative submodular function $g$ subject to $\cF_2$, then there exists a polynomial time algorithm that outputs a set $\hat{Y} \in \cF_2$ and the value $\min_{X \subseteq \cN_1} f(X \dotcup \hat{Y})$, and guarantees that (i) $\min_{X \subseteq \cN_1} f(X \dotcup \hat{Y}) \leq \tau$, and (ii) the expectation of $\min_{X \subseteq \cN_1} f(X \dotcup \hat{Y})$ is at least $\tau / \alpha$, where $\tau = \max_{Y \in \cF_2} \min_{X \subseteq \cN_1} f(X \dotcup Y)$. Furthermore, if $f$ is $\cN_2$-monotone, then it suffices for $ALG$ to obtain $\alpha$-approximation when $g$ is guaranteed to be monotone (in addition to being non-negative and submodular).
\end{restatable}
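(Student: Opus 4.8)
The plan is to reduce the task to ordinary submodular maximization subject to $\cF_2$ and then invoke the assumed algorithm $ALG$. Define $g\colon 2^{\cN_2} \to \nnR$ by $g(Y) \triangleq \min_{X \subseteq \cN_1} f(X \dotcup Y)$; note that $g$ is non-negative because $f$ is, and that $\tau = \max_{Y \in \cF_2} g(Y)$ is precisely the optimum of the problem of maximizing $g$ over $\cF_2$. If I can show that $g$ is submodular and admits a polynomial-time value oracle, then running $ALG$ on $g$ produces a set $\hat Y \in \cF_2$, which I report together with the value $g(\hat Y) = \min_{X \subseteq \cN_1} f(X \dotcup \hat Y)$.

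The central step is to prove that $g$ is submodular, which I would establish through the equivalent union/intersection characterization. Fix $Y_1, Y_2 \subseteq \cN_2$ and let $X_1, X_2 \subseteq \cN_1$ attain $g(Y_1)$ and $g(Y_2)$, respectively. Applying the joint-submodularity of $f$ to the sets $X_1 \dotcup Y_1$ and $X_2 \dotcup Y_2$ gives
\[
  f(X_1 \dotcup Y_1) + f(X_2 \dotcup Y_2) \geq f\bigl((X_1 \dotcup Y_1) \cup (X_2 \dotcup Y_2)\bigr) + f\bigl((X_1 \dotcup Y_1) \cap (X_2 \dotcup Y_2)\bigr).
\]
Because $\cN_1$ and $\cN_2$ are disjoint, the union on the right equals $(X_1 \cup X_2) \dotcup (Y_1 \cup Y_2)$ and the intersection equals $(X_1 \cap X_2) \dotcup (Y_1 \cap Y_2)$. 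Bounding each of these two terms from below by the corresponding minimum over $\cN_1$ then yields $g(Y_1) + g(Y_2) \geq g(Y_1 \cup Y_2) + g(Y_1 \cap Y_2)$, which is exactly submodularity of $g$.

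For the value oracle, I would observe that for a fixed $Y$ the map $X \mapsto f(X \dotcup Y)$ is a submodular function on $\cN_1$ (again a consequence of joint-submodularity, which implies submodularity when restricted to $\cN_1$), so $g(Y)$ can be computed in polynomial time by any unconstrained submodular minimization routine (e.g.,~\cite{schrijver2000combinatorial,iwata2001combinatorial}) using the value oracle of $f$. With this oracle in hand, $ALG$ returns $\hat Y \in \cF_2$ with $\mathbb{E}[g(\hat Y)] \geq \tau/\alpha$, which establishes property (ii); property (i) is immediate, since $\hat Y \in \cF_2$ forces $g(\hat Y) \leq \max_{Y \in \cF_2} g(Y) = \tau$.

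Finally, for the $\cN_2$-monotone refinement I would verify that $g$ is monotone, so that the weaker (monotone) guarantee of $ALG$ suffices. Given $Y \subseteq Y' \subseteq \cN_2$, let $X^*$ attain $g(Y')$; then $g(Y) \leq f(X^* \dotcup Y) \leq f(X^* \dotcup Y') = g(Y')$, where the first inequality uses that $X^*$ is merely feasible (not optimal) for $Y$, and the middle inequality adds the elements of $Y' \setminus Y \subseteq \cN_2$ and invokes $\cN_2$-monotonicity. The main obstacle is really just the submodularity lemma together with confirming the polynomial-time value oracle for $g$; once these two facts are in place, both approximation guarantees follow directly from the properties of $ALG$.
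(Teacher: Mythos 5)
Your proposal is correct and follows essentially the same route as the paper: define $g(Y) = \min_{X \subseteq \cN_1} f(X \dotcup Y)$, prove it is non-negative, submodular (via the same union/intersection argument applied to minimizers $X_1, X_2$), equipped with a polynomial-time value oracle through unconstrained submodular minimization, and monotone when $f$ is $\cN_2$-monotone, then feed it to $ALG$. The derivation of properties (i) and (ii) from the guarantee of $ALG$ also matches the paper's argument.
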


We would like to note that by assuming in Theorem~\ref{thm:min_submodular} that $ALG$ is an $\alpha$-approximation algorithm, we only mean that the expected value of the solution of $ALG$ is smaller than the value of the optimal solution by at most a factor of $\alpha$. In other words, we do not make any high probability assumption on $ALG$.

If $ALG$ is a deterministic algorithm, then the algorithm whose existence is guaranteed by Theorem~\ref{thm:min_submodular} is also deterministic. However, if $ALG$ is a randomized algorithm, then it might be necessary to use repetitions to get the result stated in the next corollary. Specifically, by a Markov-like argument, the probability that $\min_{X \subseteq \cN_1} f(X \dotcup \hat{Y}) \geq \tau / (\alpha + \eps)$ must be at least $\eps / \alpha^2$, and therefore, by executing the algorithm from Theorem~\ref{thm:min_submodular} $O(\alpha^2 / \eps)$ times, the probability of getting a set $\hat{Y}$ for which $\min_{X \subseteq \cN_1} f(X \dotcup \hat{Y}) \geq \tau / (\alpha + \eps)$ can be made to be at least $2/3$.
\begin{restatable}{corollary}{colMinSubmodular} \label{cor:min_submodular}
Assume that there exists an $\alpha$-approximation algorithm $ALG$ for the problem of maximizing a non-negative submodular function $g$ subject to $\cF_2$. Then, for every polynomially small $\eps \in (0, \alpha]$, there exists a polynomial time algorithm that
(i) outputs a set $\hat{Y} \in \cF_2$ and the value $\min_{X \subseteq \cN_1} f(X \dotcup \hat{Y})$; and
(ii) guarantees that, with probability at least $2/3$, $\min_{X \subseteq \cN_1} f(X \dotcup \hat{Y})$ falls within the range $[\tau/(\alpha + \eps), \tau]$, where $\tau = \max_{Y \in \cF_2} \min_{X \subseteq \cN_1} f(X \dotcup Y)$.
Furthermore, if $f$ is $\cN_2$-monotone, then it suffices for $ALG$ to obtain $\alpha$-approximation when $g$ is guaranteed to be monotone (in addition to being non-negative and submodular).
\end{restatable}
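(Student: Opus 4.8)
The plan is to reduce the whole task to a single call of the given maximization oracle. Define the auxiliary set function $g \colon 2^{\cN_2} \to \nnR$ by $g(Y) \triangleq \min_{X \subseteq \cN_1} f(X \dotcup Y)$. Since $f$ is non-negative, every value $f(X \dotcup Y)$ is non-negative, so $g(Y) \geq 0$; and the central claim (flagged in the text) is that the joint-submodularity of $f$ makes $g$ submodular over $\cN_2$. Granting this, the algorithm is simply: run $ALG$ to approximately maximize $g$ subject to $\cF_2$, obtain a feasible set $\hat{Y} \in \cF_2$, and output $\hat{Y}$ together with the value $g(\hat{Y}) = \min_{X \subseteq \cN_1} f(X \dotcup \hat{Y})$.

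The two guarantees then follow essentially for free once we note that $\tau = \max_{Y \in \cF_2} \min_{X \subseteq \cN_1} f(X \dotcup Y) = \max_{Y \in \cF_2} g(Y)$. Property (i) holds because $\hat{Y} \in \cF_2$, so $g(\hat{Y}) \leq \max_{Y \in \cF_2} g(Y) = \tau$. Property (ii) is exactly the approximation guarantee of $ALG$ transported through the identity above: since $ALG$ returns a solution of expected $g$-value at least $(1/\alpha)\max_{Y \in \cF_2} g(Y)$, we get $\bE[g(\hat{Y})] \geq \tau/\alpha$, i.e. $\bE[\min_{X \subseteq \cN_1} f(X \dotcup \hat{Y})] \geq \tau/\alpha$. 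Note this uses only the expected-value form of $\alpha$-approximation, matching the model stated for $ALG$.

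The heart of the argument is the submodularity of $g$, which I would prove in the set-pair form. Fix $Y_1, Y_2 \subseteq \cN_2$ and let $X_1, X_2 \subseteq \cN_1$ attain the minima defining $g(Y_1)$ and $g(Y_2)$. Applying the joint-submodularity of $f$ to the two sets $X_1 \dotcup Y_1$ and $X_2 \dotcup Y_2$, and using that $\cN_1$ and $\cN_2$ are disjoint so that unions and intersections split coordinate-wise, gives $g(Y_1) + g(Y_2) = f(X_1 \dotcup Y_1) + f(X_2 \dotcup Y_2) \geq f((X_1 \cup X_2) \dotcup (Y_1 \cup Y_2)) + f((X_1 \cap X_2) \dotcup (Y_1 \cap Y_2)) \geq g(Y_1 \cup Y_2) + g(Y_1 \cap Y_2)$, where the last step bounds each $f$-term below by the corresponding minimum over $\cN_1$. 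For the $\cN_2$-monotone refinement, $g$ is in addition monotone: for $Y_1 \subseteq Y_2$ with $X_2$ the minimizer of $g(Y_2)$, we have $g(Y_1) \leq f(X_2 \dotcup Y_1) \leq f(X_2 \dotcup Y_2) = g(Y_2)$, the middle inequality being $\cN_2$-monotonicity; this is what lets us weaken the requirement on $ALG$ to the monotone case.

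The main obstacle is algorithmic rather than combinatorial: $ALG$ needs a value oracle for $g$, yet each evaluation $g(Y)$ is itself an unconstrained minimization of the function $X \mapsto f(X \dotcup Y)$ over $\cN_1$, which is submodular in $X$ by joint-submodularity with $Y$ held fixed. I would answer every such oracle query by invoking a polynomial-time exact algorithm for unconstrained submodular minimization, so the simulated oracle for $g$ is exact and the overall running time remains polynomial. The one point worth verifying carefully is precisely this exactness, since an inexact oracle could degrade both guarantees; fortunately unconstrained submodular minimization is solvable exactly in polynomial time, so no approximation error is introduced at this level.
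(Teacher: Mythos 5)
Your reduction to maximizing $g(Y) \triangleq \min_{X \subseteq \cN_1} f(X \dotcup Y)$, the proof that $g$ is non-negative and submodular (and monotone when $f$ is $\cN_2$-monotone), and the implementation of the value oracle for $g$ via exact unconstrained submodular minimization are all correct and match the paper's argument for its Theorem~\ref{thm:min_submodular}. The problem is that what you have proved at the end is precisely Theorem~\ref{thm:min_submodular} --- an \emph{in-expectation} guarantee $\bE[g(\hat{Y})] \geq \tau/\alpha$ together with the deterministic upper bound $g(\hat{Y}) \leq \tau$ --- whereas the corollary asks for something strictly stronger: that \emph{with probability at least $2/3$} the value $g(\hat{Y})$ lands in $[\tau/(\alpha+\eps), \tau]$. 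An expectation bound does not imply such a probability bound; for a randomized $ALG$ the event $g(\hat{Y}) \geq \tau/(\alpha+\eps)$ could a priori have very small probability. A telltale sign of the gap is that $\eps$ never appears in your argument, even though it is essential to the corollary's statement.

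The missing step is an amplification via a reverse-Markov argument. Since $g(\hat{Y}) \leq \tau$ always holds, writing $p = \Pr[g(\hat{Y}) \geq \tau/(\alpha+\eps)]$ one gets
\[
	\frac{\tau}{\alpha}
	\leq
	\bE[g(\hat{Y})]
	\leq
	p \cdot \tau + (1 - p) \cdot \frac{\tau}{\alpha + \eps}
	\enspace,
\]
which rearranges to $p \geq \eps/\alpha^2$ (up to the exact constant); this is where the slack $\eps$ is consumed. A single run therefore succeeds only with probability roughly $\eps/\alpha^2$, which can be far below $2/3$. One must run the algorithm $O(\alpha^2/\eps)$ times independently and return the candidate $\hat{Y}$ with the largest computed value $g(\hat{Y})$ (computable exactly thanks to the value oracle), so that the probability that at least one run achieves $g(\hat{Y}) \geq \tau/(\alpha+\eps)$ is at least $2/3$. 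The hypothesis that $\eps$ is only polynomially small is exactly what keeps this number of repetitions, and hence the overall running time, polynomial. Without this repetition-and-select step your proof establishes the theorem but not the corollary.
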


Unfortunately, it turns out that when $f$ is only disjointly submodular, there is very little an algorithm can guarantee. The following two theorems show this for two basic special cases: unconstrained maximization, and maximization subject to a cardinality constraint of an $\cN_2$-monotone function. Both theorems are proved using a reduction showing that the minimization over $X$ can be replaced with a minimization over multiple functions, which allows us to capture well-known NP-hard problems with $\max \min$ expressions that evaluate to $0$ when the correct answer for the NP-hard problem is ``No'', and evaluate to $1$ otherwise. See Section~\ref{ssc:inapproximability} for details.

\begin{restatable}{theorem}{thmInapproximabilityUnconstrained} \label{thm:inapproximability_unconstrained}
When $f$ is only guaranteed to be non-negative and disjointly submodular, no polynomial time algorithm for calculating $\max_{Y \subseteq \cN_2} \min_{X \subseteq \cN_1} f(X \dotcup Y)$ has a finite approximation ratio unless $BPP = NP$.
\end{restatable}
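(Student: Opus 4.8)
\section*{Proof proposal}

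The plan is to prove inapproximability by a reduction from \SAT{} (which is $NP$-hard). I would engineer, in polynomial time from a given formula, a nonnegative disjointly-submodular objective $f$ on $\cN_1 \dotcup \cN_2$ whose value $\max_{Y \subseteq \cN_2}\min_{X \subseteq \cN_1} f(X \dotcup Y)$ equals $1$ when the formula is satisfiable and $0$ otherwise. Given such a gadget, any polynomial-time algorithm with a finite approximation ratio $c$ must return, when the optimum is $0$, a value in $[0,0]$ (hence exactly $0$), and when the optimum is $1$, a value at least $1/c > 0$; thus the sign of its output decides satisfiability with probability $2/3$ in randomized polynomial time, placing $\SAT$ in $BPP$ and forcing $BPP = NP$. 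So the entire proof reduces to constructing the gadget and verifying its two properties: disjoint submodularity, and the $0/1$ behaviour tied to satisfiability.

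First I would choose an encoding that turns the inner minimization into a minimization over several functions of $Y$ that are each submodular. Since negated literals would spoil submodularity of ``clause satisfaction'', I would put two elements $y_i^{\mathrm{T}}, y_i^{\mathrm{F}}$ per variable $v_i$ into $\cN_2$, reading $Y$ as setting $v_i$ true when $y_i^{\mathrm{T}} \in Y$ and false when $y_i^{\mathrm{F}} \in Y$. For each clause $C_j$ let $E_j \subseteq \cN_2$ be the literal-elements satisfying it and set the clause check $g_j(Y) = \min(1, |Y \cap E_j|)$, a monotone coverage function and hence submodular in $Y$. For each variable set the consistency check $h_i(Y) = 1 - \mathbf{1}[\{y_i^{\mathrm{T}}, y_i^{\mathrm{F}}\} \subseteq Y]$, which is submodular because it is $1$ minus the supermodular indicator of containing both elements. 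The key observation is that the minimum of $g_1,\dots,g_m,h_1,\dots,h_n$ equals $1$ exactly when $Y$ is consistent and covers every clause, i.e.\ exactly when $Y$ encodes a genuine assignment satisfying the formula; consequently $\max_Y \min_k(\cdots) = 1$ if and only if the instance is satisfiable.

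The heart of the argument, and the step I expect to be the main obstacle, is a reduction lemma realizing this minimum by the minimization player: I would build a ground set $\cN_1$ and a disjointly-submodular $f$ for which $\min_{X \subseteq \cN_1} f(X \dotcup Y) = \min_k g_k(Y)$ (writing $g_{m+i} := h_i$), so that the player uses $X$ to ``select'' which of the $r = m+n$ checks to evaluate and the inner minimum implements the conjunction ``all checks pass''. The difficulty is that this conjunction is itself not submodular in $Y$, and the two obvious realizations each break submodularity on one side: taking $f(X \dotcup Y) = \min_{k : x_k \in X} g_k(Y)$ is submodular in $X$ but makes every $Y$-slice a minimum of submodular functions, whereas encoding ``a failed check was detected'' as a coverage term over the selected indices makes $f$ supermodular in $X$. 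The plan is to thread between these by forcing each fixed-$X$ slice to be a \emph{nonnegative combination} of the $g_k$ (which keeps it submodular in $Y$) while choosing the combination coefficients and auxiliary elements of $\cN_1$ so that the resulting map is simultaneously submodular in $X$ and has the individual $g_k$ as its lower envelope. Verifying that both one-sided submodularities hold at once — and that the lower envelope is exactly $\min_k g_k$ rather than collapsing at the empty or full set — is the crux of the whole proof.

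Finally I would assemble the pieces: the gadget is clearly polynomial-time computable from the formula, $f$ is nonnegative and disjointly submodular by the lemma, and by the encoding $\max_{Y \subseteq \cN_2}\min_{X \subseteq \cN_1} f(X \dotcup Y) = \max_Y \min_k g_k(Y) \in \{0,1\}$ tracks satisfiability. Combining this with the approximation-to-decision argument of the first paragraph yields that a finite approximation ratio for $\max_{Y \subseteq \cN_2}\min_{X \subseteq \cN_1} f(X \dotcup Y)$ decides \SAT{}, so no such ratio is achievable unless $BPP = NP$.
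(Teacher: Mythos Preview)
Your proposal follows essentially the same route as the paper: reduce from \SAT{} via a two-element-per-variable encoding of $\cN_2$, submodular clause-coverage checks $g_j$, and submodular consistency checks (the paper uses $|\{y_i^{\mathrm T},y_i^{\mathrm F}\}\cap Y|\bmod 2$ rather than your $h_i$, but either works since a partial consistent assignment satisfying all clauses extends to a full one), and then invoke a reduction lemma turning $\min_k g_k(Y)$ into $\min_{X\subseteq\cN_1} f(X\dotcup Y)$ for a disjointly-submodular $f$.

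You correctly identify this last lemma as the crux and leave it as a plan; the paper supplies the construction you are missing. It takes $\cN_1=[r-1]$ (with $r$ the number of checks), sets $c(X)=\max\{i\in\bN_0:[i]\subseteq X\}$, and defines
\[
f(X\dotcup Y)=g_{c(X)+1}(Y)+(|X|-c(X))\cdot M
\]
for a large constant $M$. Each prefix $X=[i-1]$ selects $g_i$ exactly, while any non-prefix $X$ incurs a penalty of at least $M$ and is eliminated from the inner minimum. Note this is not a ``nonnegative combination of the $g_k$'' as you anticipate, but a single-function selector plus a penalty that is constant in $Y$; the $Y$-submodularity is then immediate (a fixed $X$ gives one $g_k$ plus a constant), and the $X$-submodularity comes from a case analysis on how $c(\cdot)$ changes under element insertions. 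Your intuition that both one-sided submodularities must be verified simultaneously is exactly right, and this is where the paper spends its effort.
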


\begin{restatable}{theorem}{thmInapproximabilityCardinality} \label{thm:inapproximability_cardinality}
When $f$ is only guaranteed to be non-negative, $|\cN_2|$-monotone and disjointly submodular, no polynomial time algorithm for calculating $\max_{Y \subseteq \cN_2, |Y| \leq \rho} \min_{X \subseteq \cN_1} f(X \dotcup Y)$, where $\rho$ is a parameter of the problem, has a finite approximation ratio unless $BPP = NP$.
\end{restatable}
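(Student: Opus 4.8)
The plan is to reduce from a standard NP-hard covering problem --- the decision version of \textsc{Dominating Set}: given a graph $G = (V, E)$ and an integer $k$, decide whether $G$ admits a dominating set of size at most $k$. I would set $\cN_2 = V$, take $\rho = k$, and engineer a non-negative, disjointly-submodular, $\cN_2$-monotone function $f$ for which $\min_{X \subseteq \cN_1} f(X \dotcup Y)$ equals $1$ when $Y$ dominates $V$ and $0$ otherwise. Granting such an $f$, the quantity $\max_{Y \subseteq \cN_2, |Y| \le \rho} \min_{X \subseteq \cN_1} f(X \dotcup Y)$ evaluates to $1$ on ``Yes'' instances and to $0$ on ``No'' instances. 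Any algorithm with a finite approximation ratio $c$ must then, by the notion of inapproximability used in the paper, output a value $v$ with $0 \le v \le c \cdot 0 = 0$ on ``No'' instances and $1 \le v \le c$ on ``Yes'' instances; in particular the sign of $v$ decides \textsc{Dominating Set} with probability at least $2/3$, placing this NP-hard problem in $BPP$ and hence forcing $BPP = NP$.

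It remains to build $f$. For each vertex $v$ let $g_v(Y) \triangleq \mathbf{1}[\,N[v] \cap Y \neq \varnothing\,]$ be the (monotone, submodular) indicator that $v$ is dominated by $Y$, where $N[v]$ is the closed neighborhood, so that $Y$ is a dominating set exactly when $\min_{v \in V} g_v(Y) = 1$. The goal is to realize this minimum over constraints through the minimization over $X$: I would let $\cN_1$ carry one ``audit'' element per vertex and design $f$ so that the min player, by selecting the audit element of an undominated vertex, drives the objective to $0$, while no choice of $X$ can push it below $\min_v g_v(Y)$. This is the step the paper refers to as replacing the minimization over $X$ by a minimization over the family $\{g_v\}_{v \in V}$. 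Because coverage is monotone, the resulting $f$ is $\cN_2$-monotone, and because $\rho = k$ the cardinality budget on $Y$ matches that of \textsc{Dominating Set}; the value oracle is clearly polynomial-time computable.

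The delicate part --- and the main obstacle --- is arranging the construction so that $f$ is simultaneously non-negative and submodular when restricted to $\cN_1$. The event ``$X$ contains the audit element of some undominated vertex'' is an OR over the selected elements, so the naive encoding $f(X \dotcup Y) = c(Y)\cdot\mathbf{1}[\,X \text{ hits no undominated vertex}\,]$, with $c(Y)$ the number of dominated vertices, is \emph{super}modular in $X$, exactly the wrong direction; likewise, subtracting a per-violation penalty makes several simultaneous violations force $f$ below $0$ unless one truncates, which again destroys submodularity. The way out that I would pursue is to let the baseline $f(\varnothing \dotcup Y)$ itself grow with the number of satisfied constraints (a monotone submodular quantity, compatible with $\cN_2$-monotonicity) and to balance the marginals so that selecting audit elements cancels against this baseline --- the mechanism is transparent already for two constraints, where setting the singletons to $g_1,g_2$ and both $f(\varnothing \dotcup Y)$ and $f(\{x_1,x_2\} \dotcup Y)$ to $\tfrac12(g_1+g_2)$ yields a modular, non-negative function whose $\cN_1$-minimum is exactly $\min\{g_1,g_2\}$. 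Extending this cancellation to all $|V|$ constraints while preserving both non-negativity and submodularity in each ground set (and verifying that the minimizing $X$ indeed isolates the worst constraint) is where the real work lies; once the gadget is in place, the $\cN_2$-submodularity, the $\cN_2$-monotonicity, and the $0/1$ value gap all follow routinely, and the same construction with the cardinality bound and monotonicity requirement dropped yields Theorem~\ref{thm:inapproximability_unconstrained}.
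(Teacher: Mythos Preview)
Your high-level strategy is exactly right and matches the paper's: reduce from an NP-hard problem by exhibiting monotone submodular ``constraint'' functions $g_1,\dots,g_m$ on $\cN_2$ and then building a non-negative disjointly-submodular, $\cN_2$-monotone $f$ with $\min_{X\subseteq\cN_1} f(X\dotcup Y)=\min_i g_i(Y)$. Using \textsc{Dominating Set} in place of the paper's \textsc{SAT} encoding is harmless (the paper itself remarks that a Hitting-Set argument of Krause et al.\ works just as well), and your coverage indicators $g_v$ are indeed monotone submodular.

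The genuine gap is the construction of $f$, which you explicitly leave open. This is not a detail to be filled in later---it is the entire technical content of the result (the paper isolates it as a standalone lemma). Your two-constraint modular gadget is correct, but it does not extend: if $f$ is modular in $X$, then $\min_X f(X\dotcup Y)=h_0(Y)+\sum_{i:h_i(Y)<0}h_i(Y)$, a sum of negative parts, and there is no way to make this coincide with $\min_i g_i(Y)$ for $m\ge 3$ arbitrary non-negative values. The paper's construction is decidedly non-modular: with $\cN_1=[m-1]$ it sets
\[
f(X\dotcup Y)=g_{c(X)+1}(Y)+(|X|-c(X))\cdot M,
\]
where $c(X)$ is the length of the longest initial segment $[i]\subseteq X$ and $M$ is large. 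The minimizing $X$'s are exactly the prefixes $[0],[1],\dots,[m-1]$, which select $g_1,\dots,g_m$ respectively; any non-prefix $X$ pays at least $M$ and is irrelevant. Verifying that this $f$ is submodular in $X$ requires a somewhat tedious case analysis on how adding an element affects $c(\cdot)$---precisely the obstacle you flagged but did not resolve. Without this gadget (or an equivalent one), the proposal is a correct outline but not a proof.
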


\subsection{Proof of Theorem~\headerref{thm:min_submodular}} \label{ssc:min_submodular}

In this section we prove Theorem~\ref{thm:min_submodular}, which we repeat here for convenience.
\thmMinSubmodular*

To prove Theorem~\ref{thm:min_submodular}, let us define, for every set $Y \subseteq \cN_2$, $g(Y) = \min_{X \subseteq \cN_1} f(X \dotcup Y)$. It is well-known that $g$ is a submodular function, and we prove it in the next lemma for completeness (along with additional properties of $g$).
\begin{lemma} \label{lem:g_properties}
The function $g\colon 2^{\cN_2} \to \nnR$ is a non-negative submodular function, and there exists a polynomial time implementation of the value oracle of $g$. Furthermore, if $f$ is $\cN_2$-monotone, then $g$ is monotone (in addition to being non-negative and submodular).
\end{lemma}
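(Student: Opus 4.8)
The plan is to verify the four claimed properties of $g$ in turn, with submodularity being the crux. Non-negativity is immediate: since $f$ is non-negative, $g(Y) = \min_{X \subseteq \cN_1} f(X \dotcup Y) \geq 0$ for every $Y \subseteq \cN_2$. For the value oracle, I would fix $Y$ and observe that the map $X \mapsto f(X \dotcup Y)$ is a submodular function over the ground set $\cN_1$ --- this is exactly the statement that $f$ is submodular when restricted to $\cN_1$, which is implied by joint-submodularity. Hence evaluating $g(Y)$ amounts to a single unconstrained submodular minimization over $\cN_1$, which can be carried out in polynomial time using any of the known combinatorial or ellipsoid-based algorithms for that problem~\cite{schrijver2000combinatorial,iwata2001combinatorial,lee2015faster}.

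For submodularity, I would use the equivalent ``union--intersection'' form of the definition, namely that it suffices to show
\[
	g(A) + g(B) \geq g(A \cup B) + g(A \cap B)
	\qquad
	\forall\; A, B \subseteq \cN_2
	\enspace.
\]
Let $X_A$ and $X_B$ be minimizers attaining $g(A) = f(X_A \dotcup A)$ and $g(B) = f(X_B \dotcup B)$, respectively. The key step is to apply the joint-submodularity of $f$ (again in union--intersection form) to the two sets $X_A \dotcup A$ and $X_B \dotcup B$. Because $\cN_1$ and $\cN_2$ are disjoint, the union and intersection of these sets split cleanly across the two ground sets, yielding
\[
	f(X_A \dotcup A) + f(X_B \dotcup B) \geq f\big((X_A \cup X_B) \dotcup (A \cup B)\big) + f\big((X_A \cap X_B) \dotcup (A \cap B)\big)
	\enspace.
\]
I would then relax the right-hand side by noting that $X_A \cup X_B$ and $X_A \cap X_B$ are particular (not necessarily optimal) choices for the inner minimizations defining $g(A \cup B)$ and $g(A \cap B)$, so the two terms are lower bounded by $g(A \cup B)$ and $g(A \cap B)$. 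Combining this with $g(A) + g(B) = f(X_A \dotcup A) + f(X_B \dotcup B)$ gives the desired inequality.

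For the monotonicity claim, assume $f$ is $\cN_2$-monotone and fix $S \subseteq T \subseteq \cN_2$. Let $X_T$ attain $g(T) = f(X_T \dotcup T)$. Using $X_T$ as a feasible (suboptimal) choice for the minimization defining $g(S)$ gives $g(S) \leq f(X_T \dotcup S)$, and since $T \setminus S \subseteq \cN_2$, repeated application of $\cN_2$-monotonicity (each added element of $\cN_2$ has non-negative marginal contribution) shows $f(X_T \dotcup S) \leq f(X_T \dotcup T) = g(T)$. Chaining these inequalities yields $g(S) \leq g(T)$.

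The main obstacle is the submodularity step. One must take care that $X_A$ and $X_B$ are chosen to be optimal for $A$ and $B$, so that the left-hand side equals $g(A)+g(B)$ \emph{exactly}, while it is precisely the decomposition of the union and intersection across the two disjoint ground sets that lets us relax the right-hand side back into values of $g$. It is worth emphasizing that this step genuinely uses joint-submodularity (submodularity across both ground sets simultaneously), whereas the value-oracle step relies only on submodularity restricted to $\cN_1$.
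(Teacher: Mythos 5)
Your proposal is correct and follows essentially the same route as the paper's proof: non-negativity and the value oracle via unconstrained submodular minimization of $X \mapsto f(X \dotcup Y)$, submodularity via the union--intersection form applied to $X_A \dotcup A$ and $X_B \dotcup B$ with the minimizers $X_A, X_B$ and then relaxing the right-hand side to $g(A \cup B) + g(A \cap B)$, and monotonicity by reusing the minimizer for the larger set. Your closing remark that the oracle step needs only submodularity restricted to $\cN_1$ while the submodularity of $g$ genuinely needs joint-submodularity is accurate and consistent with how the paper later uses these facts.
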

\begin{proof}
We begin the proof by considering Algorithm~\ref{alg:value_oracle}. One can observe that this algorithm describes a way to implement a value oracle for $g$ because, by the definitions of $X'$ and $h_Y$,
\[
	f(X' \cup Y)
	=
	h_Y(X')
	=
	\min_{X \subseteq \cN_1} h_Y(X)
	=
	\min_{X \subseteq \cN_1} f(X \cup Y)
	=
	g(Y)
	\enspace.
\]
Furthermore, Algorithm~\ref{alg:value_oracle} can be implemented to run in polynomial time using any polynomial time algorithm for unconstrained submodular minimization because $h_Y$ is a submodular function.
\begin{algorithm}
\caption{Value Oracle Implementation $(Y)$} \label{alg:value_oracle}
	Define $h_Y(X) \triangleq f(X \dotcup Y)$ for every set $X \subseteq \cN_1$.\\
	Find a set $X' \subseteq \cN_1$ minimizing $h_Y(X')$.\\
	\Return $f(X' \dotcup Y)$.
\end{algorithm}

The non-negativity of $g$ follows from the definition of $g$ and the non-negativity of $f$. Proving that $g$ is also submodular is more involved. Let $Y_1$ and $Y_2$ be two arbitrary subsets of $\cN_2$, and let us choose a set $X_i \in \arg \min_{X \subseteq \cN_1} f(X \dotcup Y_i)$ for every $i \in \{1, 2\}$. Then,
\begin{align*}
	g(Y_1) + g(Y_2)
	={} &
	f(X_1 \dotcup Y_1) + f(X_2 \dotcup Y_2)
	\geq
	f((X_1 \cap X_2) \dotcup (Y_1 \cap Y_2)) + f((X_1 \cup X_2) \dotcup (Y_1 \cup Y_2))\\
	\geq{} &
	\min_{X \subseteq \cN_1} f(X \dotcup (Y_1 \cap Y_2)) + \min_{X \subseteq \cN_1} f(X \dotcup (Y_1 \cup Y_2))
	=
	g(Y_1 \cap Y_2) + g(Y_1 \cup Y_2)
	\enspace,
\end{align*}
where the first inequality holds by the submodularity of $f$ since $X_1 \cup X_2 \subseteq \cN_1$ is disjoint from $Y_1 \cup Y_2 \subseteq \cN_2$. This completes the proof that $g$ is submodular.

It remains to prove that $g$ is monotone whenever $f$ is $\cN_2$-monotone. Therefore, in the rest of the proof we assume that $f$ indeed has this property. Then, if the sets $Y_1$ and $Y_2$ obey the inclusion $Y_1 \subseteq Y_2$, then they also obey
\[
	g(Y_2)
	=
	f(X_2 \cup Y_2)
	\geq
	f(X_2 \cup Y_1)
	\geq
	f(X_1 \cup Y_1)
	=
	g(Y_1)
	\enspace,
\]
where the first inequality follows from the $\cN_2$-monotonicity of $f$, and the second inequality follows from the definition of $X_1$.
\end{proof}

We are now ready to prove Theorem~\ref{thm:min_submodular}.
\begin{proof}[Proof of Theorem~\ref{thm:min_submodular}]
Note that Lemma~\ref{lem:g_properties} implies that $g$ has all the properties necessary for $ALG$ to guarantee $\alpha$-approximation for the problem of $\min_{Y \in \cF_2} g(Y)$. Therefore, we can use $ALG$ to implement in polynomial time the procedure described by Algorithm~\ref{alg:min_submodular} (since $ALG$ runs in polynomial time given a polynomial time value oracle implementation for the objective function).
\begin{algorithm}
	\caption{Approximate Using $ALG$} \label{alg:min_submodular}
	Use $ALG$ to get a set $Y' \in \cF_2$ such that $\alpha^{-1} \cdot \max_{Y \in \cF_2} g(Y) \leq \bE[g(Y')] \leq \max_{Y \in \cF_2} g(Y)$.\\
	\Return the set $Y'$ and the value $g(Y')$.
\end{algorithm}

Since the definition of $g$ implies $\max_{Y \in \cF_2} g(Y) = \max_{Y \in \cF_2} \min_{X \subseteq \cN_2} f(X \dotcup Y) = \tau$, the value $g(Y') = \min_{X \subseteq \cN_1} f(X \dotcup Y') \leq \max_{Y \in \cF_2} \min_{X \subseteq \cN_2} f(X \dotcup Y)$ produced by Algorithm~\ref{alg:min_submodular} is at most $\tau$ and in expectation at least $\tau/\alpha$. Therefore, Algorithm~\ref{alg:min_submodular} has all the properties guaranteed by Theorem~\ref{thm:min_submodular}. 
\end{proof}

\subsection{Proofs of Theorems~\headerref{thm:inapproximability_unconstrained} and~\headerref{thm:inapproximability_cardinality}} \label{ssc:inapproximability}

In this section we prove the inapproximability results stated in Theorems~\ref{thm:inapproximability_unconstrained} and~\ref{thm:inapproximability_cardinality}. The proofs of both theorems are based on the reduction described by the following proposition. Below, we use $\bN_0$ and $\bN$ to denote the set of natural numbers with and without $0$, respectively. Additionally, recall that for a non-negative integer $i$, $[i] = \{1, 2, \dotsc, i\}$. In particular, this implies that $[0] = \varnothing$, which is a property we employ later in the section.
\begin{proposition} \label{prop:reduction}
Fix any family $F_2$ of pairs of ground set $\cN_2$ and constraint $\cF_2 \subseteq 2^{\cN_2}$. Additionally, let $\alpha \colon \bN_0 \times F_2 \to [1, \infty)$ be an arbitrary function (intuitively, for every pair $(\cN_2, \cF_2) \in F_2$, $\alpha(m, \cN_2, \cF_2)$ is an approximation ratio that we assign to this pair when the ground set $\cN_1$ has a size of $m$). Assume that there exists a (possibly randomized) polynomial time algorithm $ALG$ which, given a ground set $\cN_1$, a pair $(\cN_2, \cF_2) \in F_2$, and a non-negative disjointly submodular function $f \colon 2^{\cN_1 \dotcup \cN_2} \to \nnR$, outputs a value $v$ such that, with probability at least 2/3,
\[
	\tfrac{1} {\alpha(|\cN_1|, (\cN_2, \cF_2))} \cdot \max_{Y \in \cF_2} \min_{X \subseteq \cN_1} f(X \dotcup Y)
	\leq
	v
	\leq
	\max_{Y \in \cF_2} \min_{X \subseteq \cN_1} f(X \dotcup Y)
	\enspace.
\]
Then, there also exists a polynomial time algorithm that given a pair $(\cN_2, \cF_2) \in F_2$ and non-negative submodular functions $g_1, g_2, \dotsc, g_m \colon 2^{\cN_2} \to \nnR$ outputs a value $v$ such that, with probability at least 2/3,
\[
	\tfrac{1}{\alpha(m - 1, (\cN_2, \cF_2))} \cdot \max_{Y \in \cF_2} \min_{1 \leq i \leq m} g_i(Y)
	\leq
	v
	\leq
	\max_{Y \in \cF_2} \min_{1 \leq i \leq m} g_i(Y)
	\enspace.
\]
Furthermore, if the functions $g_1, g_2, \dotsc, g_m \colon 2^{\cN_2} \to \nnR$ are all guaranteed to be monotone (in addition to being non-negative and submodular), then it suffices for $ALG$ to have the above guarantee only when $f$ is $\cN_2$-monotone (in addition to being non-negative and disjointly submodular).
\end{proposition}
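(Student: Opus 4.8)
The plan is to design a single non-negative disjointly-submodular function $f$ on $\cN_1 \dotcup \cN_2$, where $\cN_1$ consists of exactly $m-1$ fresh elements $e_1, \dots, e_{m-1}$, such that
\[
    \min_{X \subseteq \cN_1} f(X \dotcup Y) = \min_{1 \leq i \leq m} g_i(Y)
    \qquad \text{for every } Y \subseteq \cN_2 .
\]
Given such an $f$, the reduction is immediate: feed $(\cN_1, (\cN_2, \cF_2), f)$ to $ALG$, which returns a value $v$ that with probability at least $2/3$ is sandwiched between $\tfrac{1}{\alpha(m-1, (\cN_2,\cF_2))}\max_{Y \in \cF_2}\min_{X\subseteq\cN_1} f(X \dotcup Y)$ and $\max_{Y \in \cF_2}\min_{X\subseteq\cN_1} f(X \dotcup Y)$. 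Since the displayed identity gives $\max_{Y \in \cF_2}\min_{X\subseteq\cN_1} f(X \dotcup Y) = \max_{Y \in \cF_2}\min_i g_i(Y)$, this $v$ is exactly the output required, and $|\cN_1| = m-1$ explains the shift in the approximation parameter. All the real work is therefore in constructing $f$.

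I would construct $f$ as a $Y$-weighted combination of submodular ``selector'' coefficients on $\cN_1$, namely $f(X \dotcup Y) = \sum_{i=1}^{m} \lambda_i(X)\, g_i(Y)$, where each $\lambda_i \colon 2^{\cN_1} \to \{0,1\}$ is chosen so that (a) for every $i$ there is a set $R_i$ with $\lambda_i(R_i) = 1$ and $\lambda_j(R_i) = 0$ for $j \neq i$, and (b) $\sum_{i=1}^m \lambda_i(X) \geq 1$ for all $X$. Property (a) forces $f(R_i \dotcup Y) = g_i(Y)$, so $\min_X f(X \dotcup Y) \leq \min_i g_i(Y)$; property (b) together with $g_i \geq 0$ gives $f(X \dotcup Y) \geq (\min_j g_j(Y)) \sum_i \lambda_i(X) \geq \min_j g_j(Y)$, hence the reverse inequality. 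The concrete choice I would use places the $R_i$ along a maximal chain: set $\lambda_1(X) = \mathbf{1}[e_1 \in X]$, $\lambda_m(X) = \mathbf{1}[e_{m-1} \notin X]$, and $\lambda_i(X) = \mathbf{1}[e_i \in X] - \mathbf{1}[\{e_{i-1}, e_i\} \subseteq X]$ for $2 \leq i \leq m-1$. One checks that $R_i = \{e_i, e_{i+1}, \dots, e_{m-1}\}$ realizes (a), and that a short telescoping computation yields $\sum_i \lambda_i(X) = 1 + \sum_{i=1}^{m-2}\mathbf{1}[e_i \in X]\,\mathbf{1}[e_{i+1}\notin X] \geq 1$, which is (b).

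The remaining properties of $f$ are then routine. For fixed $X$, the function $Y \mapsto f(X\dotcup Y)$ is a non-negative combination of the submodular functions $g_i$, hence submodular, and it is monotone whenever all $g_i$ are monotone --- which establishes the ``furthermore'' claim, since then $f$ is $\cN_2$-monotone. For fixed $Y$, the function $X \mapsto f(X \dotcup Y)$ is a non-negative combination (with weights $g_i(Y) \geq 0$) of the selectors, so it is submodular provided each $\lambda_i$ is; and each $\lambda_i$ is submodular because it is the sum of a modular function and the negation of the supermodular indicator $\mathbf{1}[\{e_{i-1}, e_i\} \subseteq X]$. Non-negativity of $f$ and a polynomial-time value oracle (evaluate the $m$ selectors and make $m$ calls to the oracles of the $g_i$) are clear.

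The main obstacle is precisely this property-(a)/(b) design: making $f$ submodular \emph{in $X$} while letting $\min_X$ sweep through all $m$ functions. The tempting shortcuts fail. A coefficient depending only on $|X|$ would require the profile $k \mapsto g_{k+1}(Y)$ to be concave (false in general, and a concave profile attains its minimum only at an endpoint, giving access to two functions rather than $m$), while assigning $g_1$ to $\varnothing$ (or $g_m$ to $\cN_1$) forces a selector equal to $1$ on $\varnothing$ and $0$ on the singletons, which is supermodular and incompatible with non-negativity. The reversed chain assignment above is what resolves this, and it is also what pins the ground-set size to $m-1$, since a maximal chain in $2^{\cN_1}$ contains exactly $m$ sets, one per function.
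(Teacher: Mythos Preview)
Your construction is correct and genuinely different from the paper's. Both approaches introduce $\cN_1$ of size $m-1$ and arrange things so that the minimizing sets $X$ lie along a maximal chain, with the $i$-th set in the chain picking out $g_i$. The difference is in how sets off the chain are handled.

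The paper defines $c(X)$ as the length of the longest prefix $\{1,\dots,c(X)\}$ contained in $X$ and sets
\[
    f(X \dotcup Y) = g_{c(X)+1}(Y) + (|X|-c(X))\cdot M,
\]
where $M$ is a large constant obtained by first running an unconstrained submodular maximization routine on each $g_i$. Any $X$ not of the form $[i-1]$ incurs a penalty of at least $M$, which dominates every $g$-value and forces the minimizer onto the chain. The price is that verifying submodularity in $X$ becomes a four-case analysis tracking how $c(X)$ and $c(X+i)$ move when a further element $j$ is added.

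Your construction instead writes $f(X \dotcup Y)=\sum_i \lambda_i(X)\,g_i(Y)$ with explicit $\{0,1\}$-valued submodular selectors that sum to at least $1$ everywhere and hit each coordinate vector along the reversed chain $R_i=\{e_i,\dots,e_{m-1}\}$. This buys you two things: no auxiliary constant $M$ (hence no preprocessing maximization call), and submodularity in $X$ is immediate because each $\lambda_i$ is modular minus a two-element conjunction. The paper's route has the minor advantage of needing only one $g_i$-oracle call per evaluation of $f$ rather than $m$, but that is irrelevant for a polynomial-time reduction. Overall your argument is the more elementary one.
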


Before proving Proposition~\ref{prop:reduction}, let us show that it indeed implies Theorems~\ref{thm:inapproximability_unconstrained} and~\ref{thm:inapproximability_cardinality}.
\thmInapproximabilityUnconstrained*
\begin{proof}
Fix the family $F_2 = \{([k], 2^{[k]}) \mid k \in \bN\}$. Assume that there exists a polynomial time algorithm for calculating $\max_{Y \subseteq \cN_2} \min_{X \subseteq \cN_1} f(X \dotcup Y)$ that has a polynomial approximation ratio. By plugging this algorithm and the family $F_2$ into Proposition~\ref{prop:reduction}, we get that there exists a polynomial time algorithm $ALG$ and a polynomial function $\alpha \colon \bN \times \bN \to [1, \infty)$ such that, given integer $k \in \bN$ and $m$ non-negative monotone submodular functions $g_1, g_2, \dotsc, g_m$, the algorithm $ALG$ produces a value $v$ such that, with probability at least 2/3,
\[
	\tfrac{1}{\alpha(m, k)} \cdot \max_{Y \subseteq [k]} \min_{1 \leq i \leq m} g_i(Y)
	\leq
	v
	\leq
	\max_{Y \subseteq [k]} \min_{1 \leq i \leq m} g_i(Y)
	\enspace.
\]
In particular, the algorithm $ALG$ answers correctly with probability at least $2/3$ whether the expression $\max_{Y \subseteq [k]} \min_{1 \leq i \leq m} g_i(Y)$ is equal to zero. Therefore, to prove the theorem it suffices to show that that exists some NP-hard problem such that every instance $I$ of this problem can be encoded in polynomial time as an expression of the form $\max_{Y \subseteq [k]} \min_{1 \leq i \leq m} g_i(Y)$ that takes the value $0$ if and only if the correct answer for the instance $I$ is ``No''.

In the rest of this proof, we show that this is indeed the case for the NP-hard problem {\SAT}. Every instance of {\SAT} consists a CNF formula $\phi$ over $n$ variables $x_1, x_2, \dotsc, x_n$ that has $\ell$ clauses. To encode this instance, we need to construct $n + \ell$ functions over the ground set $[2n]$. Intuitively, for every integer $1 \leq i \leq n$ the elements $2i - 1$ and $2i$ of the ground set correspond to the variable $x_i$ of $\phi$. The element $2i - 1$ corresponds to an assignment of $1$ to this variable, and the element $2i$ corresponds to an assignment of $0$. For every integer $1 \leq i \leq n$, the objective of the function $g_i$ is to make sure that exactly one value is assigned to $x_i$. Formally, this is done by defining $g_i(Y) \triangleq |\{2i - 1, 2i\} \cap Y| \bmod 2$ for every $Y \subseteq [2n]$. One can note that $g_i(Y)$ takes the value $1$ only when exactly one of the elements $2i - 1$ or $2i$ belongs to $Y$. Furthermore, one can verify that $g_i$ is non-negative and submodular.

Next, we need to define the functions $g_{n + 1}, g_{n + 2}, \dotsc, g_{n + \ell}$. To define these functions, let us denote by $c_1, c_2, \dotsc, c_\ell$ the clauses of $\phi$. Additionally we denote by $c_j(x_i = v)$ an indicator that gets the value $1$ if assigning the value $v$ to $x_i$ guarantees that the clause $c_j$ is satisfied. In other words, $c_j(x_i = v)$ equals $1$ only if $v = 1$ and $c_j$ includes the positive literal $x_i$, or $v = 0$ and $c_j$ includes the negative literal $\bar{x}_j$. For every integer $1 \leq j \leq \ell$, the function $g_{n + j}(Y)$ corresponds to the clause $w_j$ and takes the value $1$ only when this clause is satisfied by some element of $Y$. Formally,
\[
	g_{n + j}(Y)
	=
	\max_{i \in Y} c_j(x_{\lceil i/2 \rceil} = (i \bmod 2))
\]
(notice that $x_{\lceil i/2 \rceil}$ is the index of the variable corresponding to element $i$, and $i \bmod 2$ is the value assigned to this variable by the element $i$). One can verify that $g_{n + j}(Y)$ is a non-negative submodular (and even monotone) function.

Let us now explain why $\max_{Y \subseteq [2n]} \min_{1 \leq i \leq m} g_i(Y)$ takes the value $0$ if and only if $\phi$ is not satisfiable. First, if there exists a satisfying assignment $a$ for $\phi$, then one can construct a set $Y \subseteq [2n]$ that encodes $a$. Specifically, for every integer $1 \leq i \leq n$, $Y$ should include $2i - 1$ (and not $2i$) if $a$ assigns the value $1$ to $x_i$, and otherwise $Y$ should include $2i$ (and not $2i - 1$). Such a choice of $Y$ will make all the above functions $g_1, g_2, \dotsc, g_{n + \ell}$ take the value $1$, and therefore, $\max_{Y \subseteq [2n]} \min_{1 \leq i \leq m} g_i(Y) = 1$ in this case. Consider now the case in which $\phi$ does not have a satisfying assignment. Then, for every set $Y \subseteq [2n]$ we must have one of the following. The first option is that $Y$ includes either both $2i - 1$ and $2i$, or neither of these elements, for some integer $i$, which makes $g_i$ evaluate to $0$ on $Y$. The other option is that $Y$ corresponds to some legal assignment $a$ of values to $x_1, x_2, \dotsc, x_n$ that violates some clause $c_j$, and thus, $g_{n + j}$ evaluates to $0$ on $Y$. In both cases $\min_{1 \leq i \leq m} g_i(Y) = 0$.
\end{proof}

\thmInapproximabilityCardinality*
\begin{proof}
The proof of this theorem is very similar to the proof of Theorem~\ref{thm:inapproximability_unconstrained}, and therefore, we only describe here the differences between the two proofs. First, the family $F_2$ should be chosen this time as $\cF_2 = \{([2k], \{Y \subseteq [2k] \mid |Y| \leq k\} \mid k \in \bN\}$. This modification implies that we now need to encode $\phi$ as an instance of
\[
	\max_{\substack{Y \subseteq [2n]\\|Y| \leq n}} \min_{1 \leq i \leq m} g_i(Y)
	\enspace,
\]
where the functions $g_i(Y)$ are all non-negative monotone submodular functions over the ground set $[2n]$. We do this using $n + \ell$ functions like in the proof of Theorem~\ref{thm:inapproximability_unconstrained}. Moreover, the functions $g_{n + 1}, g_{n + 2}, g_{n + \ell}$ are defined exactly like in the proof of Theorem~\ref{thm:inapproximability_unconstrained}.

For every integer $1 \leq i \leq n$, the function $g_i$ still corresponds to the variable $x_i$, but now the role of $g_i$ is only to guarantee that $x_i$ gets at least a single value. This is done by setting $g_i(Y) = \min\{|Y \cap \{2i - 1, 2i\}|, 1\}$, which means that $g_i$ takes the value $1$ only when at least one of the elements $2i - 1$ or $2i$ belongs to $Y$. Note that $g_i$ is indeed non-negative, monotone and submodular, as necessary. The main observation that we need to make is that if $Y$ is a set of size at most $n$ for which all the functions $g_1, g_2, \dotsc, g_n$ return $1$, then $Y$ must include at least one element of the pair $\{2i - 1, 2i\}$ for every integer $1 \leq i \leq n$. Since these are $n$ disjoint pairs, and $Y$ contains at most $n$ elements, we get that $Y$ contains \emph{exactly} one element of each one of the pairs $\{2i - 1, 2i\}$. In other words, $\min_{1 \leq i \leq n} g_i(Y) = 1$ if and only if $Y$ corresponds to assigning exactly one value to every variable $x_i$, which is exactly the property that the functions $g_1, g_2, \dotsc, g_n$ need to have to allow the rest of the proof of Theorem~\ref{thm:inapproximability_unconstrained} to go through.
\end{proof}

\noindent \textbf{Remark:}
The above proof of Theorem~\ref{thm:inapproximability_cardinality} plugs into Proposition~\ref{prop:reduction} the observation that an expression of the form $\max_{Y \subseteq \cN_2, |Y| \leq k} \min_{1 \leq i \leq m} g_i(Y)$ can capture an NP-hard problem. The last observation was already shown by Theorem~3 of Krause et al.~\cite{krause2008robust} (for the Hitting-Set problem). Thus, Theorem~\ref{thm:inapproximability_cardinality} can also be obtained as a corollary of Proposition~\ref{prop:reduction} and Theorem~3 of~\cite{krause2008robust}. However, for completeness and consistency, we chose to provide a different proof of Theorem~\ref{thm:inapproximability_cardinality} that closely follows the proof of Theorem~\ref{thm:inapproximability_unconstrained}.

We now get to the proof of Proposition~\ref{prop:reduction}. One can observe that to prove this proposition it suffices to show the following lemma (the algorithm whose existence is guaranteed by Proposition~\ref{prop:reduction} can be obtained by simply applying $ALG$ to the ground set $\cN_1$ and function $f$ defined by Lemma~\ref{lem:f_construction}).
\begin{lemma} \label{lem:f_construction}
Given non-negative submodular functions $g_1, g_2, \dotsc, g_m \colon 2^{\cN_2} \to \nnR$, there exists a ground set $\cN_1$ and a non-negative disjointly submodular function $f\colon 2^{\cN_1 \dotcup \cN_2} \to \nnR$ such that
\begin{itemize}
	\item the size of the ground set $\cN_1$ is $m - 1$.
	\item given sets $X \subseteq \cN_1$ and $Y \subseteq \cN_2$, it is possible to evaluate $f(X \dotcup Y)$ in polynomial time.
	\item for every set $Y \subseteq \cN_2$, $\min_{X \subseteq \cN_1} f(X \dotcup Y) = \min_{1 \leq i \leq m} g_i(Y)$.
	\item when the functions $g_1, g_2, \dotsc, g_m$ are all monotone (in addition to being non-negative and submodular), then the function $f$ is guaranteed to be $\cN_2$-monotone (in addition to being non-negative and disjointly submodular).
\end{itemize}
\end{lemma}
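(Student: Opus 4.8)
The plan is to construct $f$ by induction on $m$, spending one fresh ground-set element at each step to decide between ``evaluate $g_1$'' and ``recurse on $\min(g_2,\dots,g_m)$''. In the base case $m=1$ we take $\cN_1=\varnothing$ and $f=g_1$. For the inductive step, let $f'\colon 2^{\cN_1'\dotcup\cN_2}\to\nnR$ be the function supplied by the hypothesis for $g_2,\dots,g_m$, where $|\cN_1'|=m-2$, introduce a fresh element $e_1$, and set $\cN_1=\{e_1\}\dotcup\cN_1'$, which has the required size $m-1$. Writing $X'=X\cap\cN_1'$, I would define
\[
	f(X\dotcup Y)=\begin{cases}
		g_1(Y)+K\cdot|X'| & e_1\notin X,\\[2pt]
		f'(X'\dotcup Y) & e_1\in X,
	\end{cases}
\]
where $K$ is a (large) constant chosen so that $K\ge\max_{X'\subseteq\cN_1',\,Y\subseteq\cN_2}f'(X'\dotcup Y)$; since $f'\ge 0$, this guarantees $K$ is at least every marginal of $f'$. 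The exact maximum is NP-hard to compute, but a poly-time-computable upper bound suffices, and although the values grow quickly through the recursion they remain representable in polynomially many bits, so $f$ admits a polynomial-time value oracle (descend the recursion, making one $g_j$-oracle call and one penalty computation per level).

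The easy verifications come first. For the minimum, note the penalty $K|X'|$ is minimized (and equals $0$) at $X'=\varnothing$, so $\min_{X\subseteq\cN_1}f(X\dotcup Y)=\min\bigl(g_1(Y),\,\min_{X'\subseteq\cN_1'}f'(X'\dotcup Y)\bigr)=\min_{1\le i\le m}g_i(Y)$ by the inductive hypothesis. Non-negativity is immediate from $g_1,f',K\ge 0$. For submodularity in $\cN_2$ (with $X$ fixed), each branch is either $g_1(Y)$ plus a $Y$-constant or $f'(X'\dotcup Y)$, both submodular in $Y$; the same case split gives $\cN_2$-monotonicity when all $g_i$ are monotone. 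These steps are routine given Lemma~\ref{lem:g_properties}-style reasoning and the inductive hypothesis.

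The crux, and the step I expect to be the main obstacle, is submodularity in $\cN_1$ (with $Y$ fixed), which I would verify through the diminishing-returns condition $f(u\mid S)\ge f(u\mid T)$ for all $S\subseteq T\subseteq\cN_1$ and $u\notin T$, splitting into cases according to whether $u=e_1$ and whether $e_1$ lies in $S$ and $T$. The cases where $S,T$ sit on the same side of the branch reduce to the (modular) penalty or to the submodularity of $f'$. The only genuinely dangerous situations straddle the boundary: when $u\in\cN_1'$ with $e_1\notin S$ but $e_1\in T$, one has $f(u\mid S)=K$ while $f(u\mid T)$ is a marginal of $f'$, and when $u=e_1$ the marginal $f'(S'\dotcup Y)-g_1(Y)-K|S'|$ must be non-increasing in $S'$. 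Both are exactly what the penalty term is designed to fix: since $K$ dominates every marginal of $f'$, the term $K$ on the $e_1\notin X$ side beats any (possibly positive) $f'$-marginal on the $e_1\in X$ side, and the $-K|S'|$ drift forces monotonicity of the $e_1$-marginal. This is precisely the obstacle that sinks the naive choice (penalty $K=0$), where adding a $\cN_1'$-element gives marginal $0$ before crossing into the $e_1\in X$ branch and a positive $f'$-marginal after it, breaking diminishing returns; the additive modular inflation $K|X'|$ repairs this without disturbing the minimum (attained at $X'=\varnothing$), and a quick check (e.g.\ the $m=3$ instance, where the corner $\{e_2\}$ carries the inflated value $g_1(Y)+K$) confirms the mechanism.
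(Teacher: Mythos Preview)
Your proof is correct. Unrolled, your construction essentially coincides with the paper's: identifying $\cN_1=\{e_1,\dots,e_{m-1}\}$ with $[m-1]$ and letting $j^*$ be the least index with $e_{j^*}\notin X$ (or $m$ if none), your recursion gives $f(X\dotcup Y)=g_{j^*}(Y)+K_{j^*}\cdot|X\cap\{e_{j^*+1},\dots,e_{m-1}\}|$, while the paper defines directly $f(X\dotcup Y)=g_{c(X)+1}(Y)+(|X|-c(X))\cdot M$ with $c(X)=\max\{i:[i]\subseteq X\}$; since $j^*=c(X)+1$ and the two penalty counts agree, the only difference is your level-dependent $K_{j^*}$ versus the paper's single $M$. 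Where the approaches genuinely diverge is the submodularity-in-$\cN_1$ argument: the paper verifies diminishing returns via a direct four-case analysis on how adding elements shifts the prefix $c(\cdot)$ (Lemma~\ref{lem:f_submodular}), whereas your inductive framing offloads the interior cases ($e_1\in S\subseteq T$) to the hypothesis and leaves only the $e_1$-boundary to check by hand---arguably a cleaner decomposition. The price is that your constants grow through the recursion (to roughly $\hat B\cdot m^{O(m)}$, still polynomially many bits); one point worth making explicit is that you cannot upper-bound $\max f'$ by running submodular maximization on $f'$ itself (it is only \emph{disjointly} submodular), but the recursive estimate $K_j\ge\hat B+(m-j-2)\,K_{j+1}$, with $\hat B$ a poly-time upper bound on $\max_{i,Y}g_i(Y)$ obtained (as in the paper) from a $2$-approximation for unconstrained submodular maximization, does the job.
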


The rest of this section is devoted to proving Lemma~\ref{lem:f_construction}. Let us start by describing how the ground set $\cN_1$ and the function $f$ are constructed. We assume without loss of generality that $\cN_2 \cap [m - 1] = \varnothing$, which allows us to choose $\cN_1 = [m - 1]$. Given a set $X \subseteq [m - 1]$, let us define $c(X) \triangleq \max\{i \in \bN_0 \mid [i] \subseteq X\}$ (in other words, $c(X)$ is the largest integer such that all the numbers $1$ to $i$ appear in $X$). Additionally, we choose $M$ to be a number obeying $g_i(Y) \leq M/2$ for every $i \in [m]$ and $Y \subseteq \cN_2$ (such a number can be obtained in polynomial time by running the $2$-approximation algorithm of Buchbinder \& Feldman~\cite{buchbinder2018deterministic} for unconstrained submodular maximization on the functions $g_1, g_2, \dotsc, g_m$, and then setting $M$ to be four times the largest number returned). Using this notation, we can now define, for every two sets $X \subseteq \cN_1$ and $Y \subseteq \cN_2$,
\[
	f(X \dotcup Y)
	\triangleq
	g_{c(X) + 1}(Y) + (|X| - c(X)) \cdot M
	\enspace.
\]

The following observation states some properties of $f$ that immediately follow from the definition of $f$ and the fact that $c(X)$ is at most $|X|$ by definition.
\begin{observation} \label{obs:f_properties}
The function $f$ is non-negative and can be evaluated in polynomial time. Furthermore, $f$ is $\cN_2$-monotone when the functions $g_1, g_2, \dotsc, g_m$ are monotone because $g_{c(X) + 1}(Y) + (|X| - c(X)) \cdot M$ is a monotone function of $Y$ for any fixed set $X \subseteq \cN_1$.
\end{observation}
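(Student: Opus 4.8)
The plan is to verify separately each of the three asserted properties, all of which follow by unwinding the definition $f(X \dotcup Y) = g_{c(X)+1}(Y) + (|X| - c(X)) \cdot M$. For non-negativity, I would observe that the first summand $g_{c(X)+1}(Y)$ is non-negative because every $g_i$ is, and that the second summand is a non-negative multiple of $M \geq 0$. The coefficient is non-negative because $[c(X)] \subseteq X$ forces $X$ to contain at least the $c(X)$ distinct elements $1, \dots, c(X)$, so $c(X) \leq |X|$ and hence $|X| - c(X) \geq 0$. A sum of two non-negative quantities yields $f(X \dotcup Y) \geq 0$.

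For polynomial-time evaluability, I would note that every ingredient of the formula is computable efficiently given value-oracle access to $g_1, \dots, g_m$. The quantity $|X|$ is read off directly, and $c(X)$ is obtained by scanning $1, 2, 3, \dots$ and returning the last integer $i$ for which all of $1, \dots, i$ lie in $X$ (at most $m - 1$ membership tests). The term $g_{c(X)+1}(Y)$ is then a single oracle call, and $M$ has already been fixed during the construction; the remaining arithmetic is constant time. Hence $f(X \dotcup Y)$ is computed with $O(m)$ work plus one oracle call.

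For the $\cN_2$-monotonicity claim under the assumption that every $g_i$ is monotone, the key observation is that the additive term $(|X| - c(X)) \cdot M$ depends only on $X$ and not at all on $Y$. To check $\cN_2$-monotonicity directly from its definition, I would take an arbitrary set $S = X \dotcup Y$ with $X \subseteq \cN_1$ and $Y \subseteq \cN_2$, together with an element $u \in \cN_2 \setminus S$ (equivalently $u \in \cN_2 \setminus Y$, since $u \notin \cN_1$). Forming the marginal, the $X$-dependent constant cancels and one is left with $f(u \mid S) = g_{c(X)+1}(Y + u) - g_{c(X)+1}(Y) = g_{c(X)+1}(u \mid Y) \geq 0$, where the inequality uses the monotonicity of $g_{c(X)+1}$. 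As this holds for every admissible $S$ and $u$, the function $f$ is $\cN_2$-monotone.

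None of these steps presents a genuine obstacle; the only points requiring a moment's care are the two roles played by the coefficient $|X| - c(X)$, both of which hinge on the bound $c(X) \leq |X|$. First, this coefficient must be non-negative so that the increment cannot push $f$ below zero (underpinning non-negativity); and second, this same coefficient is a $Y$-independent constant, so it is invisible to marginals taken over $\cN_2$ (underpinning monotonicity). This is precisely the remark flagged in the statement, and recognizing it is the whole content of the argument.
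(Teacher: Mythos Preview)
Your proposal is correct and follows exactly the approach the paper intends: the paper does not give a separate proof for this observation, noting only that the properties ``immediately follow from the definition of $f$ and the fact that $c(X)$ is at most $|X|$ by definition,'' and your write-up simply unpacks those words. The key inequality $c(X) \leq |X|$ (via $[c(X)] \subseteq X$) and the $Y$-independence of the second summand are precisely the two points the paper has in mind.
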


The following two lemmata prove additional properties of $f$.

\begin{lemma} \label{lem:f_submodular}
The function $f$ is disjointly submodular, i.e., it is submodular when restricted to either $\cN_1$ or $\cN_2$.
\end{lemma}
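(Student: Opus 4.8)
The plan is to verify the two submodularity conditions separately: submodularity when restricted to $\cN_2$ (fixing $X$), and submodularity when restricted to $\cN_1$ (fixing $Y$). The first case should be routine, while the second is where the real work lies because the function $f$ depends on $X$ only through the discrete quantities $c(X)$ and $|X|$, which behave non-smoothly as elements are added.

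First I would handle submodularity when restricted to $\cN_2$. Fix any $X \subseteq \cN_1$. Then $f(X \dotcup Y) = g_{c(X)+1}(Y) + (|X| - c(X)) \cdot M$, which as a function of $Y$ is simply $g_{c(X)+1}$ plus a constant (the term $(|X| - c(X)) \cdot M$ does not depend on $Y$). Since $g_{c(X)+1}$ is submodular by assumption and adding a constant preserves submodularity, $f$ is submodular when restricted to $\cN_2$. This is immediate.

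The harder part is submodularity when restricted to $\cN_1$. Fix any $Y \subseteq \cN_2$ and set $A_2 = Y$; I must show that for every $S \subseteq T \subseteq \cN_1$ and every $u \in \cN_1 \setminus T$, the marginal $f(u \mid S \cup Y) \geq f(u \mid T \cup Y)$. The plan is to compute this marginal explicitly. Adding an element $u$ to a set $Z \subseteq \cN_1$ always increases $|Z|$ by exactly one, so the $|X| \cdot M$ part contributes a clean $+M$ to every marginal. The delicate term is how $c(Z)$ changes: adding $u$ can leave $c(Z)$ unchanged, or it can cause $c(Z)$ to jump upward (possibly by more than one) when $u$ is the element that ``completes'' an initial segment $[i]$. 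The key observation I would isolate is that $c(\cdot)$ is monotone in the subset order, and that the marginal $f(u \mid Z)$ equals $M$ minus a nonnegative correction governed by the change in the $g$-term, namely $g_{c(Z)+1}(Y) - g_{c(Z+u)+1}(Y) - (c(Z+u) - c(Z)) \cdot M$.

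The main obstacle will be controlling the $g$-term across the two sets $S$ and $T$. The crucial inequality I expect to need is that the correction to $f(u \mid S \cup Y)$ is no larger than the correction to $f(u \mid T \cup Y)$, and here the choice of $M$ does the heavy lifting: whenever adding $u$ causes $c$ to jump by at least one, the gain of $(c(Z+u) - c(Z)) \cdot M \geq M$ dominates any possible decrease from swapping $g_{c(Z)+1}$ for $g_{c(Z+u)+1}$, since every $g_i(Y) \leq M/2$. Using $c(S) \leq c(T)$ together with a careful case analysis on whether $u$ triggers a jump at $S$, at $T$, at both, or at neither, I would verify that the marginal at $S$ is always at least the marginal at $T$. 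The bound $g_i(Y) \leq M/2$ is precisely what guarantees the $M$ terms swamp the $g$-differences in the problematic cases, so I expect the argument to reduce to bookkeeping over these cases rather than any deep structural fact about the $g_i$.
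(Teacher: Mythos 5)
Your proposal is correct and follows essentially the same route as the paper: the $\cN_2$ part is handled identically (fix $X$, observe $f$ is a single $g_i$ plus a constant), and for the $\cN_1$ part both arguments rest on the same explicit marginal formula $f(u \mid Z \dotcup Y) = g_{c(Z+u)+1}(Y) - g_{c(Z)+1}(Y) + (1 + c(Z) - c(Z+u))\cdot M$, a case analysis on whether $u$ completes an initial segment, and the bound $g_i(Y) \leq M/2$ to let the $M$ terms dominate. The only cosmetic difference is that you compare the marginals at $S$ and $T$ directly, whereas the paper shows the marginal is non-increasing under single-element additions to $X$; both versions of the bookkeeping go through.
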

\begin{proof}
For every fixed set $X \subseteq \cN_1$, there exists a value $i \in [m]$ and another value $c$, both depending only on $X$, such that $f(X \dotcup Y) = g_i(Y) + c$. Since adding a constant to a submodular function does not affect its submodularity, this implies that $f$ is submodular when restricted to $\cN_2$. In the rest of the proof we concentrate on showing that $f$ is also submodular when restricted to $\cN_1$.

Consider now an arbitrary element $i \in \cN_1$. For every two sets $X \subseteq \cN_1 - i$ and $Y \subseteq \cN_2$,
\[
	f(i \mid X \dotcup Y)
	=
	g_{c(X + i) + 1}(Y) - g_{c(X) + 1}(Y) + (1 + c(X) - c(X + i)) \cdot M \enspace.
\]
To show that $f$ is submodular when restricted to $\cN_1$, we need to show that the last expression is a down-monotone function $X$, i.e., that its value does not increase when elements are added to $X$. To do that, it suffices to show that the addition to $X$ of any single element $j \in \cN_1 \setminus (X + i)$ does not increase the value of this expression; which we show below by considering a few cases.

The first case we need to consider is the case of $[i - 1] \not \subseteq X + j$. Clearly, in this case $c(X) = c(X + i)$ and $c(X + j + i) = c(X + j)$, and therefore,
\begin{align*}
	f(i \mid (X + j) \dotcup Y)
	={} &
	g_{c(X + j + i) + 1}(Y) - g_{c(X + j) + 1}(Y) + (1 + c(X + j) - c(X + j + i)) \cdot M
	=
	M\\
	={} &
	g_{c(X + i) + 1}(Y) - g_{c(X) + 1}(Y) + (1 + c(X) - c(X + i)) \cdot M
	=
	f(i \mid X \dotcup Y)
	\enspace.
\end{align*}

The second case we consider the case in which $[i - 1] \subseteq X + j$, but $[i - 1] \not \subseteq X$. In this case
\begin{align*}
	f(i \mid (X + j) \dotcup Y)
	={} &
	g_{c(X + j + i) + 1}(Y) - g_{c(X + j) + 1}(Y) + (1 + c(X + j) - c(X + j + i)) \cdot M\\
	\leq{} &
	g_{c(X + j + i) + 1}(Y) - g_{c(X + j) + 1}(Y)
	\leq
	g_{c(X + i) + 1}(Y) - g_{c(X) + 1}(Y) + M\\
	={} &
	g_{c(X + i) + 1}(Y) - g_{c(X) + 1}(Y) + (1 + c(X) - c(X + i)) \cdot M
	=
	f(i \mid X \dotcup Y)
	\enspace,
\end{align*}
where the first inequality holds since the definition of the case implies $c(X + j + i) \geq i = 1 + c(X + j)$, the second inequality follows from the definition of $M$, and the penultimate equality holds since the definition of the case implies $c(X) = c(X + i)$.

The third case we need to consider is when $[i - 1] \subseteq X$ and $c(X + i) = c(X + i + j)$. Since we also have in this case $c(X) = i - 1 = c(X + j)$, we get
\begin{align*}
	f(i \mid (X + j) \dotcup Y)
	={} &
	g_{c(X + j + i) + 1}(Y) - g_{c(X + j) + 1}(Y) + (1 + c(X + j) - c(X + j + i)) \cdot M\\
	={} &
	g_{c(X + i) + 1}(Y) - g_{c(X) + 1}(Y) + (1 + c(X) - c(X + i)) \cdot M
	=
	f(i \mid X \dotcup Y)
\end{align*}

The last case we need to consider is when $[i - 1] \subseteq X$ and $c(X + i) < c(X + j + i)$. In this case
\begin{align*}
	f(i \mid (X + j) \dotcup Y)
	={} &
	g_{c(X + j + i) + 1}(Y) - g_{c(X + j) + 1}(Y) + (1 + c(X + j) - c(X + j + i)) \cdot M\\
	\leq{} &
	g_{c(X + j + i) + 1}(Y) - g_{c(X + j) + 1}(Y) - M
	\leq
	g_{c(X + i) + 1}(Y) - g_{c(X) + 1}(Y)\\
	={} &
	g_{c(X + i) + 1}(Y) - g_{c(X) + 1}(Y) + (1 + c(X) - c(X + i)) \cdot M
	=
	f(i \mid X \dotcup Y)
	\enspace,
\end{align*}
where the first inequality holds since the definition of the case implies $c(X + j) = i - 1 = c(X + i) - 1 < c(X + j + i) - 1$, the second inequality follows from the definition of $M$, and the penultimate equality holds since the definition of the case implies $c(X) = i - 1 = c(X + i) - 1$.
\end{proof}

\begin{lemma} \label{lem:min_g}
For every set $Y \subseteq \cN_2$, $\min_{X \subseteq \cN_1} f(X \dotcup Y) = \min_{1 \leq i \leq m} g_i(Y)$.
\end{lemma}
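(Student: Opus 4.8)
The plan is to prove the identity $\min_{X \subseteq \cN_1} f(X \dotcup Y) = \min_{1 \leq i \leq m} g_i(Y)$ by showing both inequalities, exploiting the structure of the function $c(X)$ and the definition of $f$. The key observation is that the term $(|X| - c(X)) \cdot M$ acts as a penalty that vanishes exactly when $X$ is a ``prefix'' set, i.e., when $X = [i]$ for some $i$ (so that $c(X) = |X| = i$), and is strictly positive (at least $M$) otherwise.

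First I would establish the inequality $\min_{X \subseteq \cN_1} f(X \dotcup Y) \leq \min_{1 \leq i \leq m} g_i(Y)$. For this, fix any index $i \in [m]$ and consider the prefix set $X = [i-1] \subseteq [m-1] = \cN_1$. For this choice we have $c(X) = i-1$ and $|X| = i-1$, so the penalty term vanishes and $f(X \dotcup Y) = g_{c(X)+1}(Y) = g_i(Y)$. Hence $\min_{X \subseteq \cN_1} f(X \dotcup Y) \leq g_i(Y)$ for every $i$, and taking the minimum over $i$ gives the desired inequality. (Note the edge case $i = m$ uses $X = [m-1] = \cN_1$, and $i = 1$ uses $X = [0] = \varnothing$, which is why the earlier remark that $[0] = \varnothing$ is relevant.)

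Next I would establish the reverse inequality $\min_{X \subseteq \cN_1} f(X \dotcup Y) \geq \min_{1 \leq i \leq m} g_i(Y)$, i.e., that no set $X$ can do better than the best prefix. Take an arbitrary $X \subseteq \cN_1$ and write $c = c(X)$. Then $f(X \dotcup Y) = g_{c+1}(Y) + (|X| - c) \cdot M$. Since $c(X) \leq |X|$ always, the penalty term is non-negative, so $f(X \dotcup Y) \geq g_{c+1}(Y) \geq \min_{1 \leq i \leq m} g_i(Y)$, where the last step uses that $c+1 \in [m]$ (because $c = c(X) \leq |X| \leq m-1$). Taking the minimum over all $X$ yields the claim, and combining the two inequalities completes the proof.

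I do not expect any serious obstacle here; this lemma is essentially a bookkeeping argument once the roles of $c(X)$ and the penalty $M$ are understood. The only point requiring minor care is the boundary behavior of the prefix sets and the convention $[0] = \varnothing$, together with checking that the index $c(X)+1$ always lies in the valid range $[m]$ so that $g_{c(X)+1}$ is well-defined. The heavier lifting (submodularity of $f$ and the definition of $M$ guaranteeing $g_i(Y) \leq M/2$) has already been handled in Observation~\ref{obs:f_properties} and Lemma~\ref{lem:f_submodular}, so this final lemma simply confirms that the minimization over $X$ recovers exactly the pointwise minimum of the $g_i$'s.
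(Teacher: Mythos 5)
Your proof is correct and follows essentially the same approach as the paper: evaluate $f$ on the prefix sets $[i-1]$ to get one inequality, and use the non-negativity of the penalty $(|X|-c(X))\cdot M$ for the other. The only (cosmetic) difference is that for a non-prefix $X$ you bound $f(X \dotcup Y) \geq g_{c(X)+1}(Y) \geq \min_{1 \leq i \leq m} g_i(Y)$ directly, while the paper instead uses $f(X \dotcup Y) \geq M \geq \min_{1 \leq i \leq m} g_i(Y)$ via the defining property of $M$; your route is marginally cleaner since it does not invoke that property at all.
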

\begin{proof}
Observe that for every integer $1 \leq i \leq m$, we have $f([i - 1] \dotcup Y) = g_i(Y)$ because $|[i - 1]| = c([i - 1]) = i - 1$. Therefore,
\begin{equation} \label{eq:min_getting}
	\min_{1 \leq i \leq m} f([i - 1] \dotcup Y)
	=
	\min_{1 \leq i \leq m} g_i(Y)
	\enspace.
\end{equation}

Consider now an arbitrary subset $X$ of $\cN_1$ that is not equal to $[i - 1]$ for any integer $1 \leq i \leq m$. For such a subset we must have $c(X) \leq |X| - 1$, and therefore,
\[
	f(X \dotcup Y)
	=
	g_{c(X) + 1}(Y) +	(|X| - c(X)) \cdot M
	\geq
	g_{c(X) + 1}(Y) +	M
	\geq
	M
	\geq
	\min_{1 \leq i \leq m} g_i(Y)
	\enspace,
\]
where the second inequality follows from the non-negativity of $g_{c(X) + 1}$, and the last inequality holds by the definition of $M$. Combining this inequality with Equation~\eqref{eq:min_getting} completes the proof of the lemma.
\end{proof}

Lemma~\ref{lem:f_construction} now follows by combining Observation~\ref{obs:f_properties}, Lemma~\ref{lem:f_submodular} and Lemma~\ref{lem:min_g}.
\section{Results for \texorpdfstring{$\min \max$}{minmax} optimization} \label{sec:min_max}

This section includes our theoretical results for approximation of $\min \max$ expressions. We begin with the following theorem, which shows that when all the singleton subsets of $\cN_1$ are feasible choices for the $\min$ operation it is possible to get a finite approximation (specifically, $O(|\cN_2|)$-approximation) for $\min \max$. The proof of Theorem~\ref{thm:min_max_disjoint} can be found in Section~\ref{sec:min_max_disjoint}. In a nutshell, it is based on the observation for a set $X \subseteq \cN_1$ the sum $f(X) + \sum_{u \in \cN_2} f(X \dotcup \{u\})$ is an easy to calculate submodular function of $X$ that gives $O(|\cN_2|)$-approximation for $\max_{Y \subseteq \cN_2} f(X \dotcup Y)$.
\begin{restatable}{theorem}{thmMinMaxDisjoint} \label{thm:min_max_disjoint}
Assuming $\{u\} \in \cF_2$ for every $u \in \cN_2$, there exists a polynomial time algorithm that, given a non-negative disjointly submodular function $f\colon 2^\cN \to \nnR$, returns a set $\hat{X} \subseteq \cN_1$ and a value $v$ such that both $\max_{Y \in \cF_2} f(\hat{X} \dotcup Y)$ and $v$ fall within the range $[\tau, (|\cN_2| + 1) \cdot \tau]$, where $\tau \triangleq \min_{X \subseteq \cN_1} \max_{Y \in \cF_2} f(X \dotcup Y)$. 
\end{restatable}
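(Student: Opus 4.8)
The plan is to replace the intractable inner objective $X \mapsto \max_{Y \in \cF_2} f(X \dotcup Y)$ by the easy-to-evaluate surrogate
\[
	\Phi(X) \triangleq f(X) + \sum_{u \in \cN_2} f(X \dotcup \{u\})
	\enspace,
\]
and to argue that $\Phi$ is (i) a submodular function of $X \subseteq \cN_1$, (ii) computable with $|\cN_2| + 1$ oracle calls, and (iii) a factor-$(|\cN_2|+1)$ two-sided approximation of $\max_{Y \in \cF_2} f(X \dotcup Y)$. Given these facts, the algorithm runs an exact unconstrained submodular-minimization routine (available because $\cF_1 = 2^{\cN_1}$) to obtain $\hat X \in \argmin_{X \subseteq \cN_1} \Phi(X)$ and returns $\hat X$ together with the value $v = \Phi(\hat X)$.

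First I would dispatch (i) and (ii). Computability is immediate. For submodularity, observe that for each fixed $u \in \cN_2$ the map $X \mapsto f(X \dotcup \{u\})$ is precisely $f$ restricted to $\cN_1$ with the $\cN_2$-part frozen to $\{u\}$, hence submodular by the disjoint submodularity of $f$; likewise $X \mapsto f(X)$ is $f$ restricted to $\cN_1$ with the $\cN_2$-part frozen to $\varnothing$. As a finite sum of functions that are submodular in $X$, the surrogate $\Phi$ is submodular in $X$.

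The crux is the two-sided bound (iii). For the upper direction, each summand of $\Phi(X)$ equals the value of $f$ at $X$ together with a feasible set ($\varnothing$ or a singleton $\{u\}$, the latter being feasible by hypothesis), so each of the $|\cN_2|+1$ summands is at most $\max_{Y \in \cF_2} f(X \dotcup Y)$, giving $\Phi(X) \le (|\cN_2|+1)\max_{Y \in \cF_2} f(X \dotcup Y)$. For the lower direction, fix any $Y \in \cF_2$, expand $f(X \dotcup Y) - f(X)$ as a telescoping sum of marginals, and bound each marginal by submodularity restricted to $\cN_2$ to get $f(X \dotcup Y) \le f(X) + \sum_{u \in Y}\bigl(f(X \dotcup \{u\}) - f(X)\bigr) = (1-|Y|)\,f(X) + \sum_{u \in Y} f(X \dotcup \{u\})$. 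When $Y \neq \varnothing$ the coefficient $(1-|Y|)$ is nonpositive, so non-negativity of $f$ yields $f(X \dotcup Y) \le \sum_{u \in \cN_2} f(X \dotcup \{u\}) \le \Phi(X)$, while $Y = \varnothing$ gives $f(X) \le \Phi(X)$ directly; maximizing over $Y$ proves $\max_{Y \in \cF_2} f(X \dotcup Y) \le \Phi(X)$.

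Finally I would chain these estimates. For $\hat X$ and the true optimizer $X^\star$ (with $\max_{Y \in \cF_2} f(X^\star \dotcup Y) = \tau$), the lower bound at $\hat X$, optimality of $\hat X$ for $\Phi$, and the upper bound at $X^\star$ give $\max_{Y \in \cF_2} f(\hat X \dotcup Y) \le \Phi(\hat X) \le \Phi(X^\star) \le (|\cN_2|+1)\,\tau$; combined with the trivial $\max_{Y \in \cF_2} f(\hat X \dotcup Y) \ge \tau$ and $v = \Phi(\hat X)$, both reported quantities land in $[\tau, (|\cN_2|+1)\tau]$. The main obstacle is the lower bound in (iii): one must handle the possibly negative marginals $f(X \dotcup \{u\}) - f(X)$ correctly, exploiting both $|Y| \ge 1$ and non-negativity to discard the $(1-|Y|)f(X)$ term. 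A secondary subtlety is that the upper bound's treatment of the $f(X)$ summand tacitly uses $\varnothing \in \cF_2$ (automatic for down-closed constraints); when $\varnothing \notin \cF_2$ one instead drops that summand and argues only over non-empty $Y$, for which the lower bound still holds and the factor improves to $|\cN_2|$.
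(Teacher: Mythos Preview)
Your proposal is correct and matches the paper's proof essentially line for line: the same surrogate $\Phi(X)=f(X)+\sum_{u\in\cN_2}f(X\dotcup\{u\})$, the same two-sided sandwich $\max_{Y\in\cF_2}f(X\dotcup Y)\le\Phi(X)\le(|\cN_2|+1)\max_{Y\in\cF_2}f(X\dotcup Y)$ via submodularity and non-negativity, exact unconstrained submodular minimization of $\Phi$, and the same chaining through $X^\star$. You even flag the $\varnothing\in\cF_2$ subtlety and its fix exactly as the paper does.
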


The approximation ratio of the last theorem can be improved to a constant when the $\max$ operation is unconstrained (like the $\min$ operation). The following theorem states this formally, and its proof can be found in Section~\ref{sec:min_max_unconstrained}. This proof is based on using samples of $\cN_2$ to construct a random easy to calculate submodular function of $X$ that with high probability approximates $\max_{Y \subseteq \cN_2} f(X \dotcup Y)$ up to a factor of roughly $4$.
\begin{restatable}{theorem}{thmMinMaxUnconstrained} \label{thm:min_max_unconstrained}
For every constant $\eps \in (0, 1)$, there exists a polynomial time algorithm that given a non-negative disjointly submodular function $f\colon 2^\cN \to \nnR$ returns a set $\hat{X} \subseteq \cN_1$ and a value $v$ such that the expectations of both $\max_{Y \subseteq \cN_2} f(\hat{X} \dotcup Y)$ and $v$ fall within the range $[\tau, (4 + \eps)\tau]$, where $\tau \triangleq \min_{X \subseteq \cN_1} \max_{Y \subseteq \cN_2} f(X \dotcup Y)$. Furthermore, the probability that both $\max_{Y \subseteq \cN_2} f(\hat{X} \dotcup Y)$ and $v$ fall within this range is at least $1 - O(|\cN_2|^{-1})$.
\end{restatable}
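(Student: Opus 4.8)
The plan is to replace the hard quantity $\Phi(X) \triangleq \max_{Y \subseteq \cN_2} f(X \dotcup Y)$ — a pointwise maximum of submodular functions, hence neither submodular nor efficiently minimizable in $X$ — by an honestly submodular surrogate in $X$ that can be minimized exactly. Two facts make this possible: (i) for every fixed $X$ the function $Y \mapsto f(X \dotcup Y)$ is non-negative submodular on $\cN_2$, so by the classical result of~\cite{feige2011maximizing} a uniformly random subset $R \subseteq \cN_2$ (each element included independently with probability $1/2$) satisfies $\bE[f(X \dotcup R)] \ge \tfrac14 \Phi(X)$, while $f(X \dotcup R) \le \Phi(X)$ always; and (ii) for every fixed $R \subseteq \cN_2$ the function $X \mapsto f(X \dotcup R)$ is submodular on $\cN_1$ by disjoint submodularity. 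Thus an average of the maps $X \mapsto f(X \dotcup R_j)$ over sampled sets $R_j$ is a submodular, polynomially evaluable function of $X$ that both lower-bounds (in expectation) and upper-bounds $\Phi(X)$ up to a factor of $4$.

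Concretely, I would draw i.i.d.\ uniform sets $R_1, \dots, R_t \subseteq \cN_2$ and set $\hat g(X) \triangleq \tfrac1t \sum_{j=1}^t f(X \dotcup R_j)$. Normalizing the terms by $\Phi(X)$ so they lie in $[0,1]$ with mean $\ge 1/4$, a \emph{multiplicative} Chernoff bound gives $\Pr[\hat g(X) \le \tfrac{1-\delta}{4}\Phi(X)] \le e^{-\Omega(\delta^2 t)}$ for each fixed $X$; taking $\delta = \Theta(\eps)$, $t = \Theta(\eps^{-2}(|\cN_1| + \log |\cN_2|))$, and union-bounding over all $2^{|\cN_1|}$ subsets, the event $G$ that $\tfrac{1-\delta}{4}\Phi(X) \le \hat g(X) \le \Phi(X)$ holds for \emph{every} $X \subseteq \cN_1$ simultaneously has probability $1 - O(|\cN_2|^{-1})$ (the upper inequality is deterministic). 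Since $\hat g$ is submodular in $X$, I can compute a minimizer $\hat X$ exactly by unconstrained submodular minimization. Writing $X^\ast$ for a minimizer of $\Phi$, the key chain is $\hat g(\hat X) \le \hat g(X^\ast) \le \Phi(X^\ast) = \tau$ (deterministic), so on $G$ we get $\Phi(\hat X) \le \tfrac{4}{1-\delta}\hat g(\hat X) \le (4+\eps)\tau$, and since $\Phi(\hat X) \ge \tau$ always, $\hat X$ is a $(4+\eps)$-approximate minimizer. Reporting $v \triangleq \tfrac{4}{1-\delta}\hat g(\hat X)$ gives $v \le (4+\eps)\tau$ deterministically and $\tau \le \Phi(\hat X) \le v$ on $G$, which already yields the high-probability part of the theorem.

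The delicate point is the \emph{expectation} bound $\bE[\Phi(\hat X)] \le (4+\eps)\tau$: on the complement of $G$ the surrogate may underestimate $\Phi$ at its own minimizer, and because submodular values carry no a priori polynomial bound, a rare failure could in principle contribute an unbounded amount. To neutralize this I would minimize, instead of $\hat g$, the still-submodular objective $q(X) \triangleq \hat g(X) + \lambda\, p(X)$, where $p(X) \triangleq f(X) + \sum_{u \in \cN_2} f(X \dotcup \{u\})$ is the deterministic proxy from Theorem~\ref{thm:min_max_disjoint} satisfying $\Phi(X) \le p(X) \le (|\cN_2|+1)\Phi(X)$, and $\lambda = \Theta(\eps / |\cN_2|)$. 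Minimality of $q$ forces a deterministic cap $p(\hat X) \le q(\hat X)/\lambda \le q(X^\ast)/\lambda \le \big(1 + \lambda(|\cN_2|+1)\big)\tau/\lambda = O(|\cN_2|/\eps)\cdot\tau$, hence $\Phi(\hat X) \le p(\hat X) = O(|\cN_2|/\eps)\cdot\tau$ \emph{always}. Choosing $t$ so that $\Pr[\overline{G}] = O(\eps^2/|\cN_2|)$, the failure contribution $\bE[\Phi(\hat X)\,\characteristic_{\overline G}] \le O(|\cN_2|/\eps)\tau \cdot O(\eps^2/|\cN_2|) = O(\eps)\tau$ is negligible, while the $\lambda p$ term inflates the good-event chain only by the factor $1 + \lambda(|\cN_2|+1) = 1 + O(\eps)$ (using $\hat g(\hat X) \le q(\hat X) \le q(X^\ast) \le (1+O(\eps))\tau$), so the bound remains $(4+\eps)\tau$ after adjusting constants; the same cap makes $\bE[v] \le (4+\eps)\tau$, and $\bE[\Phi(\hat X)] \ge \tau$ is immediate.

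I expect the main obstacle to be exactly this expectation control on the low-probability failure event, since the natural high-probability/union-bound argument says nothing about how large $\Phi(\hat X)$ can become when the estimate goes wrong; carrying along the deterministic submodular upper proxy $p$ is what keeps $\Phi(\hat X)$ polynomially bounded in $\tau$ in all cases. A secondary technical point worth getting right is that the concentration must be multiplicative rather than additive: this is what lets the union bound over $2^{|\cN_1|}$ sets cost only polynomially many samples irrespective of the (possibly huge and $X$-dependent) scale of $\Phi(X)$, and it relies on the normalized sample values having mean at least $1/4$.
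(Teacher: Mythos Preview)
Your core approach matches the paper's: sample $m=\Theta(\eps^{-2}(|\cN_1|+\log|\cN_2|))$ i.i.d.\ uniformly random subsets of $\cN_2$, form the submodular surrogate $\hat g(X)=\tfrac1m\sum_j f(X\dotcup R_j)$, invoke the Feige--Mirrokni--Vondr\'ak bound $\bE[f(X\dotcup R)]\ge\tfrac14\Phi(X)$, union-bound the multiplicative concentration over all $2^{|\cN_1|}$ sets, and minimize exactly. This yields the high-probability statement just as in the paper (which uses Hoeffding on the range $[0,4h(X)]$ rather than multiplicative Chernoff, but that is the same thing here).

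Where you genuinely diverge is in controlling $\bE[\Phi(\hat X)]$ on the failure event. The paper keeps the sampling algorithm pure and instead runs a second, completely separate minimization---Algorithm~\ref{alg:min_max_n_2} from Theorem~\ref{thm:min_max_disjoint}---to obtain a fallback $X''$ with the deterministic guarantee $\Phi(X'')\le(|\cN_2|+1)\tau$; it then compares $X'$ and $X''$ via the deterministic $2$-approximation for unconstrained submodular maximization and returns the apparently better one (Algorithm~\ref{alg:fail_fix}). This ``best of two'' gives a deterministic cap $\Phi(\hat X)\le 4(|\cN_2|+1)\tau$ while never degrading the good-event bound. Your alternative is to fold the singleton proxy $p(X)$ directly into the objective as a regularizer $\lambda p(X)$ with $\lambda=\Theta(\eps/|\cN_2|)$, so that a single minimization simultaneously enforces the cap $\Phi(\hat X)\le p(\hat X)\le q(\hat X)/\lambda=O(|\cN_2|/\eps)\tau$. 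Both routes are correct; yours is arguably cleaner (one minimization, no auxiliary USM calls), while the paper's is more modular (it treats Theorems~\ref{thm:min_max_disjoint} and~\ref{thm:min_max_unconstrained_fail} as black boxes) and gets a slightly tighter deterministic cap (no $1/\eps$), though that slack is immaterial once you push the failure probability to $O(\eps^2/|\cN_2|)$. One small point to tidy: your claim $\bE[v]\ge\tau$ needs the same constant-padding the paper does (inflating the reported multiple of $\hat g(\hat X)$ just enough to absorb the $O(\eps)$ loss from $\Pr[\bar G]$); as written, $v\ge\tau$ only on $G$, so $\bE[v]\ge(1-\Pr[\bar G])\tau$ rather than $\tau$ on the nose.
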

The factor of $4 + \eps$ in the last theorem can be improved to $2 + \eps$ when $f$ is symmetric with respect to $\cN_2$, i.e., when $f(X \dotcup Y) = f(X \dotcup (\cN_2 \setminus Y))$ for every two sets $X \subseteq \cN_1$ and $Y \subseteq \cN_2$.

The last theorem in this section obtains a sub-linear approximation guarantee for an (almost) general constraint $\cF_2$; however, this comes at the cost of requiring $f$ to be jointly-submodular. Unlike in the previous theorems of the section, the algorithm used to prove Theorem~\ref{thm:min_max_sqrt} does not use an easy to calculate approximation for $\max_{Y \in \cF_2} f(X \dotcup Y)$. Instead, it grows an output set $X$ in iterations. In each iteration, the algorithm uses the given oracle to find a set $Y \in \cF_2$ whose marginal contribution with respect to the current solution $X$ is large, and then it finds a set $X'$ of elements whose addition to the solution makes the marginal contribution of $Y$ small. After enough iterations, this process is guaranteed to produce a solution $X$ such that every set $Y \in \cF_2$ has a small marginal contribution with respect to $X$. However, such a set $X$ is a good solution only if $f(X)$ is also small. Thus, when finding the augmentation $X'$ it is important to balance two objectives: making the marginal contribution of $Y$ small, and keeping the increase $f(X' \mid X)$ in the value of $X$ small (these objectives are captured by the two terms in the minimized expression on Line~\ref{line:prime_selection} of Algorithm~\ref{alg:square_root_approximation}). See Section~\ref{sec:min_max_sqrt} for the formal proof of Theorem~\ref{thm:min_max_sqrt}.

\begin{restatable}{theorem}{thmMinMaxSqrt} \label{thm:min_max_sqrt}
Assuming $\varnothing \in \cF_2$, there exists a polynomial time algorithm that gets as input
\begin{enumerate}[(i)]
	\item a non-negative jointly submodular function $f\colon 2^\cN \to \nnR$, and
	\item an oracle that given a set $X \subseteq \cN_1$ returns a set $Y \in \cF_2$ that maximizes $f(X \dotcup Y)$ up to a factor of $\alpha \geq 1$ among such sets,
\end{enumerate}
and given this input returns a set $\hat{X} \subseteq \cN_1$ and a value $v$ such that both $\max_{Y \in \cF_2} f(\hat{X} \dotcup Y)$ and $v$ are lower bounded by $\tau$ and upper bounded by $O(\alpha\sqrt{|\cN_1|}) \cdot \tau$, where $\tau \triangleq \min_{X \subseteq \cN_1} \max_{Y \in \cF_2} f(X \dotcup Y)$.
\end{restatable}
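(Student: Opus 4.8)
The plan is to analyze the iterative procedure sketched before the theorem, which I will call \IterativeAlg. It maintains a set $X \subseteq \cN_1$, initialized to $\varnothing$, and repeatedly does the following. Using the given oracle it obtains a set $Y_t \in \cF_2$ that (approximately) maximizes $f(X_t \dotcup Y)$, equivalently that (approximately) maximizes the marginal $f(Y \mid X_t)$ since $f(X_t)$ does not depend on $Y$. It then augments the solution by a set $X'_t \subseteq \cN_1$ chosen on Line~\ref{line:prime_selection} to minimize
\[
    f(X' \mid X_t) + f(Y_t \mid X_t \cup X')
    \enspace,
\]
and sets $X_{t+1} = X_t \cup X'_t$. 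The two summands are exactly the two competing objectives: the cost $f(X' \mid X_t)$ of growing $X$, and the residual marginal $f(Y_t \mid X_t \cup X')$ that we are trying to kill.

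First I would record two structural facts. The key identity is that the minimized expression telescopes,
\[
    f(X' \mid X_t) + f(Y_t \mid X_t \cup X')
    = f(X_t \cup X' \cup Y_t) - f(X_t)
    \enspace,
\]
so, as a function of $X' \subseteq \cN_1$ with $X_t$ and $Y_t$ held fixed, it is (up to an additive constant) the restriction of $f$ to $\cN_1$ — hence submodular, and Line~\ref{line:prime_selection} is a genuine unconstrained submodular minimization solvable in polynomial time. Plugging in the optimal set $X^*$ for the outer problem gives the bound I will lean on: since $f(X^* \dotcup Y_t) \le \tau$ for every $Y_t \in \cF_2$, submodularity yields $f(Y_t \mid X_t \cup X^*) \le f(Y_t \mid X^*) \le \tau$ and $f(X^* \mid X_t) \le f(X^*) \le \tau$, so the minimum value on Line~\ref{line:prime_selection} is at most $2\tau$. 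Consequently each iteration raises $f(X_t)$ by at most $O(\tau)$ and drives the marginal of the handled set down to $f(Y_t \mid X_{t+1}) = O(\tau)$. The second fact is monotonicity: because $f(Y \mid X)$ is non-increasing in $X$ for every fixed $Y$ (submodularity), the maximum marginal $m_t \triangleq \max_{Y \in \cF_2} f(Y \mid X_t)$ is non-increasing across iterations.

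With these in hand, the approximation guarantee comes from balancing two error sources against the number of iterations $T$. On one side, the per-iteration cost bound gives $f(X_T) \le f(\varnothing) + O(T)\cdot\tau$, so after $T = \Theta(\sqrt{|\cN_1|})$ iterations the additive ``base value'' $f(X_T)$ is only $O(\sqrt{|\cN_1|})\cdot\tau$. On the other side, I would show that within $O(\sqrt{|\cN_1|})$ iterations the maximum marginal $m_t$ has been pushed below $O(\alpha\sqrt{|\cN_1|})\cdot\tau$; combined with the identity $\max_{Y \in \cF_2} f(X_t \dotcup Y) = f(X_t) + m_t$, the iterate minimizing the oracle estimate then satisfies $\max_{Y \in \cF_2} f(X_t \dotcup Y) = O(\alpha\sqrt{|\cN_1|})\cdot\tau$. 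The algorithm returns that iterate as $\hat X$ together with the value $v = \alpha \cdot f(\hat X \dotcup Y_{\hat t})$; the factor $\alpha$ ensures $v \ge \max_{Y \in \cF_2} f(\hat X \dotcup Y) \ge \tau$, while the upper bounds above keep both $v$ and $\max_{Y \in \cF_2} f(\hat X \dotcup Y)$ within $O(\alpha\sqrt{|\cN_1|})\cdot\tau$, as required.

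The main obstacle is the marginal-drop step, i.e.\ proving that $m_t$ really does fall to $O(\alpha\sqrt{|\cN_1|})\cdot\tau$ after only $O(\sqrt{|\cN_1|})$ iterations. The difficulty is that handling $Y_t$ only guarantees that this particular set has small marginal afterwards (and, by submodularity, stays small), whereas the maximum can immediately jump to a different set $Y_{t+1}$; one therefore needs a global potential argument showing that the total marginal progress summed over iterations is bounded by roughly $|\cN_1|\cdot\tau$, so that the number of iterations in which $m_t$ exceeds the $\sqrt{|\cN_1|}\cdot\tau$ threshold is itself only $O(\sqrt{|\cN_1|})$ — this is exactly where the square root, and the balance between $f(X_T)$ growth and marginal decay, is forced. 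A secondary technical point is the bookkeeping around the oracle's approximation factor: the oracle approximates $f(X \dotcup Y)$ rather than the marginal $f(Y \mid X)$, so the $\alpha$ must be threaded through the decomposition $\max_{Y \in \cF_2} f(X_t \dotcup Y) = f(X_t) + m_t$ carefully, and one must also verify the sign and accumulation of the costs $f(X'_t \mid X_t)$ (which may be negative when growing $X$ decreases $f$) so that the bound $f(X_T) \le f(\varnothing) + O(T)\cdot\tau$ genuinely holds.
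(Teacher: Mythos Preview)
Your plan diverges from the paper's actual algorithm in a way that produces exactly the gap you flag as ``the main obstacle.'' Line~\ref{line:prime_selection} does \emph{not} minimize the unweighted telescope $f(X' \mid X_t) + f(Y_t \mid X_t \cup X') = f(X_t \cup X' \cup Y_t) - f(X_t)$; it minimizes the \emph{weighted} expression $\sqrt{n_1}\cdot f(X \cup X_{i-1}) + f(X \dotcup Y_i)$ (note also that the second summand is $f(X \dotcup Y_i)$, not $f(X_{i-1} \cup X \dotcup Y_i)$), and the algorithm runs until the minimizer $X'_i$ adds no new element to $X_{i-1}$---possibly $n_1$ iterations, not $\Theta(\sqrt{n_1})$. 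The $\sqrt{n_1}$ weight is the whole point: plugging the competitor $X = X^* \cap X'_i$ (not $X^*$) into the minimization and cancelling the common $f(X'_i \dotcup Y_i)$ term yields $f(X_i) \le f((X^* \cap X'_i) \cup X_{i-1}) + \tau/\sqrt{n_1}$, so the per-iteration growth of $f$ is only $\tau/\sqrt{n_1}$. Summing and telescoping $\sum_i f(X^* \cap X'_i \mid X^* \cap X_{i-1}) = f(X^* \cap X_I) - f(X^* \cap X_0)$, together with the initialization $X_0 \in \arg\min_{X \subseteq \cN_1} f(X)$ (not $X_0 = \varnothing$, which would leave $f(\varnothing)$ uncontrolled), gives $f(X_I) = O(\sqrt{n_1})\cdot\tau$ after up to $n_1$ iterations. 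The final bound on $\max_{Y} f(X_I \dotcup Y)$ then comes directly from the termination condition $X'_{I+1} \subseteq X_I$: comparing $X'_{I+1}$ against $X^*$ in the weighted minimization at iteration $I{+}1$ bounds $f(X_I \dotcup Y_{I+1}) \le f(X_I) + \sqrt{n_1}\cdot f(X^* \mid X_I) + \tau = O(\sqrt{n_1})\cdot\tau$. No separate ``marginal-drop'' or potential argument is ever needed.

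Your alternative---unweighted minimization, $\Theta(\sqrt{n_1})$ iterations, and a potential bound on $\sum_t m_t$---would require an inequality you do not supply and that I do not see how to obtain: after handling $Y_t$, submodularity controls $f(Y_t \mid X_{t+1})$, but the next oracle call can return an unrelated $Y_{t+1}$, and nothing visibly forces $\sum_t m_t = O(n_1)\tau$. The $\sqrt{n_1}$ coefficient in the paper's algorithm is precisely the device that replaces this missing potential, trading many cheap iterations against a single clean termination step.
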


Theorem~\ref{thm:min_max_sqrt} assumes an oracle that never fails. Such an oracle can be implemented by a deterministic $\alpha$-approximation algorithm, or a randomized algorithm that maximizes $f(X \dotcup Y)$ up to a factor of $\alpha$ with high probability (in the later case, the algorithm guaranteed by the theorem also succeeds only with high probability). If only a randomized $\alpha$-approximation algorithm is available, then repetitions should be used to get an oracle that maximizes $f(X \dotcup Y)$ up to a factor of $\alpha + \eps$ with high probability. Note that when $\eps > 0$ is only polynomially small, this requires only a polynomial number of repetitions since we may assume that $\alpha \leq |\cN_2|$ (otherwise, Theorem~\ref{thm:min_max_disjoint} already provides a better approximation).

The $\sqrt{|\cN_1|}$ term in Theorem~\ref{thm:min_max_sqrt} might give the impression that this theorem is related to the work of Goemans et al.~\cite{goemans2009approximating} on approximation of submodular functions using linear ones. However, we note that the work of Goemans et al.~\cite{goemans2009approximating} applies only to monotone submodular functions, and in our case the objective function $f$ is never assumed to be monotone (although it is sometimes assumed to be $\cN_2$-monotone).

\subsection{Proof of Theorem~\headerref{thm:min_max_disjoint}} \label{sec:min_max_disjoint}

In this section we prove Theorem~\ref{thm:min_max_disjoint}, which we repeat here for convenience.
\thmMinMaxDisjoint*

The algorithm that we use to prove Theorem~\ref{thm:min_max_disjoint} is given as Algorithm~\ref{alg:min_max_n_2}. We note that the function $g$ defined by this algorithm is the average of $|\cN_2| + 1$ submodular functions (since $f$ is submodular once the subset of $\cN_2$ in the argument set is fixed), and therefore, $g$ is also submodular. As written, Algorithm~\ref{alg:min_max_n_2} is good only for the case in which $\varnothing \in \cF_2$, and for simplicity, we assume throughout the section that this is indeed the case. If $\varnothing \not \in \cF_2$, then the term $f(X)$ should be dropped from the definition of $g$ in Algorithm~\ref{alg:min_max_n_2}, which allows the proof to go through.

\begin{algorithm}
\caption{Estimating the Min-Max using Singletons} \label{alg:min_max_n_2}
Define a function $g \colon 2^{\cN_1} \to \nnR$ as follows. For every set $X \subseteq \cN_1$, $g(X) \triangleq f(X) + \sum_{u \in \cN_2} f(X \dotcup \{u\})$.\\
Use an unconstrained submodular minimization algorithm to find $X' \subseteq \cN_1$ minimizing $g(X')$.\\
\Return the set $X'$ and the value $g(X')$.
\end{algorithm}

The analysis of Algorithm~\ref{alg:min_max_n_2} is based on the observation that $g(X)$ provides an approximation for $\max_{Y \in \cF_2} f(X \dotcup Y)$.
\begin{lemma} \label{lem:g_approximate_n_2}
For every set $X \subseteq \cN_1$, $\max_{Y \in \cF_2} f(X \dotcup Y) \leq g(X) \leq (|\cN_2| + 1) \cdot \max_{Y \in \cF_2} f(X \dotcup Y)$.
\end{lemma}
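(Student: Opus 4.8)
The plan is to prove the two inequalities separately, treating the upper bound as essentially a counting argument and the lower bound as the place where disjoint-submodularity is actually used. Throughout, fix an arbitrary $X \subseteq \cN_1$ and recall that in this section we assume $\varnothing \in \cF_2$, while the theorem assumes $\{u\} \in \cF_2$ for every $u \in \cN_2$.

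For the upper bound $g(X) \leq (|\cN_2| + 1) \cdot \max_{Y \in \cF_2} f(X \dotcup Y)$, the idea is that every one of the $|\cN_2| + 1$ terms defining $g(X)$ is itself of the form $f(X \dotcup Y)$ for a \emph{feasible} set $Y \in \cF_2$: the term $f(X) = f(X \dotcup \varnothing)$ uses $\varnothing \in \cF_2$, and each term $f(X \dotcup \{u\})$ uses $\{u\} \in \cF_2$. Hence each term is at most $\max_{Y \in \cF_2} f(X \dotcup Y)$, and summing the $|\cN_2| + 1$ terms gives the claimed bound directly.

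For the lower bound $\max_{Y \in \cF_2} f(X \dotcup Y) \leq g(X)$, I would fix an arbitrary $Y \in \cF_2$, write $Y = \{u_1, \dotsc, u_k\}$ in some order, and telescope:
\[
	f(X \dotcup Y) - f(X)
	=
	\sum_{j = 1}^{k} f\bigl(u_j \mid X \dotcup \{u_1, \dotsc, u_{j-1}\}\bigr)
	\leq
	\sum_{j = 1}^{k} f(u_j \mid X)
	=
	\sum_{u \in Y} \bigl[f(X \dotcup \{u\}) - f(X)\bigr]
	\enspace,
\]
where the inequality is exactly the submodularity of $f$ when restricted to $\cN_2$ (applied with the $\cN_1$-part fixed to $X$ and $\varnothing \subseteq \{u_1, \dotsc, u_{j-1}\}$). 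Rearranging yields $f(X \dotcup Y) \leq (1 - |Y|) f(X) + \sum_{u \in Y} f(X \dotcup \{u\})$, and since $f$ is non-negative we may both replace $(1 - |Y|)f(X)$ by the larger quantity $f(X)$ and extend the sum from $Y$ to all of $\cN_2$, giving $f(X \dotcup Y) \leq f(X) + \sum_{u \in \cN_2} f(X \dotcup \{u\}) = g(X)$. As $Y \in \cF_2$ was arbitrary, taking the maximum completes the proof.

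The only delicate point — and the main obstacle — is that $f$ is not assumed monotone, so the singleton marginals $f(X \dotcup \{u\}) - f(X)$ may be negative and the coefficient $1 - |Y|$ is negative whenever $|Y| \geq 2$. The resolution is to lean on non-negativity of $f$ in two places: to discard the unfavorable $-|Y|f(X)$ contribution and to enlarge the partial sum over $Y$ to the full sum over $\cN_2$. Everything else is the standard telescoping-plus-submodularity computation, so no genuinely new ingredient is needed beyond the disjoint-submodularity hypothesis and non-negativity.
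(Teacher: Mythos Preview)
Your proof is correct and follows essentially the same approach as the paper: the upper bound is the same counting argument (each of the $|\cN_2|+1$ summands is $f(X \dotcup Y)$ for some feasible $Y$), and the lower bound is the same telescoping/submodularity step followed by two uses of non-negativity. The only cosmetic difference is that the paper bounds $f(u \mid X) \leq f(X \dotcup \{u\})$ directly (absorbing the $-f(X)$ via non-negativity of $f(X)$), whereas you first expand to $(1-|Y|)f(X) + \sum_{u \in Y} f(X \dotcup \{u\})$ and then bound $(1-|Y|)f(X) \leq f(X)$; these are the same manipulation written two ways.
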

\begin{proof}
Let $Y'$ be the set in $\cF_2$ maximizing $f(X \dotcup Y')$, then the disjoint submodularity of $f$ guarantees that
\begin{align*}
	\max_{Y \in \cF_2} f(X \dotcup Y)
	={} &
	f(X \cup Y')
	\leq
	f(X) + \sum_{u \in Y'} f(u \mid X)\\
	\leq{} &
	f(X) + \sum_{u \in Y'} f(X \dotcup \{u\})
	\leq
	f(X) + \sum_{u \in \cN_2} f(X \dotcup \{u\})
	=
	g(X)
	\enspace,
\end{align*}
where the second and last inequalities hold by the non-negativity of $f$. This completes the proof of the first inequality of the lemma. To see why the other inequality holds as well, we note that $g(X)$ is the sum of $|\cN_2| + 1$ terms, each of which is individually upper bounded by $\max_{Y \in \cF_2} f(X \dotcup Y)$.
\end{proof}

Using the last lemma, we can now prove Theorem~\ref{thm:min_max_disjoint}.
\begin{proof}[Proof of Theorem~\ref{thm:min_max_disjoint}]
Let $X^*$ be the set minimizing $\max_{Y \in \cF_2} f(X^* \dotcup Y)$. Then, by Lemma~\ref{lem:g_approximate_n_2} and the choice of $X'$ by Algorithm~\ref{alg:min_max_n_2},
\begin{align*}
	\tau
	={} &
	\min_{X \subseteq \cN_1} \max_{Y \in \cF_2} f(X \dotcup Y)
	\leq
	\max_{Y \in \cF_2} f(X' \dotcup Y)
	\leq
	g(X')
	\leq
	g(X^*)\\
	\leq{} &
	(|\cN_2| + 1) \cdot \max_{Y \in \cF_2} f(X^* \dotcup Y)
	=
	(|\cN_2| + 1) \cdot \min_{X \subseteq \cN_1} \max_{Y \in \cF_2} f(X \dotcup Y)
	=
	(|\cN_2| + 1)\tau
	\enspace.
	\qedhere
\end{align*}
\end{proof}

\subsection{Proof of Theorem~\headerref{thm:min_max_unconstrained}} \label{sec:min_max_unconstrained}

In this section we would like to prove Theorem~\ref{thm:min_max_unconstrained}. However, the majority of the section is devoted to proving the following slightly different theorem, which implies Theorem~\ref{thm:min_max_unconstrained}.
\begin{theorem} \label{thm:min_max_unconstrained_fail}
For every constant $\eps \in (0, 1/2)$, there exists a polynomial time algorithm that given a non-negative disjointly submodular function $f\colon 2^\cN \to \nnR$ returns a set $\hat{X}$ and a value $v$ such that
\begin{itemize}
	\item the expectation of $v$ falls within the range $[\tau, (4 + \eps/2)\tau]$, where $\tau \triangleq \min_{X \subseteq \cN_1} \max_{Y \subseteq \cN_2} f(X \dotcup Y)$, and
	\item with probability at least $1 - \eps / [8(|\cN_2| + 1)]$, both $v$ and $\max_{Y \subseteq \cN_2} f(\hat{X} \dotcup Y)$ fall within the range $[\tau, (4 + \eps/2)\tau]$.
\end{itemize}
\end{theorem}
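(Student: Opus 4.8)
The plan is to replace the intractable inner objective $M(X) \triangleq \max_{Y \subseteq \cN_2} f(X \dotcup Y)$ by a surrogate that is simultaneously submodular in $X$ and a constant-factor approximation of $M(X)$. Since $f$ is disjointly submodular, for every fixed $X$ the map $Y \mapsto f(X \dotcup Y)$ is a non-negative submodular function of $Y$, so by the classical fact that a uniformly random subset is a $\tfrac14$-approximation in expectation for unconstrained non-negative submodular maximization (Feige, Mirrokni and Vondr\'ak~\cite{feige2011maximizing}), the function $\bar{g}(X) \triangleq \bE_{Y}[f(X \dotcup Y)]$, with $Y$ a uniformly random subset of $\cN_2$, satisfies $\tfrac14 M(X) \leq \bar{g}(X) \leq M(X)$ for every $X \subseteq \cN_1$. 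Moreover $\bar{g}$ is a convex combination of the functions $X \mapsto f(X \dotcup Y)$, each submodular in $X$, hence $\bar g$ is itself submodular. If we could evaluate $\bar g$ we would already be done: writing $X^*$ for an optimal solution and $\hat X$ for a minimizer of $\bar g$, we get $M(\hat X) \le 4\bar g(\hat X) \le 4 \bar g(X^*) \le 4 M(X^*) = 4\tau$, while $M(\hat X) \ge \tau$ trivially.

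The obstacle is that $\bar g(X)$ is an average over exponentially many sets $Y$, so I would estimate it by drawing $T$ independent uniform subsets $Y_1, \dotsc, Y_T \subseteq \cN_2$ and setting $\hat{g}(X) \triangleq \tfrac1T \sum_{t=1}^T f(X \dotcup Y_t)$, which is again submodular in $X$ and has a polynomial-time value oracle, and then minimizing $\hat g$ by unconstrained submodular minimization. The key fact controlling the error is that $\bar g(X) \geq \tfrac14 M(X) \geq \tfrac14\tau$ for \emph{every} $X$ (because $M(X) \geq \min_{X'} M(X') = \tau$), which converts an additive Hoeffding bound into a multiplicative one: each term $f(X \dotcup Y_t)$ lies in $[0, M(X)]$ with mean $\bar g(X) \geq M(X)/4$, so $\Pr[\,|\hat g(X) - \bar g(X)| > \eps' \bar g(X)\,] \leq 2 e^{-T \eps'^2 / 8}$ for each fixed $X$. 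A union bound over all $2^{|\cN_1|}$ subsets of $\cN_1$ shows that, for $T = \Theta(\eps'^{-2}(|\cN_1| + \log(1/\delta)))$, with probability at least $1-\delta$ we have $(1-\eps')\bar g(X) \leq \hat g(X) \leq (1+\eps')\bar g(X)$ for all $X$ simultaneously. On this good event the minimizer $\hat X$ of $\hat g$ obeys $\hat g(\hat X) \le \hat g(X^*) \le (1+\eps')\bar g(X^*) \le (1+\eps')\tau$, whence $M(\hat X) \le 4\bar g(\hat X) \le \tfrac{4}{1-\eps'}\hat g(\hat X) \le \tfrac{4(1+\eps')}{1-\eps'}\tau$, which is at most $(4+\eps/2)\tau$ once $\eps'$ is a small constant multiple of $\eps$. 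Reporting $v \triangleq \tfrac{4}{1-\eps'}\hat g(\hat X)$ (or the fresh-batch refinement below), the same chain gives $v \in [\tau,(4+\eps/2)\tau]$ on the good event, since $v \ge 4\bar g(\hat X) \ge M(\hat X) \ge \tau$; taking $\delta = \eps/[8(|\cN_2|+1)]$ yields the high-probability claim.

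The delicate part, where I expect the main difficulty, is the statement about $\bE[v]$, because $\hat X$ is data-dependent. The upper bound is easy for a value read off the same samples, since $\hat g(\hat X) = \min_X \hat g(X) \le \hat g(X^*)$ holds deterministically and $\bE[\hat g(X^*)] = \bar g(X^*) \le \tau$. The lower bound $\bE[v] \ge \tau$ is the problem: on a rare unlucky sample $\hat g(\hat X)$ can be arbitrarily small, so a same-sample value may have expectation below $\tau$. I would fix this by reporting $v$ from a fresh, independent batch: conditioned on $\hat X$ (fixed by the first batch), the fresh estimate of $\bar g(\hat X)$ is unbiased, so $\bE[v \mid \hat X] = \tfrac{4}{1-\eps'}\bar g(\hat X) \ge \tfrac{4}{1-\eps'}\cdot\tfrac14 M(\hat X) \ge \tfrac{\tau}{1-\eps'} \ge \tau$, giving $\bE[v] \ge \tau$ cleanly. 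The remaining danger is that the fresh-sample \emph{upper} bound on $\bE[v]$ must survive the rare bad event of the first batch, during which $\bar g(\hat X)$ could a priori be huge; to tame it I would pair the sampling solution with the deterministic $(|\cN_2|+1)$-approximation $X_2$ from the algorithm of Theorem~\ref{thm:min_max_disjoint}, for which $\bar g(X_2) \le M(X_2) \le (|\cN_2|+1)\tau$ holds unconditionally, and report the better of the two. The bad event then contributes at most $\delta \cdot O((|\cN_2|+1)\tau)$ to $\bE[v]$, and the prescribed $\delta = \eps/[8(|\cN_2|+1)]$ makes this $O(\eps\tau)$, keeping $\bE[v] \le (4+\eps/2)\tau$. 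Careful separation of the batches used for selecting the solution and for the reported value (to avoid the downward selection bias that would otherwise threaten the lower bound) is the main bookkeeping burden, but it is precisely the $(|\cN_2|+1)$ factor in the failure probability that cancels the worst-case fallback cost.
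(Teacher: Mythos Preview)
Your plan is essentially the paper's: draw $m$ uniform random subsets of $\cN_2$, form the empirical surrogate $g(X)=\tfrac1m\sum_i f(X\dotcup Y_i)$, minimize it via unconstrained submodular minimization, and control the error with Hoeffding plus a union bound over all $X\subseteq\cN_1$, leaning on the $\tfrac14$-approximation of Feige--Mirrokni--Vondr\'ak for the uniformly random subset. The high-probability argument you sketch matches the paper's Lemma~\ref{lem:g_approx_h} and Lemma~\ref{lem:surrogate_function} essentially verbatim.

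Where you diverge is in handling $\bE[v]$, and there you are making life harder than necessary. You correctly note the deterministic inequality $\hat g(\hat X)\le \hat g(X^*)\le\tau$ (every sample term $f(X^*\dotcup Y_i)$ is at most $\max_Y f(X^*\dotcup Y)=\tau$), which nails the upper bound on $\bE[v]$ with the \emph{same} batch. Your worry about the lower bound---that on the bad event $\hat g(\hat X)$ may be tiny---is real for your scaling constant $\tfrac{4}{1-\eps'}$, which only yields $\bE[v]\ge(1-\delta)\tau$. But the fix is not a fresh batch: it is simply to inflate the constant a hair more. The paper reports $v=(4+\eps/2)\,g(X')$ and takes $\eps'=\eps/20$; then on the good event $g(X')\ge(1-\eps/20)h(X')\ge(1-\eps/20)\tau/4$, and the extra $\eps/2$ in the multiplier absorbs both the $(1-\eps/20)$ and the $(1-\delta)$ losses, so $\bE[v]\ge\tau$ without any resampling. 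Your fresh-batch idea forfeits the deterministic $\hat g(\hat X)\le\tau$ bound and then forces you to re-earn it via the $(|\cN_2|+1)$-fallback, and the ``report the better of the two'' step, as written, risks reintroducing the very selection bias you were trying to kill (the comparison is made with the fresh samples, so the reported minimum is again downward-biased).

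Incidentally, the pairing with the $(|\cN_2|+1)$-approximation of Theorem~\ref{thm:min_max_disjoint} is precisely what the paper does---but one level up, in deriving Theorem~\ref{thm:min_max_unconstrained} from Theorem~\ref{thm:min_max_unconstrained_fail}, where the goal is to control $\bE[\max_Y f(\hat X\dotcup Y)]$ rather than $\bE[v]$. So your instinct is right, just deployed at the wrong theorem.
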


Before getting to the proof of Theorem~\ref{thm:min_max_unconstrained_fail}, let us show that it indeed implies Theorem~\ref{thm:min_max_unconstrained}, which we repeat here for convenience.
\thmMinMaxUnconstrained*
\begin{proof}
Since the guarantee of Theorems~\ref{thm:min_max_unconstrained} becomes stronger as $\eps$ becomes smaller, it suffices to prove the theorem for $\eps \in (0, 1/2)$. Furthermore, the only way in which the algorithm guaranteed by Theorem~\ref{thm:min_max_unconstrained_fail} might not obey the properties described in Theorem~\ref{thm:min_max_unconstrained} is if the expectation of $\max_{Y \subseteq \cN_2} f(\hat{X} \dotcup Y)$ for its output set $\hat{X}$ does not fall within the range $[\tau, (4 + \eps)\tau]$. Thus, to prove Theorem~\ref{thm:min_max_unconstrained} it is only necessary to show how to modify the output set $\hat{X}$ of Theorem~\ref{thm:min_max_unconstrained_fail} in a way that does not violate the other properties guaranteed by this theorem, but makes the expectation of $\max_{Y \subseteq \cN_2} f(\hat{X} \dotcup Y)$ fall into the right range. We do that using Algorithm~\ref{alg:fail_fix}. This algorithm uses a deterministic polynomial time algorithm that obtains $2$-approximation for unconstrained submodular maximization. Such an algorithm was given by Buchbinder and Feldman~\cite{buchbinder2018deterministic}.
\begin{algorithm}
\caption{Best of Two $(\eps)$} \label{alg:fail_fix}
\DontPrintSemicolon
Execute the algorithm guaranteed by Theorem~\ref{thm:min_max_unconstrained_fail}. Let $X'$ denote its output set.\\
Use an algorithm for unconstrained submodular maximization to find a set $Y' \subseteq \cN_2$ such that $\max_{Y \subseteq \cN_2} f(X' \dotcup Y) \leq 2f(X' \dotcup Y') \leq 2 \cdot \max_{Y \subseteq \cN_2} f(X' \dotcup Y)$.

\BlankLine

Execute the algorithm guaranteed by Theorem~\ref{thm:min_max_disjoint}. Let $X''$ denote its output set.\\
Use an algorithm for unconstrained submodular maximization to find a set $Y'' \subseteq \cN_2$ such that $\max_{Y \subseteq \cN_2} f(X'' \dotcup Y) \leq 2f(X'' \dotcup Y'') \leq 2 \cdot \max_{Y \subseteq \cN_2} f(X'' \dotcup Y)$.

\BlankLine

\lIf{$f(X' \dotcup Y') \leq 2f(X'' \dotcup Y'')$}{\Return $X'$.}
\lElse{\Return $X''$.}
\end{algorithm}

Let us denote the output set of Algorithm~\ref{alg:fail_fix} by $\hat{X}$, and observe that the choice of the output set in the last two lines of Algorithm~\ref{alg:fail_fix} guarantees that whenever $\hat{X} = X''$, we also have
\[
	\max_{Y \subseteq \cN_2} f(\hat{X} \dotcup Y)
	=
	\max_{Y \subseteq \cN_2} f(X'' \dotcup Y)
	\leq
	2f(X'' \dotcup Y'')
	\leq
	f(X' \dotcup Y')
	\leq
	\max_{Y \subseteq \cN_2} f(X' \dotcup Y)
	\enspace.
\]
Since the inequality $\max_{Y \subseteq \cN_2} f(\hat{X} \dotcup Y) \leq \max_{Y \subseteq \cN_2} f(X' \dotcup Y)$ trivially applies also when $\hat{X} = X'$, we get that this inequality always hold, and therefore, with probability at least $1 - \eps / [8(|\cN_2| + 1)]$ we must have
\[
	\max_{Y \subseteq \cN_2} f(\hat{X} \dotcup Y)
	\leq
	(4 + \eps/2)\tau
\]
because Theorem~\ref{thm:min_max_unconstrained_fail} guarantees that this inequality holds with at least this probability when $\hat{X}$ is replaced with $X'$. Furthermore, since we always have $\tau = \min_{X \subseteq \cN_1} \max_{Y \subseteq \cN_2} f(X \dotcup Y) \leq \max_{Y \subseteq \cN_2} f(\hat{X} \dotcup Y)$, the inequality $\max_{Y \subseteq \cN_2} f(\hat{X} \dotcup Y) \leq \max_{Y \subseteq \cN_2} f(X' \dotcup Y)$ also shows that $\hat{X}$ falls within the range $[\tau, (4 + \eps)\tau]$ whenever $X'$ falls within the this range. 

Next, we need to prove a second upper bound on $\max_{Y \subseteq \cN_2} f(\hat{X} \dotcup Y)$. By the choice of the output set in the last two lines of Algorithm~\ref{alg:fail_fix}, when this output set is $X'$, we have
\[
	\max_{Y \subseteq \cN_2} f(\hat{X} \dotcup Y)
	=
	\max_{Y \subseteq \cN_2} f(X' \dotcup Y)
	\leq
	2f(X' \dotcup Y')
	\leq
	4f(X'' \dotcup Y'')
	\leq
	4 \cdot \max_{Y \subseteq \cN_2} f(X'' \dotcup Y)
	\enspace.
\]
Since the non-negativity of $f$ implies that the inequality $\max_{Y \subseteq \cN_2} f(\hat{X} \dotcup Y) \leq 4 \cdot \max_{Y \subseteq \cN_2} f(X'' \dotcup Y)$ applies also when $\hat{X} = X''$, we get by Theorem~\ref{thm:min_max_disjoint},
\[
	\max_{Y \subseteq \cN_2} f(\hat{X} \dotcup Y)
	\leq
	4 \cdot \max_{Y \subseteq \cN_2} f(X'' \dotcup Y)
	\leq
	4(|\cN_2| + 1)\tau
	\enspace.
\]

We are now ready to prove that the expectation of the expression $\max_{Y \subseteq \cN_2} f(\hat{X} \dotcup Y)$ falls within the range $[\tau, (4 + \eps)\tau]$ as is guaranteed by Theorem~\ref{thm:min_max_unconstrained}. The expectation is at least the lower end of this range because, as mentioned above, it always holds that $\tau = \min_{X \subseteq \cN_1} \max_{Y \subseteq \cN_2} f(X \dotcup Y) \leq \max_{Y \subseteq \cN_2} f(\hat{X} \dotcup Y)$. Additionally, by the law of total expectation and the two above proved upper bounds on $\max_{Y \subseteq \cN_2} f(\hat{X} \dotcup Y)$,
\begin{align*}
	\bE\left[\max_{Y \subseteq \cN_2} f(\hat{X} \dotcup Y)\right]
	\leq{} &
	\left(1 - \frac{\eps}{8(|\cN_2| + 1)}\right) \cdot \left(4 + \frac{\eps}{2}\right)\tau + \frac{\eps}{8(|\cN_2| + 1)} \cdot 4(|\cN_2| + 1)\tau\\
	={} &
	\left[\left(4 + \frac{\eps}{2}\right) + \frac{\eps}{2}\right]\tau
	=
	(4 + \eps)\tau
	\enspace.
	\qedhere
\end{align*}
\end{proof}

It remains to prove Theorem~\ref{thm:min_max_unconstrained_fail}. The algorithm that we use for this purpose is given as Algorithm~\ref{alg:min_max_4_algorithm}. We note that the function $g$ defined by this algorithm is the average of $m$ submodular functions (since $f$ is submodular once the subset of $\cN_2$ in the argument set is fixed), and therefore, $g$ is also submodular.
\begin{algorithm}
\caption{Estimating the Min-Max via Random Subsets $(\eps)$} \label{alg:min_max_4_algorithm}
Let $n_1 = |\cN_1|$ and $n_2 = |\cN_2|$, and pick $m = \lceil 3200\eps^{-2} [(n_1 + 1) \ln 2 + \ln (n_2 + 1) + \ln(8 / \eps)] \rceil$ uniformly random (and independent) subsets $Y_1, Y_2, \dotsc, Y_m$ of $\cN_2$.\\
Define a function $g \colon 2^{\cN_1} \to \nnR$ as follows. For every $X \subseteq \cN_1$, $g(X) \triangleq \tfrac{1}{m} \sum_{i = 1}^m f(X \dotcup Y_i)$.\\
Use an unconstrained submodular minimization algorithm to find a set $X' \subseteq \cN_1$ minimizing $g(X')$.\\
\Return the set $X'$ and the value $(4 + \eps/2) \cdot g(X')$.
\end{algorithm}

The analysis of Algorithm~\ref{alg:min_max_4_algorithm} uses the following known lemma.
\begin{lemma}[Lemma~2.2 of~\cite{feige2011maximizing}] \label{lem:sampling}
Given a submodular function $f\colon 2^\cN \to \nnR$ and two sets $A, B \subseteq \cN$, if $A(p)$ and $B(q)$ are independent random subsets of $A$ and $B$, respectively, such that $A(p)$ includes every element of $A$ with probability $p$ (not necessarily independently), and $B(q)$ includes every element of $B$ with probability $q$ (again, not necessarily independently), then
\[
	\bE[f(A(p) \cup B(q))]
	\geq
	(1 - p)(1 - q) \cdot f(\varnothing) + p(1 - q) \cdot f(A) + (1 - p)q \cdot f(B) + pq \cdot f(A \cup B)
	\enspace.
\]
\end{lemma}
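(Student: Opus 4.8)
The plan is to reduce the two-set statement to a one-set special case and then dispatch that special case by identifying the worst joint distribution compatible with the prescribed marginals. Concretely, I would first isolate the following single-set claim: for a submodular $\phi\colon 2^{\cN} \to \bR$, an arbitrary set $A \subseteq \cN$, and a random subset $A(p) \subseteq A$ in which every element of $A$ appears with probability $p$ (not necessarily independently), one has $\bE[\phi(A(p))] \geq (1 - p)\phi(\varnothing) + p\,\phi(A)$. Everything else follows from this claim by conditioning.

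To see the reduction, I would condition on the realization of $A(p)$. Since $A(p)$ and $B(q)$ are independent, for any fixed realized set $A'$ the set $B(q)$ still includes each element of $B$ with probability $q$; applying the single-set claim to the submodular function $S \mapsto f(S \cup A')$ on ground set $B$ gives $\bE_{B(q)}[f(A' \cup B(q))] \geq (1 - q) f(A') + q\,f(A' \cup B)$. Taking expectation over $A(p)$ yields $\bE[f(A(p) \cup B(q))] \geq (1 - q)\,\bE[f(A(p))] + q\,\bE[f(A(p) \cup B)]$. Now I would apply the single-set claim twice more, to the submodular functions $S \mapsto f(S)$ and $S \mapsto f(S \cup B)$ on ground set $A$ (the possible overlap of $A$ and $B$ is harmless, as these functions are submodular on $A$ regardless), obtaining $\bE[f(A(p))] \geq (1-p) f(\varnothing) + p f(A)$ and $\bE[f(A(p) \cup B)] \geq (1-p) f(B) + p f(A \cup B)$. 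Substituting these two bounds into the previous inequality and expanding reproduces exactly the four-term right-hand side of the lemma.

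It remains to prove the single-set claim, and this is where the real work lies: because the elements of $A(p)$ need not be independent, the tempting induction that conditions on a single element fails, since conditioning distorts the marginals of the remaining elements. Instead I would view the quantity $\bE[\phi(A(p))]$ as a linear functional of the joint distribution $\mu$ of $A(p)$, and minimize it over the polytope of all distributions on $2^A$ whose element-marginals all equal $p$. The key structural step is an uncrossing argument: if the minimizing $\mu$ places mass on two incomparable sets $S$ and $T$, shifting that mass onto $S \cap T$ and $S \cup T$ leaves every element-marginal unchanged while, by submodularity ($\phi(S \cap T) + \phi(S \cup T) \leq \phi(S) + \phi(T)$), not increasing the objective. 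Hence some minimizer is supported on a chain. A chain-supported distribution in which every element of $A$ has marginal exactly $p \in (0,1)$ must assign positive mass only to $\varnothing$ and to $A$ (any intermediate set would force two elements to have different marginals), so its weights are $1 - p$ on $\varnothing$ and $p$ on $A$. Therefore the minimum value of $\bE[\phi(A(p))]$ is exactly $(1 - p)\phi(\varnothing) + p\,\phi(A)$, proving the claim; the boundary cases $p \in \{0, 1\}$ are immediate.

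The main obstacle, as noted, is the single-set claim under non-independence; the uncrossing/extreme-point analysis is what tames it, and it is equivalent to the standard fact that the convex closure of a submodular function coincides with its Lovász extension, evaluated here at the uniform marginal vector $p \cdot \characteristic_A$, whose induced threshold distribution is precisely $(1-p)\,\delta_{\varnothing} + p\,\delta_A$. Once the single-set claim is in hand, the two-set lemma is a short, purely mechanical substitution as above.
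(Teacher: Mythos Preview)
The paper does not prove this lemma; it is quoted as Lemma~2.2 of~\cite{feige2011maximizing} and used as a black box in the analysis of Algorithm~\ref{alg:min_max_4_algorithm}. Your argument is correct and is the standard one: the conditioning reduction to the single-set inequality is clean, and the single-set inequality $\bE[\phi(A(p))] \geq (1-p)\phi(\varnothing) + p\,\phi(A)$ is precisely the statement that the convex closure of a submodular function coincides with its Lov\'asz extension, evaluated at the constant vector $p \cdot \characteristic_A$. The uncrossing sketch you give for this last fact is the right idea (mass shifted from incomparable $S,T$ to $S\cap T,S\cup T$ preserves all element-marginals and does not increase the expectation by submodularity), and your explicit identification of it with the Lov\'asz-extension characterization closes the termination issue that a bare uncrossing argument would otherwise have to address.
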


Given a vector $\vx \in [0, 1]^{\cN_2}$, we define $\RSet(\vx)$ to be a random subset of $\cN_2$ that includes every element $u \in \cN_2$ with probability $x_u$, independently. Given a set $S \subseteq \cN_2$, it will also be useful to denote by $\characteristic_S$ the characteristic vector of $S$, i.e., the vector in $[0, 1]^{\cN_2}$ that has $1$ in the coordinates corresponding to the elements of $S$, and $0$ in the other coordinates. Using this notation, we can now define a function $h\colon 2^{\cN_1} \to \nnR$ as follows. For every set $X \subseteq \cN_1$, $h(X) \triangleq \bE[f(X \dotcup \RSet(\nicefrac{1}{2} \cdot \characteristic_{\cN_2}))]$.
The following lemma shows that $h(X)$ is related to $\max_{Y \subseteq \cN_2} f(X \dotcup Y)$.
\begin{lemma} \label{lem:g_approx_h}
For every set $X \subseteq \cN_1$,
$
	\max_{Y \subseteq \cN_2} f(X \dotcup Y) \leq 4  h(X) 
$.
\end{lemma}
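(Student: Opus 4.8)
The goal is to show that $\max_{Y \subseteq \cN_2} f(X \dotcup Y) \leq 4 h(X)$, where $h(X) = \bE[f(X \dotcup \RSet(\nicefrac{1}{2} \cdot \characteristic_{\cN_2}))]$ is the expected value of $f$ when each element of $\cN_2$ is included independently with probability $1/2$. The plan is to fix an arbitrary set $X \subseteq \cN_1$ together with a maximizer $Y^* \in \arg\max_{Y \subseteq \cN_2} f(X \dotcup Y)$, and to exhibit $h(X)$ as being at least a constant fraction of $f(X \dotcup Y^*)$ by invoking the sampling inequality of Lemma~\ref{lem:sampling}.

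First I would apply Lemma~\ref{lem:sampling} with the ground set taken to be $\cN_2$ (treating $X$ as fixed, so that $f(X \dotcup \cdot)$ is a non-negative submodular function of its $\cN_2$-argument by disjoint submodularity). The natural choice is $A = Y^*$ and $B = \cN_2 \setminus Y^*$, with sampling probabilities $p = q = 1/2$. With these choices, $A(p) \cup B(q)$ is distributed exactly as $\RSet(\nicefrac{1}{2} \cdot \characteristic_{\cN_2})$, a uniformly random subset of $\cN_2$, so the left-hand side of the lemma is precisely $h(X)$ (after prepending the fixed $X$ to every argument). The right-hand side becomes a convex combination, with coefficient $1/4$ on each of the four terms $f(X \dotcup \varnothing)$, $f(X \dotcup Y^*)$, $f(X \dotcup (\cN_2 \setminus Y^*))$, and $f(X \dotcup \cN_2)$. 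Dropping the three terms other than the one involving $Y^*$ by non-negativity of $f$ immediately yields $h(X) \geq \tfrac{1}{4} f(X \dotcup Y^*) = \tfrac{1}{4}\max_{Y \subseteq \cN_2} f(X \dotcup Y)$, which rearranges to the claimed bound.

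The one subtlety to get right is that Lemma~\ref{lem:sampling} is stated for a submodular function on a single ground set, whereas $h$ and the max are phrased in terms of $f$ on $\cN_1 \dotcup \cN_2$ with $X$ held fixed. I would handle this cleanly by defining the auxiliary function $f_X(S) \triangleq f(X \dotcup S)$ for $S \subseteq \cN_2$, noting that $f_X$ is non-negative and submodular because $f$ is disjointly submodular (submodular when restricted to $\cN_2$), and then applying the lemma to $f_X$. Since $A = Y^*$ and $B = \cN_2 \setminus Y^*$ partition $\cN_2$, the two independent samples $A(1/2)$ and $B(1/2)$ together realize each element of $\cN_2$ independently with probability $1/2$, matching the definition of $\RSet(\nicefrac12 \cdot \characteristic_{\cN_2})$ exactly.

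I do not expect any real obstacle here: the result is essentially a one-line consequence of the sampling lemma once the bookkeeping of the ground set and the partition $A, B$ is set up correctly. The only place to be careful is to verify that the distribution of $A(1/2) \cup B(1/2)$ genuinely coincides with a uniformly random subset of $\cN_2$ (which holds precisely because $\{Y^*, \cN_2 \setminus Y^*\}$ is a partition and the two samples are independent), and to remember that all four terms on the right-hand side carry the fixed prefix $X$, so that each is a legitimate nonnegative value of $f$.
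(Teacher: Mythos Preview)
Your proposal is correct and follows essentially the same approach as the paper: define the auxiliary function $f_X$ (the paper calls it $r_X$), partition $\cN_2$ into the maximizer $Y^*$ and its complement, apply Lemma~\ref{lem:sampling} with $p=q=\tfrac{1}{2}$, and discard three of the four terms by non-negativity. The bookkeeping you flag (that $A(1/2)\cup B(1/2)$ is distributed as a uniformly random subset of $\cN_2$, and that disjoint submodularity makes $f_X$ submodular on $\cN_2$) is exactly what the paper relies on as well.
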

\begin{proof}
Let us denote by $Y'(X)$ an arbitrary set in $\arg \max_{Y \subseteq \cN_2} f(X \dotcup Y)$, and define $r_X(Y) \triangleq f(X \cup Y)$. Then,
\begin{align*}
	h(X)
	={} &
	\bE[f(X \dotcup \RSet(\nicefrac{1}{2} \cdot \characteristic_{\cN_2})]
	=
	\bE[r_X(\RSet(\nicefrac{1}{2} \cdot \characteristic_{Y'(X)}) \cup \RSet(\nicefrac{1}{2} \cdot \characteristic_{\cN_2 \setminus Y'(X)})]\\
	\geq{} &
	\tfrac{1}{4} r_X(Y'(X))
	=
	\tfrac{1}{4} f(X \dotcup Y'(X))
	=
	\tfrac{1}{4} \cdot \max_{Y \subseteq \cN_2} f(X \dotcup Y)
	\enspace,
\end{align*}
where the inequality follows from Lemma~\ref{lem:sampling} and the observation that for any fixed set $X \subseteq \cN_1$ the function $r_X$ is a non-negative submodular function.
\end{proof}

The last lemma shows that the function $h$ is useful. The following lemma complements the picture by showing that the function $g$ defined by Algorithm~\ref{alg:min_max_4_algorithm} is a good approximation of $h$.
\begin{lemma} \label{lem:surrogate_function}
With probability at least $1 - \eps / [8(n_2 + 1)]$, for every set $X \subseteq \cN_1$ (at the same time) we have $|g(X) - h(X)| \leq (\eps/20) \cdot h(X)$.
\end{lemma}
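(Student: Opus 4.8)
The plan is to recognize $g(X)$ as an empirical estimator of $h(X)$ and to prove that it concentrates around its mean, uniformly over all $X \subseteq \cN_1$, via a standard Hoeffding bound combined with a union bound. First I would observe that, since $\RSet(\nicefrac{1}{2} \cdot \characteristic_{\cN_2})$ is exactly a uniformly random subset of $\cN_2$, each of the independent samples $f(X \dotcup Y_i)$ drawn by Algorithm~\ref{alg:min_max_4_algorithm} has expectation precisely $h(X)$. Hence, for every fixed $X$, the quantity $g(X) = \tfrac{1}{m}\sum_{i = 1}^m f(X \dotcup Y_i)$ is the average of $m$ i.i.d.\ random variables whose common mean is $h(X)$.

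The crucial ingredient for turning this into a \emph{multiplicative} guarantee is a bound on the range of each sample relative to its mean, and here I would invoke Lemma~\ref{lem:g_approx_h}. Since $f(X \dotcup Y_i) \leq \max_{Y \subseteq \cN_2} f(X \dotcup Y) \leq 4h(X)$ and $f \geq 0$, every sample lies in the interval $[0, 4h(X)]$. (If $h(X) = 0$, then all the samples vanish, so $g(X) = h(X) = 0$ and the claimed inequality holds trivially; thus I may assume $h(X) > 0$.) Normalizing each sample by $4h(X)$ to obtain $[0,1]$-valued variables with common mean $\tfrac{1}{4}$, the target deviation $\tfrac{\eps}{20}h(X)$ becomes a deviation of $(\eps/20)/4 = \eps/80$ for the normalized average, so two-sided Hoeffding yields, for each fixed $X$,
\[
	\Pr\!\left[|g(X) - h(X)| > \tfrac{\eps}{20}\,h(X)\right]
	\leq
	2\exp\!\left(-2m\cdot(\eps/80)^2\right)
	=
	2\exp\!\left(-m\eps^2/3200\right)
	\enspace.
\]

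Finally I would substitute the value of $m$ prescribed by Algorithm~\ref{alg:min_max_4_algorithm}. Since $m\eps^2/3200 \geq (n_1 + 1)\ln 2 + \ln(n_2 + 1) + \ln(8/\eps)$, the per-$X$ failure probability is at most $2 \cdot 2^{-(n_1 + 1)} \cdot (n_2 + 1)^{-1} \cdot (\eps/8) = 2^{-n_1} \cdot \eps/[8(n_2 + 1)]$. A union bound over the $2^{n_1}$ subsets $X \subseteq \cN_1$ then gives total failure probability at most $\eps/[8(n_2 + 1)]$, exactly as claimed. The main (and essentially only) obstacle is obtaining \emph{relative} rather than additive concentration: without an a priori cap, a single atypically large value of $f(X \dotcup Y_i)$ could push $g(X)$ far from $h(X)$ in multiplicative terms. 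This is precisely what Lemma~\ref{lem:g_approx_h} rules out by bounding each sample by $4h(X)$, and the constants $3200$ and $\eps^{-2}$ in the definition of $m$ are tuned so that the factor $4$ from that lemma, the factor $2$ from two-sided Hoeffding, and the size $2^{n_1}$ of the union bound all fit together to produce the stated probability.
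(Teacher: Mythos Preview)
Your proposal is correct and follows essentially the same approach as the paper: fix $X$, use Lemma~\ref{lem:g_approx_h} to bound each sample $f(X \dotcup Y_i)$ in $[0,4h(X)]$, apply two-sided Hoeffding to obtain the $2e^{-m\eps^2/3200}$ bound, plug in the value of $m$, and union bound over the $2^{n_1}$ sets $X$. The only cosmetic difference is that you normalize by $4h(X)$ before invoking Hoeffding (and explicitly dispose of the $h(X)=0$ case), whereas the paper applies Hoeffding directly with range $[0,4h(X)]$; the resulting exponent is identical.
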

\begin{proof}
Fix some set $X \subseteq \cN_1$, and let us define $Z_i \triangleq f(X \dotcup Y_i)$ for every integer $1 \leq i \leq m$. We would like to study the properties of the random variables $Z_1, Z_2, \dotsc, Z_m$. First, note that these random variables are independent since the sets $Y_1, Y_2, \dotsc, Y_m$ are chosen independently by Algorithm~\ref{alg:min_max_4_algorithm}. Second, by the definition of $h$ and the distribution of $Y_i$, $\bE[Z_i] = \bE[f(X \dotcup Y_i)] = h(X)$. We would also like to bound the range of values that the random variables $Z_1, Z_2, \dotsc, Z_m$ can take. On the one hand, these random variables are non-negative since $f$ is non-negative. On the other hand, $Z_i = f(X \dotcup Y_i) \leq \max_{Y \subseteq \cN_2} f(X \dotcup Y) \leq 4h(X)$, where the second inequality holds by Lemma~\ref{lem:g_approx_h}.
%

Given the above proved properties of the random variables $Z_1, Z_2, \dotsc, Z_m$, Hoeffding's inequality shows that
\begin{align*}
	\Pr[|g(X) - h(X)&| \leq (\eps/20) \cdot h(X)]
	=
	\Pr\mleft[\left|\tfrac{1}{m} \sum_{i = 1}^m Z_i - \bE\mleft[\tfrac{1}{m} \sum_{i = 1}^m Z_i\mright]\right| > (\eps/20) \cdot h(X)\mright]\\
	\leq{} &
	2e^{-\frac{2m^2[(\eps/20) \cdot h(X)]^2}{\sum_{i = 1}^m [4h(X)]^2}}
	=
	2e^{-\frac{m\eps^2}{3200}}
	\leq
	2e^{-(n_1 + 1) \ln 2 - \ln (n_2 + 1) - \ln(8 / \eps)}
	=
	2^{-n_1} \cdot \frac{\eps}{8(n_2 + 1)}
	\enspace,
\end{align*}
where the last inequality follows from the definition of $m$. The lemma now follows from the last inequality by the union bound since $X$ was chosen as an arbitrary subset of $\cN_1$, and there are only $2^{n_1}$ such subsets.
\end{proof}

Using the above lemmata, we can now prove Theorem~\ref{thm:min_max_unconstrained_fail}.
\begin{proof}[Proof of Theorem~\ref{thm:min_max_unconstrained_fail}]
Let us denote by $X^*$ a set minimizing $\max_{Y \subseteq \cN_2} f(X^* \dotcup Y)$. By the definition of $g$ and the choice of $X'$ by Algorithm~\ref{alg:min_max_4_algorithm},
\[
	g(X')
	\leq
	g(X^*)
	=
	\tfrac{1}{m} \sum_{i = 1}^m f(X^* \dotcup Y_i)
	\leq
	\max_{Y \subseteq \cN_2} f(X^* \dotcup Y)
	=
	\min_{X \subseteq \cN_1} \max_{Y \subseteq \cN_2} f(X^* \dotcup Y)
	=
	\tau
	\enspace.
\]
Let us now denote by $\cE$ the event that $|g(X) - h(X)| \leq (\eps/20) \cdot h(X)$ for every set $X \subseteq \cN$. By Lemma~\ref{lem:surrogate_function}, $\cE$ happens with probability at least $1 - \eps / [8(n_2 + 1)]$. Furthermore, conditioned on $\cE$, we have
\begin{equation} \label{eq:set_bound}
	\max_{Y \subseteq \cN_2} f(X' \dotcup Y)
	\leq
	4 h(X')
	\leq
	\frac{4g(X')}{1 - \eps/20}
	\leq
	\frac{4\tau}{1 - \eps/20}
	\leq
	(4 + \eps/2)\tau
	\enspace,
\end{equation}
where the first inequality hold by Lemma~\ref{lem:g_approx_h}, and the last inequality holds for $\eps \in (0, 1/2)$. Since we always also have $\max_{Y \subseteq \cN_2} f(X' \dotcup Y) \geq \min_{X \subseteq \cN_1} \max_{Y \subseteq \cN_2} f(X \dotcup Y) = \tau$, the above inequality already proves that $\max_{Y \subseteq \cN_2} f(\hat{X} \dotcup Y)$ falls within the range $[\tau, (4 + \eps/2)\tau]$ whenever the event $\cE$ happens.

We now would like to show that the output value $(4 + \eps/2) \cdot g(X')$ of Algorithm~\ref{alg:min_max_4_algorithm} also falls within this range when the event $\cE$ happens. Since we already proved that $g(X') \leq \tau$, all we need to show is that $(4 + \eps/2) \cdot g(X')$ is at least $\tau$ condition on $\cE$. This is indeed the case since Inequality~\eqref{eq:set_bound} implies
\[
	(4 + \eps/2) \cdot g(X')
	\geq
	\frac{4}{1 - \eps/20} \cdot g(X')
	\geq
	\max_{Y \subseteq \cN_2} f(X' \dotcup Y)
	\geq
	\tau
	\enspace,
\]
where the first inequality holds for $\eps \in (0, 1/2)$. In conclusion, we have shown that when the event $\cE$ happens the value $(4 + \eps/2) \cdot g(X')$ returned by Algorithm~\ref{alg:min_max_4_algorithm} and the expression $\max_{Y \subseteq \cN_2} f(\hat{X} \dotcup Y)$ both fall within the range $(4 + \eps/2)$. Since the probability of the event $\cE$ is at least $1 - \eps / [8(n_2 + 1)]$, to complete the proof of the theorem it only remains to show that the expectation of $(4 + \eps/2) \cdot g(X')$ falls within the range $[\tau, (4 + \eps/2)\tau]$, which is what we do in the rest of this proof.

The inequality $\bE[(4 + \eps/2) \cdot g(X')] \leq (4 + \eps/2)\tau$ follows immediately from the above proof that we deterministically have $g(X') \leq \tau$. 
%
Using Inequality~\eqref{eq:set_bound}, we can also get that, conditioned on $\cE$,
\begin{align*}
	g(X')
	\geq{} &
	\frac{(1 - \eps/20) \cdot \max_{Y \subseteq \cN_2} f(X' \dotcup Y)}{4}\\
	\geq{} &
	\frac{(1 - \eps/20) \cdot \min_{X \subseteq \cN_1} \max_{Y \subseteq \cN_2} f(X \dotcup Y)}{4}
	=
	\frac{(1 - \eps/20)\tau}{4}
	\enspace.
\end{align*}
Thus, we can use the law of total expectation to get
\[
	\bE[g(X')]
	\geq
	\Pr[\cE] \cdot \bE[g(X') \mid \cE]
	\geq
	\left(1 - \frac{\eps}{8(n_2 + 1)}\right) \cdot \frac{(1 - \eps/20)\tau}{4}
	\geq
	\left(\frac{1 - 9\eps / 80}{4}\right)\tau
	\enspace,
\]
which implies
\[
	\bE[(4 + \eps/2) \cdot g(X')]
	\geq
	\frac{(4 + \eps/2) \cdot (1 - 9\eps / 80)}{4} \cdot \tau
	\geq
	\tau
	\enspace,
\]
where the second inequality holds for $\eps \in [0, 1/2]$.
\end{proof}

\subsection{Proof of Theorem~\headerref{thm:min_max_sqrt}} \label{sec:min_max_sqrt}

In this section we prove Theorem~\ref{thm:min_max_sqrt}, which we repeat here for convenience.
\thmMinMaxSqrt*

The algorithm that we use to prove Theorem~\ref{thm:min_max_sqrt} is given as Algorithm~\ref{alg:square_root_approximation}. In this algorithm, and in its analysis, we use $n_1$ as a shorthand for $\cN_1$.
\begin{algorithm}
\DontPrintSemicolon
\caption{Iterative $X$ Growing} \label{alg:square_root_approximation}
Use an algorithm for submodular minimization to find a set $X_0 \in \arg \min_{X \subseteq \cN_1} f(X)$.\\
\For{$i = 1$ \KwTo $n_1 + 1$}
{
	Use the given oracle to find a set $Y_i \in \cF_2$ maximizing $f(X_{i - 1} \dotcup Y_i)$ up to a factor of $\alpha$ among all sets in $\cF_2$.\\
	Use an algorithm for submodular minimization to find a set $X'_i \in \arg \min_{X \subseteq \cN_1} [\sqrt{n_1} \cdot f(X \cup X_{i - 1}) + f(X \dotcup Y_i)]$.\label{line:prime_selection}\\
	\lIf{$X'_i \subseteq X_{i - 1}$}{\Return the set $X_{i - 1}$ and the value $\alpha \cdot f(X_{i - 1} \dotcup Y_i)$.}
	\lElse{Let $X_i \gets X_{i - 1} \cup X'_i$.}
}
\end{algorithm}

It will be useful to denote below by $X^*$ an arbitrary set in $\arg \min_{X \subseteq \cN_1} \max_{Y \in \cF_2} f(X \cup Y)$. Notice that the definitions of $X^*$ and $\tau$ imply together $\tau = \max_{Y \in \cF_2} f(X^* \cup Y)$. Thus, $f(X^* \cup Y) \leq \tau$ for every set $Y \in \cF_2$, and in particular, since $\varnothing \in \cF_2$ by assumption, $f(X^*) \leq \tau$. Below, we use $I$ to denote the number of iterations completed by the loop of Algorithm~\ref{alg:square_root_approximation}. Since the size of $X_i$ increases following every completed iteration, $I \leq n_1$. This implies that iteration $I + 1$ started, but stopped before completing since $X'_{I + 1}$ was a subset of $X_I$. Hence, $X_I$ is the output set of Algorithm~\ref{alg:square_root_approximation}. We begin the analysis of Algorithm~\ref{alg:square_root_approximation} with the following lemma.

\begin{lemma} \label{lem:X_increase_square}
For every integer $1 \leq i \leq I$,
\[
	f(X_i)
	=
	f(X_{i - 1} \cup X'_i)
	\leq
	f((X^* \cap X'_i) \cup X_{i - 1}) + \tau / \sqrt{n_1}
	\enspace.
\]
\end{lemma}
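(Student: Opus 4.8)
The plan is to exploit the optimality of $X'_i$ in the submodular minimization on Line~\ref{line:prime_selection} and then convert the resulting inequality into the claimed bound using the submodularity of $f$. The opening equality is immediate: for $i \le I$ the algorithm reaches the \texttt{else} branch, so $X_i = X_{i-1} \cup X'_i$ by definition.

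For the inequality, I would first compare the minimizer $X'_i$ against the feasible candidate $X^* \cap X'_i \subseteq \cN_1$. Since $X'_i$ minimizes $\sqrt{n_1} \cdot f(X \cup X_{i-1}) + f(X \dotcup Y_i)$ over $X \subseteq \cN_1$, we have
\[
\sqrt{n_1} \cdot f(X'_i \cup X_{i-1}) + f(X'_i \dotcup Y_i) \le \sqrt{n_1} \cdot f((X^* \cap X'_i) \cup X_{i-1}) + f((X^* \cap X'_i) \dotcup Y_i) \enspace.
\]
Rearranging and dividing by $\sqrt{n_1}$ isolates the target left-hand side and leaves the error term $\tfrac{1}{\sqrt{n_1}}\bigl[f((X^* \cap X'_i) \dotcup Y_i) - f(X'_i \dotcup Y_i)\bigr]$, so it remains only to show that this bracketed difference is at most $\tau$.

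The main obstacle — and the reason this last step is not entirely routine — is that $f$ is never assumed monotone over $\cN_1$, so one cannot simply bound $f((X^* \cap X'_i) \dotcup Y_i)$ by enlarging $X^* \cap X'_i$ to $X^*$. Instead I would invoke submodularity directly on the pair $A = X'_i \cup Y_i$ and $B = X^* \cup Y_i$, whose intersection is $(X^* \cap X'_i) \cup Y_i$ and whose union is $X^* \cup X'_i \cup Y_i$ (the cross terms vanish because $\cN_1$ and $\cN_2$ are disjoint ground sets). The submodular inequality $f(A) + f(B) \ge f(A \cup B) + f(A \cap B)$ then rearranges to
\[
f((X^* \cap X'_i) \dotcup Y_i) - f(X'_i \dotcup Y_i) \le f(X^* \cup Y_i) - f(X^* \cup X'_i \cup Y_i) \le \tau \enspace,
\]
where the final step uses $f(X^* \cup Y_i) \le \tau$ (valid since $Y_i \in \cF_2$ and $X^*$ attains the outer $\min\max$) together with the non-negativity of $f$ applied to the term $f(X^* \cup X'_i \cup Y_i) \ge 0$. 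Substituting this bound into the error term yields $f(X'_i \cup X_{i-1}) \le f((X^* \cap X'_i) \cup X_{i-1}) + \tau/\sqrt{n_1}$, completing the argument.
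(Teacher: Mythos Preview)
Your proof is correct and follows essentially the same approach as the paper's own proof. Both arguments compare the minimizer $X'_i$ against the candidate $X^* \cap X'_i$, apply submodularity to the pair $X'_i \cup Y_i$ and $X^* \cup Y_i$ together with non-negativity to bound $f((X^* \cap X'_i) \dotcup Y_i) - f(X'_i \dotcup Y_i) \le f(X^* \dotcup Y_i) \le \tau$, and then rearrange; the only difference is the order in which the rearranging and the submodularity step are presented.
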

\begin{proof}
By the choice of $X'_i$, we have
\begin{align*}
	\sqrt{n_1} \cdot f(X'_i \cup X_{i - 1}) + f(X'_i \dotcup Y_i)
	\leq{} &
	\sqrt{n_1} \cdot f((X^* \cap X'_i) \cup X_{i - 1}) + f((X^* \cap X'_i) \dotcup Y_i)\\
	\leq{} &
	\sqrt{n_1} \cdot f((X^* \cap X'_i) \cup X_{i - 1}) + f(X^* \dotcup Y_i) + f(X'_i \dotcup Y_i)\\
	\leq{} &
	\sqrt{n_1} \cdot f((X^* \cap X'_i) \cup X_{i - 1}) + \tau + f(X'_i \dotcup Y_i)
	\enspace,
\end{align*}
where the second inequality follows from the submodularity and non-negativity of $f$, and the last inequality follows from the definition of $X^*$. The lemma now follows by rearranging the last inequality.
\end{proof}

\begin{corollary} \label{cor:X_value_square}
The output set $X_I$ of Algorithm~\ref{alg:square_root_approximation} obeys $f(X_I) \leq O(\sqrt{n_1}) \cdot \tau$.
\end{corollary}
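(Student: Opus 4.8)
The plan is to sum the per-iteration estimate of Lemma~\ref{lem:X_increase_square} over the $I \le n_1$ completed iterations. Writing $f(X_I) = f(X_0) + \sum_{i=1}^I [f(X_i) - f(X_{i-1})]$ and invoking the lemma for each difference, I would bound $f(X_i) - f(X_{i-1}) \le [f((X^* \cap X'_i) \cup X_{i-1}) - f(X_{i-1})] + \tau/\sqrt{n_1}$. Since the loop runs for $I \le n_1$ iterations, the $\tau/\sqrt{n_1}$ terms contribute at most $\sqrt{n_1}\,\tau$ in total, and the starting value is harmless because $X_0$ minimizes $f$ over $2^{\cN_1}$, so $f(X_0) \le f(X^*) \le \tau$. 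Thus everything reduces to controlling the ``competitor'' sum $\sum_{i=1}^I f(A_i \mid X_{i-1})$, where $A_i \triangleq (X^* \cap X'_i) \setminus X_{i-1}$ is the part of $X^*$ freshly added in iteration $i$.

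Next I would exploit the structure of the $A_i$. Because the sets $X_i$ only grow, once an element of $X^*$ enters the solution it never leaves, so the $A_i$ are pairwise disjoint subsets of $X^*$; moreover $X_0 \cup A_1 \cup \dots \cup A_{i-1} \subseteq X_{i-1}$. Diminishing marginals then give $f(A_i \mid X_{i-1}) \le f(A_i \mid X_0 \cup A_1 \cup \dots \cup A_{i-1})$, and these right-hand sides telescope to $f(X_0 \cup B) - f(X_0)$, where $B \triangleq \bigcup_{i=1}^I A_i \subseteq X^*$. Combining with the first step yields $f(X_I) \le f(X_0 \cup B) + \sqrt{n_1}\,\tau$, so it remains only to show $f(X_0 \cup B) = O(\sqrt{n_1}) \cdot \tau$ (and I would in fact hope for $O(\tau)$).

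The main obstacle is precisely this last bound. The clean target is $f(X_0 \cup X^*) \le f(X_0) + f(X^*) \le 2\tau$, which is immediate from submodularity and non-negativity. However, $B$ is only a \emph{subset} of $X^*$, and since $f$ is not assumed monotone one cannot pass from $f(X_0 \cup B)$ to $f(X_0 \cup X^*)$: a proper subset can genuinely carry a larger value, so the abstract inclusion $B \subseteq X^*$ together with $f(X^*) \le \tau$ does not suffice. Resolving this is where I expect the real work to lie. The bound must use the fact that $B$ is not an arbitrary subset of $X^*$ but is assembled from the augmentations $X'_i$, each chosen to minimize $\sqrt{n_1}\,f(X \cup X_{i-1}) + f(X \dotcup Y_i)$ on Line~\ref{line:prime_selection}; the heavy $\sqrt{n_1}$ weight on the first term prevents the algorithm from ever inflating $f(X \cup X_{i-1})$, so intuitively the elements of $X^*$ it actually incorporates keep $f(X_0 \cup B)$ within $O(\tau)$ of $f(X_0)$. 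Making this precise — likely by re-deriving the bound on $\sum_i f(A_i \mid X_{i-1})$ directly from the per-iteration optimality of $X'_i$ rather than from $B \subseteq X^*$ — is the crux; once $f(X_0 \cup B) = O(\sqrt{n_1})\,\tau$ is in hand, the corollary follows at once from $f(X_I) \le f(X_0 \cup B) + \sqrt{n_1}\,\tau$.
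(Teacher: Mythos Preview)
Your telescoping setup is essentially the paper's, and you correctly diagnose the obstacle: after reaching $f(X_I) \le f(\text{competitor}) + \sqrt{n_1}\,\tau$, the competitor set (whether your $X_0 \cup B$ or the paper's $X^* \cap X_I$) is only a subset of something related to $X^*$, and non-monotonicity blocks the obvious bound. But you stop exactly where the proof has to happen, and your suggested direction---re-deriving $\sum_i f(A_i\mid X_{i-1})$ from per-iteration optimality---is not how the paper closes the gap.

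The missing idea is a two-step ``uncrossing'' that uses only the optimality of the \emph{final} $X'_I$. First, the paper telescopes to $f(X^*\cap X_I)$ (via $f(X^*\cap X'_i\mid X_{i-1}) \le f(X^*\cap X'_i\mid X^*\cap X_{i-1})$, which telescopes exactly to $f(X^*\cap X_I)-f(X^*\cap X_0)$), and then applies submodularity in the form
\[
f(X^*\cap X_I) \le f(X^*) + f(X_I) - f(X^*\cup X_I)\enspace.
\]
This trades the intractable $f(X^*\cap X_I)$ for $f(X^*\cup X_I)$, which \emph{can} be controlled: compare $X'_I$ against $X^*\cup X'_I$ in the Line~\ref{line:prime_selection} minimization to get
\[
\sqrt{n_1}\cdot f(X^*\cup X_I) \ge \sqrt{n_1}\cdot f(X_I) - f(X^*\mid X'_I\dotcup Y_I) \ge \sqrt{n_1}\cdot f(X_I) - f(X^*) \ge \sqrt{n_1}\cdot f(X_I) - \tau\enspace.
\]
Plugging back in, the $f(X_I)$ terms cancel and you are left with $f(X_I) \le \sqrt{n_1}\,\tau + \tau + \tau/\sqrt{n_1} = O(\sqrt{n_1})\,\tau$. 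So the crux is not a sharper accounting of $B$, but the submodular uncrossing that converts the ``intersection'' term into a ``union'' term, together with one more application of the optimality of $X'_I$ (against the test point $X^*\cup X'_I$) to lower bound that union.
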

\begin{proof}
If $I = 0$, then $X_I = X_0$, and by definition we have $f(X_I) \leq f(X^*) \leq \tau$. Therefore, we may assume from now on $I \geq 1$. Using Lemma~\ref{lem:X_increase_square}, we now get
\begin{align*}
	f(X_I) - f(X_0)
	\leq{} &
	\sum_{i = 1}^I [f(X_i) - f(X_{i - 1})]
	\leq
	\sum_{i = 1}^I [f(X^* \cap X'_i \mid X_{i - 1}) + \tau / \sqrt{n_1}]\\
	\leq{} &
	\sum_{i = 1}^I f(X^* \cap X'_i \mid X^* \cap X_{i - 1}) + \sqrt{n_1} \cdot \tau
	=
	f(X^* \cap X_I \mid X^* \cap X_0) + \sqrt{n_1} \cdot \tau\\
	\leq{} &
	f(X^* \cap X_I) - f(X_0) + \sqrt{n_1} \cdot \tau
	\enspace,
\end{align*}
where the penultimate inequality holds by the submodularity of $f$ and the observation that $I \leq n_1$, and the last inequality holds by the definition of $X_0$. Adding $f(X_0)$ to be both sides of the last inequality yields
\[
	f(X_I) - \sqrt{n_1} \cdot \tau
	\leq
	f(X^* \cap X_I)
	\leq
	f(X^*) + f(X_I) - f(X^* \cup X_I)
	\leq
	\tau + f(X_I) - f(X^* \cup X_I)
	\enspace,
\]
where the second inequality follows from the submodularity of $f$, and last inequality follows from the definition of $X^*$. To lower bound the term $f(X^* \cup X_I)$, we observe that since $I \geq 1$, the definition of $X'_I$ implies
\begin{align*}
	f(X^* \cup X_I)
	={} &
	f((X^* \cup X'_I) \cup X_{I - 1})
	\geq
	f(X'_I \cup X_{I - 1}) - \frac{f(X^* \mid X'_I \dotcup Y_I)}{\sqrt{n_1}}\\
	\geq{} &
	f(X_I) - \frac{f(X^*)}{\sqrt{n_1}}
	\geq
	f(X_I) - \tau / \sqrt{n_1}
	\enspace,
\end{align*}
where the second inequality uses the submodularity and non-negativity of $f$, and the last inequality holds by the definition of $X^*$. The corollary now follows by combining this inequality with the previous one.
\end{proof}

We are now ready to prove Theorem~\ref{thm:min_max_sqrt}.
\begin{proof}[Proof of Theorem~\ref{thm:min_max_sqrt}]
Since Algorithm~\ref{alg:square_root_approximation} outputted the set $X_I$, we must have $X'_{I + 1} \subseteq X_I$. Furthermore, by the choices of $Y_{I + 1}$ and $X'_{I + 1}$,
\begin{align*}
	\alpha^{-1} \cdot \max_{Y \in \cF_2} f(X_I \dotcup Y&)
	\leq
	f(X_I \dotcup Y_{I + 1})
	=
	f(X_I) + f(Y_{I + 1} \mid X_I)
	\leq
	f(X_I) + f(Y_{I + 1} \mid X'_{I + 1})\\
	\leq{} &
	f(X_I) + [\sqrt{n_1} \cdot f(X^* \cup X_I) + f(X^* \dotcup Y_{I + 1}) - \sqrt{n_1} \cdot f(X'_{I + 1} \cup X_I) - f(X'_{I + 1})]\\
	\leq{} &
	f(X_I) + \sqrt{n_1} \cdot f(X^* \mid X_I) + f(X^* \dotcup Y_{I + 1})
	\enspace,
\end{align*}
where the second inequality follows from the submodularity of $f$, and the last inequality holds by the non-negativity of $f$. Observe now that Corollary~\ref{cor:X_value_square} guarantees $f(X_I) \leq O(\sqrt{n}) \cdot \tau$, and the definition of $X^*$ guarantees $f(X^* \cup Y_{I + 1}) \leq \tau$. Furthermore, using the submodularity and non-negativity of $f$, we also get $f(X^* \mid X_I) \leq f(X^*) \leq \tau$. Plugging all these observations into the previous inequality yields
\[
	\alpha^{-1} \cdot \max_{Y \in \cF_2} f(X_I \dotcup Y)
	\leq
	f(X_I \dotcup Y_{I + 1})
	\leq
	O(\sqrt{n_1}) \cdot \tau + \sqrt{n_1} \cdot \tau + \tau
	=
	O(\sqrt{n_1}) \cdot \tau
	\enspace.
\]
Multiplying the last inequality by $\alpha$, we get the upper bound on $\max_{Y \in \cF_2} f(X_I \dotcup Y)$ and $\alpha \cdot f(X_I \dotcup Y_{I + 1})$ (the value outputted by Algorithm~\ref{alg:square_root_approximation}) promised in the theorem. The promised lower bound on these expressions also holds since the definition of $Y_I$ implies
\[
	\alpha \cdot f(X_I \dotcup Y_{I + 1})
	\geq
	\max_{Y \in \cF_2} f(X_I \dotcup Y)
	\geq
	\min_{X \subseteq \cN_1} \max_{Y \in \cF_2} f(X \dotcup Y)
	=
	\tau
	\enspace.
	\qedhere
\]
\end{proof}

\section{Experiments}\label{sec:exp}
In this section  we discuss five machine-learning applications: efficient prompt engineering, ride-share difficulty kernelization, adverserial attack on image summarization, robust ride-share optimization, and prompt engineering for dialog state tracking. Each one of these applications necessitates either max-min or min-max optimization on a jointly submodular function (with a cardinality constraint on the maximization part). To demonstrate the robustness of our suggested methods in this work, we empirically compare them against a few benchmarks.

In the max-min optimization applications, we compare the algorithm from Theorem~\ref{thm:min_submodular} (named below {\OurAlgorithm}) against $4$ benchmarks: (i) ``Random'' choosoing a random set of $k$ elements from $\cN_2$ as the set $Y$; (ii) ``Max-Only'' using a maximization algorithm to find the a set $Y$ that is (approximately) optimal against $X = \varnothing$; (iii) ``Top-$k$'' selecting a set $Y$ consisting of the top $k$ singletons $y \in \cN_2$, where each singleton is evaluated based on the corresponding worst case set $X$; and (iv) ``Best-Response'' simulating a best response dynamic between the minimization and maximization players, and outputing the set used by the maximization player after a given number of iterations. 
The Best-Response method is a widely used concept in game theory and optimization, first introduced in the seminal work by Vondrak et al.~\cite{von1947theory}. 

In the min-max optimization applications, we study the algorithm from Theorem~\ref{thm:min_max_disjoint} (named below {\SingletonsAlg}) and a slightly modified version of the algorithm from Theorem~\ref{thm:min_max_sqrt} (named below {\IterativeAlg}). Out of the above $4$ benchmarks, the Random and Best-Response benchmarks still make sense in min-max settings with the natural adaptations. It was also natural to try to replace the Max-Only benchmark with a ``Min-Only'' benchmark, but such a benchmark would always output the empty set in our applications. Thus, we use instead a benchmark called ``Max-and-then-Min'' that returns a set $X$ that is optimal against the set $Y$ returned by Max-Only. See Appendix~\ref{app:implementation} for further detail about the various benchmarks, and the implementations of our algorithms. 



\subsection{Efficient prompt engineering} \label{ssc:prompt_engineering}
Consider the problem of designing efficient prompts for zero-shot in-context learning.
Following~\cite{si2023reliable}, we consider an open-domain question answering task: the goal is to answer questions from the SQuAD dataset~\cite{Rajpurkar2016squad} by prompting a large language model with $k$ relevant passages of text taken from a large corpus of Wikipedia articles. 
To get for each question an initial set of relevant candidate passages, $21$ million Wikipedia passages were embedded using a pretrained Contriever model~\cite{izacard2021contriever} and indexed using FAISS.\footnote{\url{https://github.com/facebookresearch/faiss}}
Then, for each question, the top $100$ passages were kept as candidates.

Large language models such as OpenAI's ChatGPT offer very impressive performance on natural language tasks via a public API. As the cost of making a prediction depends on the length of the input prompt, we propose to reduce the cost by \emph{jointly answering} similar questions with a common prompt, and thus, a single query to the GPT-3.5-turbo language model. To use this approach, we need to select a subset of passages that are effective on the set of \emph{answerable} questions, which we formulate as a combinatorial optimization problem. Specifically, let $\cN_1$ be a batch of questions and let $\cN_2$ be the union of all candidate passages.
(In general, $100 \leq |\cN_2| \leq 100 \cdot |\cN_1|$ since there may be significant overlap among candidates for questions on the same topic.)
Let $0 \leq s_{u,v} \leq 1$ be the cosine similarity between passage embedding $u$ and question embedding $v$.
Then, we define 


\begin{align}\label{eq:robust_qa_objective}
f(X\dotcup Y)={}&\sum_{v\in \cN_1 \setminus X}\max_{u\in Y}s_{u,v} + \beta\cdot\!\!\sum_{u\in \cN_1 \setminus X}\sum_{v\in Y}s_{u,v}  + \lambda\cdot|X|\enspace.
\end{align}

Here $\lambda \geq 0$ and $\beta \geq 0$ are regularization parameters. The first term represents how well the passages of $Y$ cover the questions in $\cN_1 \setminus X$. 
For small values of $\beta$, the second term ensures $f$ increases in $|Y|$, and the last term controls the size of $X$.

\begin{restatable}{lemma}{lemRE}\label{lemma:RE}
The objective function~\eqref{eq:robust_qa_objective} is a non-negative jointly-submodular function.
\end{restatable}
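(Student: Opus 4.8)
The plan is to prove the two assertions—non-negativity and joint submodularity—separately, and to establish joint submodularity by writing $f$ as a sum of three functions on the common ground set $\cN_1\dotcup\cN_2$ and checking each summand individually, since submodularity is preserved under non-negative linear combinations. Writing $X=Z\cap\cN_1$ and $Y=Z\cap\cN_2$ for a generic argument $Z\subseteq\cN_1\dotcup\cN_2$, the three summands are the coverage term $g_1(Z)=\sum_{v\in\cN_1\setminus X}\max_{u\in Y}s_{u,v}$, the bilinear term $g_2(Z)=\beta\sum_{u\in\cN_1\setminus X}\sum_{v\in Y}s_{u,v}$, and the size term $g_3(Z)=\lambda|X|$. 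Non-negativity is immediate: every $s_{u,v}$ lies in $[0,1]$ and $\beta,\lambda\geq0$, so each summand is non-negative and hence so is $f$. The third summand $g_3(Z)=\lambda|Z\cap\cN_1|$ is modular, hence trivially submodular.

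The crux is the first summand $g_1$, and the main obstacle is that the indicator $v\in\cN_1\setminus X$ couples the minimization ground set $\cN_1$ with the maximization ground set $\cN_2$ nonlinearly and makes $g_1$ non-monotone (adding a question $v$ to $X$ switches its contribution off). My plan is to isolate this phenomenon in a single reusable lemma: for any fixed $v\in\cN_1$ and any non-negative monotone submodular $c\colon2^{\cN_2}\to\nnR$, the ``kill-switch'' function $H_v(Z)\triangleq\mathbf{1}[v\notin Z]\cdot c(Z\cap\cN_2)$ is submodular on $2^{\cN_1\dotcup\cN_2}$. I would prove this by verifying the diminishing-returns inequality $H_v(e\mid S)\geq H_v(e\mid T)$ for $S\subseteq T$ and $e\notin T$, splitting on the type of $e$: if $e$ is a passage the marginal equals $c(e\mid S\cap\cN_2)$ when $v\notin S$ and $0$ once $v$ has entered the set, so the claim follows from submodularity of $c$ together with the non-negativity of its marginals (this also settles the delicate boundary case $v\notin S$, $v\in T$, where $H_v(e\mid S)=c(e\mid S\cap\cN_2)\geq0=H_v(e\mid T)$); if $e=v$ the marginal is $-c(S\cap\cN_2)$, which is non-increasing in the set by monotonicity of $c$; and if $e$ is any other question the marginal is $0$. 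Then $g_1=\sum_{v\in\cN_1}H_v$ with $c=c_v(Y)\triangleq\max_{u\in Y}s_{u,v}$, which is non-negative monotone submodular (a facility-location/coverage function), so $g_1$ is submodular.

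Finally, the second summand fits the very same template: $g_2=\beta\sum_{u\in\cN_1}H_u$ where now $c$ is the modular map $Y\mapsto\sum_{v\in Y}s_{u,v}$, which is clearly non-negative monotone submodular, so the same lemma gives submodularity of $g_2$. (As a sanity check, expanding $g_2$ yields a non-negative-weighted sum of single passage-membership indicators minus a non-negative-weighted sum of products $\mathbf{1}[u\in Z]\,\mathbf{1}[v\in Z]$; the former is modular and each such product is supermodular, so $g_2$ is modular minus supermodular, hence submodular.) Summing the submodular functions $g_1$, $g_2$, and $g_3$ then shows that $f$ is jointly submodular, and combined with the non-negativity of each summand this completes the proof.
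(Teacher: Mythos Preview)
Your proof is correct. Both you and the paper decompose $f$ into the same three non-negative summands and handle the modular term $\lambda|X|$ trivially, but you organize the remaining two terms differently. The paper treats the bilinear term $\beta\sum_{u\in\cN_1\setminus X}\sum_{v\in Y}s_{u,v}$ by recognizing it as the cut function of a directed bipartite graph (and citing the standard fact that such functions are submodular), and then verifies the coverage term $\sum_{v\in\cN_1\setminus X}\max_{u\in Y}s_{u,v}$ by a direct diminishing-returns case split on whether the added element lies in $\cN_1$ or $\cN_2$. You instead isolate a single reusable ``kill-switch'' lemma---that $H_v(Z)=\mathbf{1}[v\notin Z]\cdot c(Z\cap\cN_2)$ is submodular whenever $c$ is non-negative, monotone, and submodular---and then observe that \emph{both} non-trivial summands are sums of such $H_v$'s (with $c_v(Y)=\max_{u\in Y}s_{u,v}$ for the coverage term and $c_u(Y)=\sum_{v\in Y}s_{u,v}$ for the bilinear term). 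The case analysis inside your lemma is essentially the same as the paper's direct verification for the coverage term, so the underlying work is comparable; what your packaging buys is a self-contained argument that does not appeal to the cut-function fact and that makes transparent exactly which properties of $c$ are needed (non-negativity and monotonicity for the boundary cases, submodularity for the passage-addition case). The paper's route, in turn, is slightly shorter for the bilinear term since it invokes a known result off the shelf.
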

\begin{proof}
Observe that the objective function~\eqref{eq:robust_qa_objective} is a conical combination of three terms. Below we explain why each one of these terms is non-negative and jointly-submodular, which immediately implies that the entire objective function also has these properties.

The second term is $\sum_{u\in \cN_1 \setminus X}\sum_{v\in Y}s_{u,v}$. This term can be viewed as the cut function of a directed bipartite graph in which the elements of $\cN_1$ and $\cN_2$ form the two sides of the graph, and for every $u \in \cN_1$ and $v \in \cN_2$ the graph includes an edge from $v$ to $u$ whose weight is $s_{u, v}$. Directed graph functions are known to be non-negative and submodular over the set of elements of the graph, which translates into joint-submodularity in our terminology since the both the elements of $\cN_1$ and $\cN_2$ are vertices of the graph.

The third term is $|X|$, which is a non-negative linear function, and thus, also jointly-submodular.

It remains to consider the first term, namely $\sum_{v\in \cN_1 \setminus X}\max_{u\in Y}s_{u,v}$. This term is clearly non-negative, so we concentrate below on proving that it is jointly submodular. Recall that $f$ is jointly-submodular if
 \[
 	f(u \mid Y' \dotcup X') \geq f(u \mid Y \dotcup X)
 	\qquad
 	\forall\; X' \subseteq X \subseteq \cN_1, Y' \subseteq Y \subseteq  \cN_2, u \in (\cN_1 \dotcup \cN_2) \setminus (X \dotcup Y)
 	\enspace.
 \]
 To prove that our objective function obeys this inequality, there are two scenarios to consider, based on whether $u$ belongs to the set $\cN_1$ or $\cN_2$.

 \paragraph{Case $\mathbf{1}$: The element $u$ belongs to $\cN_1$.} Here,
	\[
		f(u\mid Y'\dotcup X')-f(u \mid Y\dotcup X)=\max_{v\in Y}s_{u,v}-\max_{v\in Y'}s_{u,v}\geq 0
		\enspace.
	\]

 \paragraph{Case $\mathbf{2}$: the element $u$ belongs to $\cN_2$.} Observe that, in this case, 
 \begin{align*}
		f(u\mid Y'\dotcup X')
		={} &
		\sum_{\cN_1 \setminus X'} \max\{0, s_{u, v} - \max_{v' \in Y'} s_{u, v'}\}\\
		\geq{} &
		\sum_{\cN_1 \setminus X} \max\{0, s_{u, v} - \max_{v' \in Y} s_{u, v}\}
		=
		f(u\mid Y\dotcup X)
		\enspace.
		\qedhere
	\end{align*}
\end{proof}

By solving the max-min optimization $\max_{Y \subseteq \cN_2,|Y| \leq k}\min_{X\in\mathcal{N}_1} f(X\dotcup Y)$, we get the set $X$ of answerable questions, and a small set $Y$ of effective passages.
In our experiments we set $\beta=10^{-3}$, $\lambda=0.8$, and $k=10$,
and we group the SQuAD test set into batches of $25$ questions.
In addition to the heterogeneity introduced by crude batching, we removed $\delta=25\%$ of the candidates from $\cN_2$, leading to some questions having no relevant passages.

Table~\ref{tab:qa_results} shows the performance of the prompts for GPT-3.5-turbo obtained by {\OurAlgorithm} and various benchmarks. Each method is evaluated in terms of exact match accuracy and F1 score between predicted and ground truth answers. As a baseline, we also consider using the common prompt returned by the retrieval algorithm, but making a \emph{separate prediction} for each question in the cluster. We see our proposed joint prediction with a single prompt increases accuracy while on average requiring only 5.3\% of the tokens per question compared to separate prediction. Moreover, {\OurAlgorithm} has the highest accuracy among all retrieval algorithms used for joint prediction. Figure~\ref{fig:qa_example} shows a qualitative example of joint prediction for a batch of questions.


\begin{table}[ht]
\caption{Open-domain question answering on SQuAD  using GPT-3.5-turbo. Best values are in \textbf{bold}.}\label{tab:qa_results}
\centering\small
\begin{tabular}{ccccc}
Prompting Method & Retrieval Algorithm & \ifdef{\noCiteInTable}{\hspace{-1mm}}{}Exact Match \% $\uparrow$ \ifdef{\noCiteInTable}{\hspace{-1mm}}{\hspace{-1mm}} & F1\% $\uparrow$ & Avg Tokens/Question $\downarrow$ {}\\ \toprule
\multirow{6}{*}{Joint Prediction} 
&           Random &                   18.6 &          29.7 &                 73.2 \\
&             None &                   22.5 &          33.9 &                 20.0 \\
&          Top-$k$ &                   25.0 &          37.0 &              \textbf{72.8} \\
&         Max-Only &                   25.9 &          37.5 &                 73.2 \\
&    Best-Response &                   25.6 &          37.0 &                 73.2 \\
& {\OurAlgorithm}  &                   \textbf{26.1} & \textbf{37.8} &        73.2 \\ \midrule
\multirow{6}{*}{Separate Prediction} 
&              Random &                    9.7 &          17.8 &               1356.3 \\
&                None &                   15.9 &          29.1 &                 40.1 \\
&             Top-$k$ &                   21.3 &          31.6 &               1338.8 \\
&            Max-Only &                   25.3 &          36.4 &               1348.7 \\
&       Best-Response &                   25.2 &          36.2 &               1348.7 \\
& {\OurAlgorithm}     &                   25.2 &          36.3 &               1348.7 \\ \bottomrule


\end{tabular}
\end{table}




\begin{figure}[th]
\includegraphics[width=\linewidth]{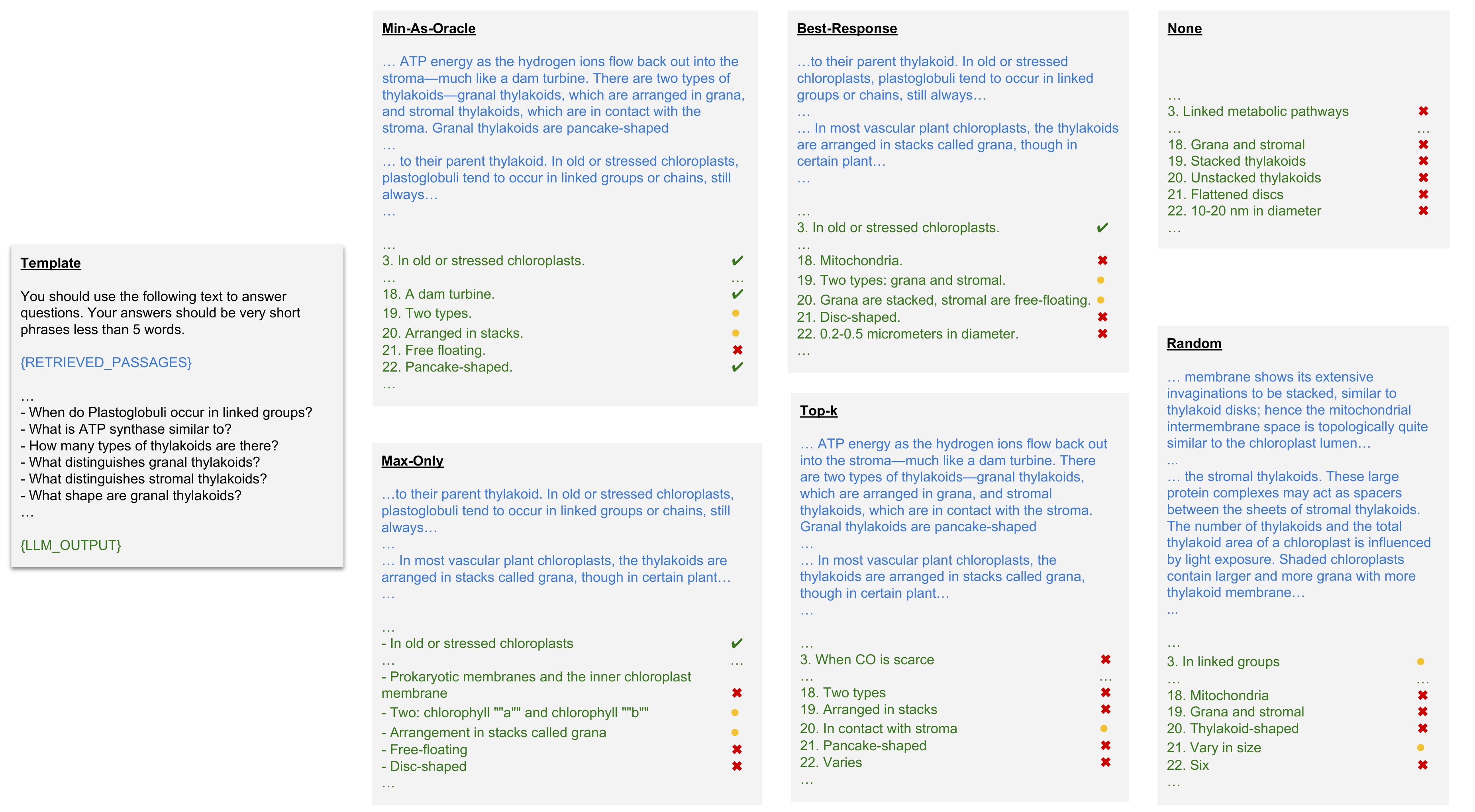}
\caption{Example of our proposed max-min formulation for jointly answering a batch of questions from the SQuAD dataset using GPT-3.5-turbo. Template prompt (left), followed by excerpts from the retrieved passages ({\color{NavyBlue}blue}) and generated answers ({\color{OliveGreen}green}). 
Exact match, partial match, and incorrect answer are denoted {\color{OliveGreen}\ding{52}}, {\color{Dandelion}\ding{108}}, and {\color{BrickRed}\ding{54}}, respectively. 
{\OurAlgorithm} retrieved two passages that are relevant to Questions 3, 18, 19, 20, 21, and 22, while the other selection algorithms retrieved only one or neither of them. Consequently, \textbf{Min-As-Oracle is best aligned with the ground truth answers, having the highest number of exact matches and the fewest number of hallucinations.}}
\label{fig:qa_example}
\end{figure}


\subsection{Prompt engineering for dialog state tracking} \label{ssc:prompt_engineering_dst}
In this section, we consider the problem of selecting example (input, output) pairs for zero-shot in-context learning. 
In this application, the objective is to design prompts for the task of dialog state tracking (DST) on the MultiWOZ 2.4 data set~\cite{budzianowski2018multiwoz}. Following~\cite{hu2022incontext}, we recast this as a text-to-SQL problem in order to prompt the GPT-Neo~\cite{gpt-neo} and OpenAI Codex (\texttt{code-davinci-002})~\cite{chen2021codex} code generation models. These base models are adapted to DST with a combination of subset selection and in-context learning. First, a corpus of (previous dialog state, current dialog turn, SQL query) tuples is constructed from the training dialogs. Given a new input $u_0$, our prompt consists of 1) tabular representations of the dialog state ontology, 2) natural language instructions to query these tables using valid SQL given a task-oriented dialog turn, and 3) examples selected from the corpus by maximizing an objective function. 

Let $u_0$ be the input query to a large language model. Each input $u_0$ contains a list of (key, value) pairs representing the previous dialog state predictions along with the text of the current dialog turn. We would like its prompt to be robust to incorrect predictions of the previous dialog states, as well as text variation such as misspellings. Let $\cN_1$ be a set of \textit{perturbed} inputs drawn from a small neighborhood around $u_0$. These perturbed inputs are constructed by randomly editing up to $2$ slots and/or values in the dialog state, and additionally dropping up to $15\%$ of tokens from the most recent dialog turn. In cases where $u_0$ is initially incorrect, examples that are similar to the perturbed inputs from $\cN_1$ improve the final prompt. Let $\cN = \cN_1 \cup \{u_0\}$, and let $\cN_2$ be the ground set of candidate examples. Given a set of examples $Y \subseteq \cN_2$ and a set of perturbed inputs $X \subseteq \cN_1$, we define the following score function.

\begin{align}\label{eq:robust_maxmin_objective}
f(X\dotcup Y)={}&\sum_{u\in \cN \setminus X}\sum_{v\in Y}s_{u,v}-\frac{\alpha}{|\cN_2|}\cdot\sum_{v\in Y}\sum_{u\in Y}s_{u,v}+\lambda\cdot|X|+|\cN_2|\enspace.
\end{align}
%
Here, $0\leq s_{u,v}\leq 1$ is the symmetric similarity score between examples $u,v$ (the similarity score is computed by embedding both examples with a pretrained SBERT model~\cite{reimers2019sbert}, and then computing cosine similarity of the two embeddings), and $\lambda \geq 0$ and $0 \leq \alpha \leq 1$ are regularization parameters. The parameter $\alpha$ explicitly trades off recommendation quality and diversity. Since we are interested in finding a set of candidate examples $Y$ that is good against the worst case set $X$ of perturbed inputs, we would like to optimize
$
\max_{Y \subseteq \cN_2,|Y| \leq k}\min_{X\in\mathcal{N}_1} f(X\dotcup Y)
$,
where $k$ is an upper bound on the number of examples to include in the prompt. 
By an argument similar to the proof of Lemma~\ref{lemma:1} (below), the objective function~\eqref{eq:robust_maxmin_objective} is a non-negative jointly-submodular function.

Initially, the GPT-Neo-Small generative model was evaluated with all possible combination of values for the regularization parameters from the grid $\lambda\in\{0, 0.5, 0.75 ,0.9 ,2.5\}$ and $\alpha\in\{0, 0.1,\allowbreak 0.3, 0.5, 0.7\}$. The best parameters  $(\lambda=0.9, \alpha=0.5)$ were then used for the other generative models. Following Section~5 of~\cite{hu2022incontext}, all retrieval models were evaluated on inputs obtained by randomly sampling $10\%$ of the MultiWOZ validation set, and all results were averaged over $3$ different candidate sets, which are randomly sampled $5\%$ subsets of the MultiWOZ training set. We set ($k=5$, $|\cN_1|=20$) for GPT-Neo models and ($k=10$, $|\cN_1|=4$) for the OpenAI Codex model.

In our experiments, we have compared {\OurAlgorithm} with the our standard max-min benchmarks Top-$k$, Max-Only and Best-Response, and also with a baseline termed ``Non-robust Top-$k$'' from~\cite{hu2022incontext}. For Top-$k$, Max-Only, Best-Response and {\OurAlgorithm}, we first retrieved a ground set of size $|\cN_2| = 100k$ candidates using the precomputed KD Tree, and only then selected the output set $Y$ using the retrieval algorithm.
Table~\ref{tab:dst_results} shows the Joint F1 score for each of the above-mentioned methods. Results for GPT-Neo models are averaged over $4$ random seeds. One can observe that prompting with our robust formulation outperforms the Non-Robust Top-$k$ baseline by as much as $1.5\%$. Among the algorithms using the robust formulation, our proposed algorithm {\OurAlgorithm} is consistently the best or 2nd best. Table~\ref{tab:dst_results} also shows that {\OurAlgorithm} achieves the highest objective value in all cases. Note that {\OurAlgorithm} has theoretical guarantees for both its convergence and approximation ratio, whereas Sections~\ref{app:ride_share} and~\ref{app:ride_share2} demonstrate that the Best-Response heuristic diverges for some instances.

\begin{table}[ht]
\caption{Dialog state tracking performance and objective values for different language models and retrieval algorithms. Best values are in \textbf{bold}.}\label{tab:dst_results}
\centering\small
\begin{tabular}{cccc}
Generative Model & Retrieval Algorithm & \ifdef{\noCiteInTable}{\hspace{-1mm}}{}Joint F1\ifdef{\noCiteInTable}{} & Objective Value\\ \toprule
\multirow{6}{*}{GPT-Neo-Small} & Random & 0.0480 & 25.259 \\
& Non-robust Top-$k$\ifdef{\noCiteInTable}{}{~\cite{hu2022incontext}} & 0.3249  & 26.165\\
& Top-$k$ & 0.2787  & 26.125\\
& Max-Only & 0.2783 & 26.134\\
& Best-Response & \textbf{0.3251} & 26.165 \\
& {\OurAlgorithm} & 0.3022  & \textbf{26.168} \\ \midrule
\multirow{6}{*}{GPT-Neo-Large} & Random & 0.2275 & 25.254 \\
& Non-robust Top-$k$\ifdef{\noCiteInTable}{}{~\cite{hu2022incontext}}  & 0.4872 & 26.164\\
& Top-$k$ & 0.4821 & 26.127\\
& Max-Only & 0.4830 & 26.134\\
& Best-Response & 0.4845 & 26.165 \\ 
& {\OurAlgorithm} & \textbf{0.5020} & \textbf{26.168} \\ \midrule
\multirow{6}{*}{Codex-Davinci} & Random & 0.8273 & 17.410 \\
& Non-robust Top-$k$\ifdef{\noCiteInTable}{}{~\cite{hu2022incontext}} & 0.8929 & 19.021 \\
& Top-$k$ & 0.8974 & 18.954\\
& Max-Only  & 0.8913 & 18.953\\
& Best-Response & 0.8972 & 19.022 \\
& {\OurAlgorithm} & \textbf{0.8991} & \textbf{19.027} \\ \bottomrule

\end{tabular}
\end{table}



\subsection{Ride-share difficulty kernelization}\label{app:ride_share}
Consider a regulator overseeing the taxi companies licensed to operate within a given city. The regulator wants to make sure that the taxi companies give a fair level of service to all parts of the city, rather than concentrating on the most profitable neighborhoods. However, checking that this is indeed the case is not trivial since often the limited number of taxis available implies that some locations must remain poorly served. Our objective in this section is to give the regulator a small set (kernel) of locations that that capture the difficulty of the problem faced by the taxi company in the sense that the locations in the set cannot be served well (on average) regardless of how the taxi companies choose the waiting locations for their taxis.

Formally, given a set $\cN_1$ of (client) pickup locations, and a set $\mathcal{N}_2$ of potential waiting locations for taxies, we define the following score function to capture the convenience of serving all the locations of $\cN_1 \setminus X$ by locating taxis at locations $Y$.\footnote{It would have been more natural to define $X$ as the set of locations to service. However, this would have resulted in an objective function that is only disjointly submodular.}
{\allowdisplaybreaks\begin{align}\label{func:img}
	f(X \dotcup Y)
	={} 
\sum_{v\in \mathcal{N}\setminus X}  \max_{u\in Y}s_{u,v}-\frac{1}{|\mathcal{N}_2|}\sum_{u\in Y}\sum_{v\in Y} s_{u,v} + \lambda\cdot|X|
\enspace.
\end{align}}%
Here, $s_{u,v}$ is a ``convenience score" which, given a customer location $u = (x_u, y_u)$ and a waiting driver location $v = (x_v, y_v)$,\footnote{Each location is specified by a (latitude, longitude) coordinate pair.} represents the ease of accessing $u$ from $v$. Following \cite{mitrovic2018data}, we set $s_{u, v} \triangleq 2 - \frac{2}{1 + e^{-200d(u,v)}}$, where $d(u,v) = |x_u - x_v| + |y_u - y_v|$ is the Manhattan distance between the two points. The value $\lambda \in [0, 1]$ is a regularization parameter whose use is discussed below.
Some properties of this objective function are given by the next lemma.

\begin{restatable}{lemma}{lemmaOne}\label{lemma:1}
The objective function~\eqref{func:img} is a non-negative jointly-submodular function.
\end{restatable}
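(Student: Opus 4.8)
The plan is to follow the same decomposition strategy used in the proof of Lemma~\ref{lemma:RE}, splitting $f$ into the three summands of~\eqref{func:img} and arguing about each. The one important structural difference I would flag at the outset is that the middle (diversity) term carries a \emph{minus} sign, so $f$ is no longer a conical combination of non-negative jointly-submodular functions. Hence, unlike in Lemma~\ref{lemma:RE}, joint submodularity and non-negativity have to be established by two genuinely separate arguments.

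For joint submodularity I would handle the three terms in turn. The first term $\sum_{v\in\cN\setminus X}\max_{u\in Y}s_{u,v}$ can be treated by exactly the two-case argument (on whether the freshly added element lies in $\cN_1$ or in $\cN_2$) used for the first term of Lemma~\ref{lemma:RE}; that argument relies only on $s\geq 0$ and on $Y'\subseteq Y$, and is insensitive to whether the outer index $v$ ranges over $\cN_1\setminus X$ or over all of $\cN\setminus X$. The third term $\lambda\cdot|X|$ is modular, hence trivially submodular. The genuinely new piece is the middle term, which depends only on $Y$. Here I would set $Q(Y)\triangleq\sum_{u\in Y}\sum_{v\in Y}s_{u,v}$ and show it is \emph{super}modular: its marginal $Q(w\mid Y)=s_{w,w}+\sum_{v\in Y}(s_{w,v}+s_{v,w})$ is non-decreasing in $Y$ because every similarity score is non-negative. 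Thus $-\tfrac{1}{|\cN_2|}Q$ is submodular as a function of $Y$; and any function that ignores the $\cN_1$-coordinates and is submodular in $Y$ is jointly submodular (the defining marginal inequality holds with both sides equal to $0$ when the added element lies in $\cN_1$, and reduces to submodularity in $Y$ otherwise). Since a sum of jointly-submodular functions is jointly submodular, $f$ is jointly submodular.

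For non-negativity I would exploit the specific form of the convenience score, namely $s_{v,v}=1$ and $s_{u,v}\leq 1$ for all $u,v$. Because $Y\subseteq\cN_2\subseteq\cN$ while $X\subseteq\cN_1$ is disjoint from $\cN_2$, every element of $Y$ survives in the outer sum, i.e.\ $Y\subseteq\cN\setminus X$. Consequently the first term is at least $\sum_{v\in Y}\max_{u\in Y}s_{u,v}\geq\sum_{v\in Y}s_{v,v}=|Y|$. On the other hand, using $s_{u,v}\leq 1$ together with $|Y|\leq|\cN_2|$, the subtracted term is at most $\tfrac{1}{|\cN_2|}\cdot|Y|^2\leq|Y|$. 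Combining these two bounds with $\lambda\cdot|X|\geq 0$ yields $f(X\dotcup Y)\geq|Y|-|Y|+\lambda\cdot|X|\geq 0$.

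The step I expect to be the main obstacle is precisely this non-negativity argument: in Lemma~\ref{lemma:RE} non-negativity came ``for free'' from the conical-combination structure, whereas here the diversity penalty is negative and must be dominated. The crux is to notice that the diagonal contributions $s_{v,v}=1$, which appear in the coverage term exactly because the first sum ranges over $\cN\setminus X\supseteq Y$, outweigh term-by-term the normalized pairwise sum; establishing this requires both the boundedness $s\leq 1$ and the inclusion $Y\subseteq\cN\setminus X$, so verifying that the coverage term (rather than the diversity term) controls the sign is where the real care is needed.
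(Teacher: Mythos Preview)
Your proposal is correct. The joint-submodularity part is essentially the paper's argument, just organized term-by-term (you decompose first and then verify marginals for each piece) whereas the paper runs a single two-case marginal check on the whole function; the content is the same.

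For non-negativity, however, you take a genuinely different route. You lean on the specific shape of the convenience score, namely $s_{v,v}=1$ and $s_{u,v}\le 1$, to bound the coverage term below by $|Y|$ and the diversity penalty above by $|Y|$. The paper instead uses only $s\ge 0$: it restricts the outer sum of the coverage term to $\cN_2$ (valid since $\cN_2\subseteq\cN\setminus X$), replaces $\max_{u\in Y}s_{u,v}$ by the average $\tfrac{1}{|Y|}\sum_{u\in Y}s_{u,v}$, and then shrinks the outer index set from $\cN_2$ to $Y$ while loosening $\tfrac{1}{|Y|}$ to $\tfrac{1}{|\cN_2|}$. That chain directly dominates the diversity penalty without ever invoking $s\le 1$ or the diagonal value. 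So the paper's argument is strictly more general (it would survive any non-negative similarity, not just one bounded in $[0,1]$ with unit diagonal), while yours is slightly more concrete and arguably easier to see at a glance for this particular score. Either argument is fine here.
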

\begin{proof}
First, we shall establish that the objective function is non-negative by demonstrating that the first term of the function is consistently greater than the subsequent term. This is established through the following inequality.
\[
    \sum_{v\in \mathcal{N} \setminus X}  \max_{u\in Y}s_{u,v}
    \geq
    \sum_{v\in \mathcal{N}_2} \max_{u\in Y}s_{u,v}
    \geq
    \frac{1}{|Y|} \cdot \sum_{v\in \mathcal{N}_2} \sum _{u\in Y}s_{u,v}
    \geq
    \frac{1}{|\mathcal{N}_2|} \cdot \sum_{v\in Y} \sum _{u\in Y}s_{u,v}\enspace.
\]

Next, we demonstrate that the objective function $f(X,Y)$ is jointly-submodular. Recall that $f$ is jointly-submodular if
\[
	f(u \mid Y' \dotcup X') \geq f(u \mid Y \dotcup X)
	\qquad
	\forall\; X' \subseteq X \subseteq \cN_1, Y' \subseteq Y \subseteq  \cN_2, u \in (\cN_1 \dotcup \cN_2) \setminus (X \dotcup Y)
	\enspace.
\]
To prove that our objective function obeys this inequality, there are two scenarios to consider, based on whether $u$ belongs to the set $\cN_1$ or $\cN_2$.

\paragraph{Case $\mathbf{1}$: The element $u$ belongs to $\cN_1$.}
Here, $f(u \mid Y'\dotcup X')-f(u\mid Y\dotcup X) = \max_{v\in Y}s_{u,v}-\max_{v\in Y'}s_{u,v}\geq 0$.

\paragraph{Case $\mathbf{2}$: The element $u$ belongs to $\cN_2$.}
%
%
Let $\phi(Y,w,J)=\sum_{v\in J} (\max_{u\in Y + w}s_{u,v}-\max_{u\in Y}s_{u,v})$; and note that, for every two sets $Y'\subseteq Y \subseteq \cN_2$, set $J\subseteq \cN_1$ and element $u\in\cN_2\setminus T$, $\phi(Y', u,J)\geq\phi(Y,u,J)$. Using this notation, we get that in this case (the case of $u \in \cN_2$)
\begin{itemize}
    \item $f(u \mid Y'\dotcup X') = \phi(Y',\{u\},\cN\setminus X') - \frac{1}{|\cN_2|}\left(2 \cdot \sum_{v\in Y'}s_{u,v} + s_{u,u}\right)$, and
    \item $f(u \mid Y\dotcup X) = \phi(Y,\{u\},\cN\setminus X) - \frac{1}{|\cN_2|}\left(2 \cdot \sum_{v\in Y}s_{u,v} + s_{u,u}\right)$.
\end{itemize}
Thus,
\[f(u \mid Y'\dotcup X')-f(u\mid Y\dotcup X) = \phi(Y',\{u\},\cN\setminus X') - \phi(Y,\{u\},\cN\setminus X) + \frac{2}{|\cN_2|}\cdot\sum_{v\in Y\setminus Y'}s_{u,v}\geq 0 \enspace,\]
where the last inequality holds since  $\phi(Y',\{u\},\cN\setminus X') - \phi(Y,\{u\},\cN\setminus X) \geq 0$ and $s_{u,v}\geq0$ for any $u,v\in\cN$.
\end{proof}

Recall that we are looking for a kernel set $\cN_1 \setminus X$ of pickup locations that cannot be served well (on average) by any choice of $k$ locations for taxis ($k$ is determined by the number of taxis available). To do that, we solve the max-min optimization problem given by $\min_{X \subseteq \cN_1} \max_{Y \subseteq \cN_2, |Y| \leq k} f(X \dotcup Y)$. The regularization parameter $\lambda$ can now be used to control the size the kernel set returned.

In our experiments for this application, we have used the Uber data set~\cite{uber}, which includes real-life Uber pickups in New York City during the month of April in the year 2014. To ensure computational tractability, in each execution of our experiments, we randomly selected from this data set a subset of $|\mathcal{N}_1|=\numprint{6,000}$ pickup locations within the region of Manhattan. Then, we randomly selected a subset of $400$ pickup locations from the set $\cN_1$ to constitute the set $\cN_2$ (we treat locations in $\cN_1$ and $\cN_2$ as distinct even if they are identical, to guarantee that $\cN_1$ and $\cN_2$ are disjoint as is technically required).

In the first experiment, we fixed the maximum number of waiting locations to be $8$, and varied $\lambda$. Figure~\ref{img:minlocation1} depicts the outputs for this experiment for {\SingletonsAlg}, {\IterativeAlg} (with $\beta = 0.5$) and three benchmarks (averaged over $10$ executions of the experiment). One can observe that both {\IterativeAlg} and {\SingletonsAlg} surpasses the performance of all benchmarks for almost all values of $\lambda$. We note that in both this experiment and the next one the standard error of the mean is less than $10$ for all data points.

In the second experiment, we fixed the regularization parameter $\lambda$ to $0.2$ and varied the number of allowed waiting locations. The results of this experiment are depicted by Figure~\ref{img:minlocation2} (again, averaged over $10$ executions of the experiment). Once again, {\IterativeAlg} and {\SingletonsAlg} demonstrate superior performance compared to the benchmarks for almost all values of $k$.

As the third experiment for this application, we conducted a more in depth analysis of the Best-Response technique. Figure~\ref{img:minlocation3} graphically presents the objective function value obtained by a typical execution of Best-Response after a varying number of iterations (for $\lambda = 0.5$ and an upper bound of $20$ on the number of waiting locations). It is apparent that Best-Response does not converge for this execution. Furthermore, both our suggested algorithms demonstrate better performance even with respect to the best performance of Best-Response for any number of iterations between $1$ and $50$.

\begin{figure}[htbp]
\begin{subfigure}[t]{0.31\textwidth}
  \begin{tikzpicture}[scale=0.5] 
\begin{axis}[
    xlabel = {$\lambda$ value},
    ylabel = {Function Value},
    xmin=0.1, xmax=1,
    ymin=1000, ymax=3000,
		legend cell align=left,
		legend style={at={(1, 1.07)}},
		]
		\pgfplotstableread{CSV_Location_Minmax/Location_k8_random_mean.csv}\Random
		\addplot [name path = random, darkgray, mark = o, mark size=3pt] table [x expr=0.1+\coordindex/10, y index=0, y error index = 1] {\Random};
		\addlegendentry{Random}
		\pgfplotstableread{CSV_Location_Minmax/Location_k8_greed_mean.csv}\MaxOnly
		\pgfplotstablecreatecol[copy column from table={CSV_Location_Minmax/Location_k8_greed_STD.csv}{[index] 0}] {error} {\MaxOnly}
		\addplot [name path = theirs, black, mark = triangle*, mark size=3pt] table [x expr=0.1+\coordindex/10, y index=0, y error index = 1] {\MaxOnly};
		\addlegendentry{Max-and-then-Min}
		\pgfplotstableread{CSV_Location_Minmax/Location_k8_BR_mean.csv}\BestResponse
		\pgfplotstablecreatecol[copy column from table={CSV_Location_Minmax/Location_k8_br_STD.csv}{[index] 0}] {error} {\BestResponse}
		\addplot [name path = theirs, red, mark = square*] table [x expr=0.1+\coordindex/10, y index=0, y error index = 1] {\BestResponse};
		\addlegendentry{Best-Response}
  \pgfplotstableread{CSV_Location_Minmax/Location_k8_sing_mean.csv}\ourss
		\pgfplotstablecreatecol[copy column from table={CSV_Location_Minmax/Location_k8_sing_STD.csv}{[index] 0}] {error} {\ourss}
		\addplot [name path = our, black!30!green, mark = diamond*, mark size=3pt] table [x expr=0.1+\coordindex/10, y index=0, y error index=1] {\ourss};
		\addlegendentry{\SingletonsAlg}
		\pgfplotstableread{CSV_Location_Minmax/Location_k8_reg_alpha5_mean.csv}\ours
		\pgfplotstablecreatecol[copy column from table={CSV_Location_Minmax/Location_k8_reg_STD.csv}{[index] 0}] {error} {\ours}
		\addplot [name path = our, blue, mark = *] table [x expr=0.1+\coordindex/10, y index=0, y error index=1] {\ours};
		\addlegendentry{\IterativeAlg}
	\end{axis}\end{tikzpicture}\caption{Results for $8$ waiting locations}
    \label{img:minlocation1}
 \end{subfigure}\hfill
\begin{subfigure}[t]{0.31\textwidth}
  \begin{tikzpicture}[scale=0.5] 
\begin{axis}[
    xlabel = {Number of waiting locations},
    ylabel = {Function Value},
    xmin=4, xmax=11,
    ymin=1000, ymax=3000,
		legend cell align=left,
		legend style={at={(0.55, 1.07)}},
		]
		\pgfplotstableread{CSV_Location_Minmax/Location_gamma5_k12_random_mean.csv}\Random
		\addplot [name path = random, darkgray, mark = o, mark size=3pt] table [x expr=\coordindex+4, y index=0, y error index = 1] {\Random};
		\addlegendentry{Random}
        \pgfplotstableread{CSV_Location_Minmax/Location_gamma2_greed_mean.csv}\MaxOnly
		\pgfplotstablecreatecol[copy column from table={CSV_Location_Minmax/Location_gamma2_greed_STD.csv}{[index] 0}] {error} {\MaxOnly}
		\addplot [name path = theirs, black, mark = triangle*, mark size=3pt] table [x expr=\coordindex+4, y index=0, y error index = 1] {\MaxOnly};
		\addlegendentry{Max-and-then-Min}
		\pgfplotstableread{CSV_Location_Minmax/Location_gamma2_BR_mean.csv}\BestResponse
		\pgfplotstablecreatecol[copy column from table={CSV_Location_Minmax/Location_gamma2_br_STD.csv}{[index] 0}] {error} {\BestResponse}
		\addplot [name path = theirs, red, mark = square*] table [x expr=\coordindex+4, y index=0, y error index = 1] {\BestResponse};
		\addlegendentry{Best-Response}
		\pgfplotstableread{CSV_Location_Minmax/Location_gamma2_sing_mean.csv}\ourss
		\pgfplotstablecreatecol[copy column from table={CSV_Location_Minmax/Location_gamma2_sing_STD.csv}{[index] 0}] {error} {\ourss}
		\addplot [name path = our, black!30!green, mark = diamond*, mark size=3pt] table [x expr=\coordindex+4, y index=0, y error index=1] {\ourss};
		\addlegendentry{\SingletonsAlg}
  	\pgfplotstableread{CSV_Location_Minmax/Location_gamma2_reg_alpha5_mean.csv}\ours
		\pgfplotstablecreatecol[copy column from table={CSV_Location_Minmax/Location_gamma2_reg_STD.csv}{[index] 0}] {error} {\ours}
		\addplot [name path = our, blue, mark = *] table [x expr=\coordindex+4, y index=0, y error index=1] {\ours};
		\addlegendentry{\IterativeAlg}
	\end{axis}\end{tikzpicture}\caption{Results for $\lambda = 0.2$}
    \label{img:minlocation2}
 \end{subfigure}\hfill
 \begin{subfigure}[t]{0.31\textwidth}
  \begin{tikzpicture}[scale=0.5] 
\begin{axis}[
    xlabel = {Iterations},
    ylabel = {Function Value},
    xmin=0, xmax=50,
    ymin=1910, ymax=2200,
		]
		\addplot [name path = our, red, mark = *, mark repeat=1] table [x expr=\coordindex, y index=0] {CSV_Location_Minmax/BR_mean2.csv};
	\end{axis}\end{tikzpicture}\caption{Behavior of Best-Response for $\lambda = 0.2$ and $8$ waiting locations}
    \label{img:minlocation3}
 \end{subfigure}

\caption{Empirical results for ride-share difficulty kernelization. Figures~(\subref{img:minlocation1}) and~(\subref{img:minlocation2}) compare the performance of our algorithms {\SingletonsAlg} and {\IterativeAlg} with $3$ benchmarks for different value of $\lambda$ and bounds on the number of weighting locations. Figure~(\subref{img:minlocation3}) depicts the value of the output of the Best-Response method as a function of the number of iterations performed.}
\label{fig:kernelization_graphs}
\end{figure}
\subsection{Robust ride-share optimization}\label{app:ride_share2}
In the ``Robust Ride-Share Optimization'' application, our primary objective is to determine the most suitable waiting locations for idle taxi drivers based on taxi order history. This problem was previously formalized as a traditional facility location problem~\cite{mitrovic2018data}. However, in the current work, we look for a more robust set of waiting locations. Often some locations are inaccessible (for example, due to road maintenance). Hence, we wish to find a robust set of waiting locations that effectively minimizes the distance between each customer and her closest driver even when some of the locations are inaccessible. 

The objective function we use to solve the above problem is technically identical to the jointly-submodular function given by~\eqref{func:img}. However, now $\cN_1$ represents the (client) pickup locations that might be inaccessible due to traffic (while $\mathcal{N}_2$ remains the set of potential waiting locations for idle drivers). Furthermore, we now need to perform max-min optimization on this objective function since we look for a set $Y$ of up to $k$ waiting locations that is good regardless of which pickup locations become inaccessible.

In our experiments, we used again the Uber data set~\cite{uber} (see Section~\ref{app:ride_share}). To ensure computational tractability, in each execution of our experiments, we randomly selected from this data set a subset of $|\mathcal{N}|=\numprint{6,000}$ pickup locations within the region of Manhattan. Then, we chose the set $\mathcal{N}_1$ to consist of all the pickup locations that have a latitude value greater than $40.8$, or less than $40.73$. This set represents the pickup locations that are potentially unavailable (for example, due to traffic). 
Furthermore, we randomly selected a subset of $400$ pickup locations from the set $\cN$ to constitute the set $\cN_2$. This set represents the potential waiting locations for idle drivers.

In the first experiment, we fixed the regularization parameter $\lambda$ to $0.35$ and varied the number of allowed waiting locations. Figure~\ref{img:location1} depicts the outputs for this experiment for our algorithm {\OurAlgorithm} and two benchmarks (averaged over $10$ executions of the experiment). One can observe that {\OurAlgorithm} consistently surpasses the two benchmarks. The two other benchmarks (Random and Top-$k$) where also included in this experiment and the next one, but are excluded from the figures since their outputs are worse by a factor of at least $2$ comapred to the presented methods. We also note that in both experiments the standard error of the mean is less than $10$ for all data points.

In the second experiment, we fixed the maximum number of waiting locations to be $15$, and varied $\lambda$. The results of this experimented are depicted by Figure~\ref{img:location2} (again, averaged over $10$ executions of the experiment). Once again, our proposed method, {\OurAlgorithm}, demonstrates superior performance compared to the bechnmarks, with the gap being significant for lower values of $\lambda$. 

As the third experiment for this application, we conducted a more in depth analysis of the Best-Response technique. Figure~\ref{img:location3} graphically presents the objective function value obtained by a typical execution of Best-Response after a varying number of iterations (for $\lambda = 0.35$ and an upper bound of $20$ on the number of waiting locations). It is apparent that Best-Response does not converge for this execution. Furthermore, {\OurAlgorithm} demonstrates better performance even with respect to the best performance of Best-Response for any number of iterations between $1$ and $50$.

\begin{figure}[htbp]
\begin{subfigure}[t]{0.31\textwidth}
  \begin{tikzpicture}[scale=0.5] 
\begin{axis}[
    xlabel = {Number of waiting locations},
    ylabel = {Function Value},
    xmin=10, xmax=20,
    ymin=1720, ymax=2400,
		legend cell align=left,
		legend style={at={(0.9,0.3)}},
		]
		\pgfplotstableread{CSV_Location/Location_Greedy_gamma35_10_mean.csv}\MaxOnly
		\pgfplotstablecreatecol[copy column from table={CSV_Location/Location_Greedy_gamma35_10_std.csv}{[index] 0}] {error} {\MaxOnly}
		\addplot [name path = theirs, black, mark = triangle*, mark size=3pt] table [x expr=10+\coordindex, y index=0, y error index = 1] {\MaxOnly};
		\addlegendentry{Max-Only}
		\pgfplotstableread{CSV_Location/Location_bestresponse_gamma35_10_mean.csv}\BestResponse
		\pgfplotstablecreatecol[copy column from table={CSV_Location/Location_bestresponse_gamma35_10_std.csv}{[index] 0}] {error} {\BestResponse}
		\addplot [name path = theirs, red, mark = square*] table [x expr=10+\coordindex, y index=0, y error index = 1] {\BestResponse};
		\addlegendentry{Best-Response}
		\pgfplotstableread{CSV_Location/Location_maxmin_gamma35_10_mean.csv}\ours
		\pgfplotstablecreatecol[copy column from table={CSV_Location/Location_maxmin_gamma35_10_std.csv}{[index] 0}] {error} {\ours}
		\addplot [name path = our, blue, mark = *] table [x expr=10+\coordindex, y index=0, y error index=1] {\ours};
		\addlegendentry{\OurAlgorithm}
	\end{axis}\end{tikzpicture}\caption{$\lambda=0.35$}
    \label{img:location1}
 \end{subfigure}\hfill
\begin{subfigure}[t]{0.31\textwidth}
  \begin{tikzpicture}[scale=0.5] 
\begin{axis}[
    xlabel = {$\lambda$ values},
    ylabel = {Function Value},
    xmin=0, xmax=1,
    ymin=1000, ymax=2500,
		legend cell align=left,
		legend style={at={(0.9,0.3)}},
		]
		\pgfplotstableread{CSV_Location/Location_Greedy_k15_10_mean.csv}\MaxOnly
		\pgfplotstablecreatecol[copy column from table={CSV_Location/Location_Greedy_k15_10_std.csv}{[index] 0}] {error} {\MaxOnly}
		\addplot [name path = theirs, black, mark = triangle*, mark size=3pt] table [x expr=\coordindex/10, y index=0, y error index = 1] {\MaxOnly};
		\addlegendentry{Max-Only}
		\pgfplotstableread{CSV_Location/Location_bestresponse_k15_10_mean.csv}\BestResponse
		\pgfplotstablecreatecol[copy column from table={CSV_Location/Location_bestresponse_k15_10_std.csv}{[index] 0}] {error} {\BestResponse}
		\addplot [name path = theirs, red, mark = square*] table [x expr=\coordindex/10, y index=0, y error index = 1] {\BestResponse};
		\addlegendentry{Best-Response}
		\pgfplotstableread{CSV_Location/Location_maxmin_k15_10_mean.csv}\ours
		\pgfplotstablecreatecol[copy column from table={CSV_Location/Location_maxmin_k15_10_std.csv}{[index] 0}] {error} {\ours}
		\addplot [name path = our, blue, mark = *] table [x expr=\coordindex/10, y index=0, y error index=1] {\ours};
		\addlegendentry{\OurAlgorithm}
	\end{axis}\end{tikzpicture}\caption{$15$ waiting locations}
    \label{img:location2}
 \end{subfigure}\hfill
 \begin{subfigure}[t]{0.31\textwidth}
  \begin{tikzpicture}[scale=0.5] 
\begin{axis}[
    xlabel = {Iterations},
    ylabel = {Function Value},
    xmin=0, xmax=50,
    ymin=2325, ymax=2380,
		legend cell align=left,
		legend style={at={(0.9,0.3)}}]
		\addplot [name path = our, red, mark = *, mark repeat=1] table [x expr=\coordindex, y index=0] {CSV/Location_bestresponse_gamma35_k=20_100iterations.csv};
	\end{axis}\end{tikzpicture}\caption{Behavior of Best-Response for $\lambda = 0.35$ and $20$ waiting locations}
    \label{img:location3}
 \end{subfigure}

\caption{Empirical results for robust ride-share optimization. Figures~(\subref{img:location1}) and~(\subref{img:location2}) compare the performance of our algorithm {\OurAlgorithm} with $2$ benchmarks for different value of $\lambda$ and bounds on the number of weighting locations. Figure~(\subref{img:location3}) depicts the value of the output of the Best-Response method as a function of the number of iterations performed.}
\label{fig:Location_graphs}
\end{figure}

Our last experiment for this section aims to give a more intuitive point of view on the performance of our algorithm ({\OurAlgorithm}). Figure~\ref{fig:maps} depicts the results of this algorithm on maps of Manhattan for three different values of $\lambda$ ($0.2$, $0.4$ and $0.8$). To make the maps easy to read, we allowed the algorithm to select only $6$ waiting locations for idle drivers, and the locations suggested by the algorithm are marked with red triangles on the maps. We have also marked on the maps the pick up locations of $\cN$. The black dots represent the waiting locations that are inaccessible, while the light gray dots indicate the accessible pickup locations. Intuitively, the regularization parameter $\lambda$ captures in this application the probability of pickup locations in $\cN_1$ to be accessible. For example, when $\lambda=0$, it is assumed that all locations in $\cN_1$ are inaccessible, whereas $\lambda=1$ means that all locations in $\cN_1$ are assumed to be accessible. This intuitive role of $\lambda$ is demonstrated in Figure~\ref{fig:maps} in the following sense. As the value of $\lambda$ increases, the number of red triangles in the figure inside the areas of the black dots tends to increase, and furthermore, the locations of theses triangles are pushed deeper into these areas.

\begin{figure}[htbp]
\begin{subfigure}[t]{0.3\textwidth}
  \includegraphics[width=\linewidth]{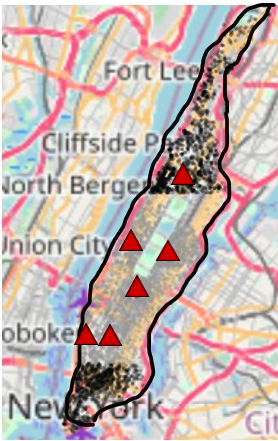}
\caption{$\lambda=0.2$}
\label{fig:map1}
\end{subfigure}\hfill
\begin{subfigure}[t]{0.3\textwidth}
    \includegraphics[width=\linewidth]{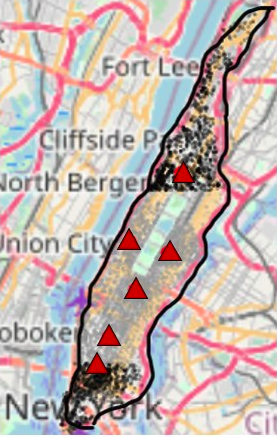}
\caption{$\lambda=0.4$}
\label{fig:map2}
\end{subfigure}\hfill
\begin{subfigure}[t]{0.3\textwidth}
    \includegraphics[width=\linewidth]{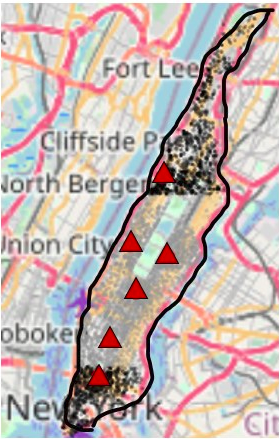}
\caption{$\lambda=0.8$}
\label{fig:map3}
\end{subfigure}
\caption{The results of {\OurAlgorithm} for $3$ different values of $\lambda$. The red triangles represent waiting locations chosen by the algorithm, the light gray dots represent always accessible pick-up locations, and the black dots represent possibly inaccessible pick-up locations.}
\label{fig:maps}
\end{figure}

\subsection{Adversarial attack on image summarization} \label{ssc:image_summarization}
In this section we consider the application of ``Adversarial Attack on Image Summarization", which is an attack version of an application studied by many previous works (see, e.g.,~\cite{mitrovic2018data,mualem2022using,tschiatschek2014learning}). The setting for this application includes a collection of images from $\ell$ disjoint categories (such as birds, airplanes or cats), and a user that specifies $r \in [\ell]$ categories of interest. In the classical version of this application, the objective is to construct a subset of $k$ images summarizing the images belonging to the categories specified by the user. However, here we are interested in mounting an attack against this summarization task. Specifically, our goal is to add a few additional images to the original set of images in a way that undermines the quality of any subsequently chosen summarizing subset.

Formally, we have in this application a (completed) similarity matrix $M$ comprising similarity scores for both the set $\cN_2$ of original images and the set $\cN_1$ of images that the attacker may add. We aim to choose a set $X \subseteq \cN_1$ of images such that adding the images of $\cN_1 \setminus X$ simultaneously minimizes the value every possible summarizing subset $Y$. The value of a summarizing set $Y$ is given by the following objective function.
\begin{align}\label{func:norm}
	f(X \dotcup Y)
	={} 
\sqrt[3]{\sum_{v\in \mathcal{N}\setminus X}  \sum_{u\in Y}M_{u,v}^3}-\frac{1}{|\mathcal{N}_2|}\sqrt[3]{\sum_{u\in Y}\sum_{v\in Y} M_{u,v}^3} + \lambda\cdot|X|\cdot\sqrt[3]{k}
\enspace.
\end{align}
Here, $M_{u,v}$ is the similarity score between images $u$ and $v$, which is assumed to be non-negative and symmetric (i.e., $M_{u, v} = M_{v, u} \geq 0$); and  $\lambda \in [0, 1]$ is a regularization parameter affecting the number of elements added by the adversary. Choosing a larger value for $\lambda$ results in a larger set $X$, and thus, less adversarial images being added. The objective function $f$ is jointly-submodular and non-negative (the proof is very similar to the proof that the function in Equation~\eqref{eq:robust_maxmin_objective} has these properties, and therefore, we omit it). Since we are interested in finding an attacker set $X$ that is good against the best summary set $Y$ of size $k$, the optimization problem that we aim to solve is
\[
	\min_{X\subseteq\mathcal{N}_1}\max_{\substack{Y \subseteq \cN_2\\|Y| \leq k}}f(X \dotcup Y)
	\enspace.
\]

Our experiments for this application are based on a subset of the CIFAR-$10$ data set~\cite{krizhevsky2009learning}. This subset includes $\numprint{10,000}$ tiny images belonging to $10$ classes. Each image consists of $32\times 32$ RGB pixels, and is thus, represented by a $\numprint{3,072}$ dimensional vector, and the cosine similarity method was used to compute similarities between images. In order to keep the running time computationally tractable, we randomly sampled from the data set in each experiment disjoint sets $\cN_1$ and $\cN_2$ of sizes $|\cN_1| = \numprint{2,000}$ and $|\cN_2|=250$.

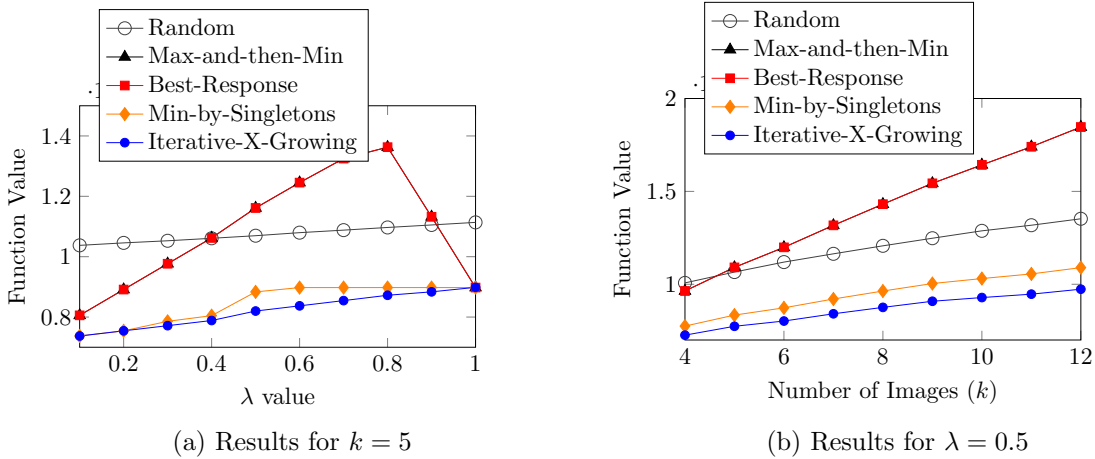
\begin{figure*}[tb]
\hspace*{\fill}
\begin{subfigure}[t]{0.47\textwidth}
 \begin{tikzpicture}[scale=0.82] 
\begin{axis}[
    xlabel = {$\lambda$ value},
    ylabel = {Function Value},
    xmin=0.1, xmax=1,
    ymin=7000, ymax=15000,
		legend cell align=left,
		legend style={at={(0.73,1.4)}},
		width=8cm,height=55mm
		]
		\pgfplotstableread{CSV_Image_Minmax/Image_k5_random_mean.csv}\random
		\pgfplotstablecreatecol[copy column from table={CSV_Image_Minmax/Image_k5_norm3_reg_STD.csv}{[index] 0}] {error} {\random}
		\addplot [name path = random, darkgray, mark = o, mark size=3pt] table [x expr=0.1+\coordindex/10, y index=0, y error index=1] {\random};
		\addlegendentry{Random}
		\pgfplotstableread{CSV_Image_Minmax/Image_k5_norm3_greed_mean.csv}\MaxOnly
		\pgfplotstablecreatecol[copy column from table={CSV_Image_Minmax/Image_k5_norm3_greed_STD.csv}{[index] 0}] {error} {\MaxOnly}
		\addplot [name path = theirs, black, mark = triangle*, mark size=3pt] table [x expr=0.1+\coordindex/10, y index=0, y error index = 1] {\MaxOnly};
		\addlegendentry{Max-and-then-Min}
		\pgfplotstableread{CSV_Image_Minmax/Image_k5_norm3_br_mean.csv}\BestResponse
		\pgfplotstablecreatecol[copy column from table={CSV_Image_Minmax/Image_k5_norm3_br_STD.csv}{[index] 0}] {error} {\BestResponse}
		\addplot [name path = theirs, red, mark = square*] table [x expr=0.1+\coordindex/10, y index=0, y error index = 1] {\BestResponse};
		\addlegendentry{Best-Response}
  		\pgfplotstableread{CSV_Image_Minmax/Image_k5_norm3_sing_mean.csv}\oursTwo
		\pgfplotstablecreatecol[copy column from table={CSV_Image_Minmax/Image_k5_norm3_sing_STD.csv}{[index] 0}] {error} {\oursTwo}
		\addplot [name path = our, orange, mark = diamond*, mark size=3pt] table [x expr=0.1+\coordindex/10, y index=0, y error index=1] {\oursTwo};
		\addlegendentry{\SingletonsAlg}
		\pgfplotstableread{CSV_Image_Minmax/Image_k5_norm3_reg_mean.csv}\ours
		\pgfplotstablecreatecol[copy column from table={CSV_Image_Minmax/Image_k5_norm3_reg_STD.csv}{[index] 0}] {error} {\ours}
		\addplot [name path = our, blue, mark = *] table [x expr=0.1+\coordindex/10, y index=0, y error index=1] {\ours};
		\addlegendentry{\IterativeAlg}
	\end{axis}\end{tikzpicture}\caption{Results for $k=5$}
    \label{img:image1}
 \end{subfigure}\hfill
 \begin{subfigure}[t]{0.47\textwidth}
  \begin{tikzpicture}[scale=0.82] 
\begin{axis}[
    xlabel = {Number of Images $(k)$},
    ylabel = {Function Value},
    xmin=4, xmax=12,
    ymin=7000, ymax=20000,
		legend cell align=left,
		legend style={at={(0.73,1.4)}},
		width=8cm,height=55mm
		]
		\pgfplotstableread{CSV_Image_Minmax/Image_gamma5_random_mean.csv}\random
  		\pgfplotstablecreatecol[copy column from table={CSV_Image_Minmax/Image_gamma5_k14_reg_STD.csv}{[index] 0}] {error} {\random}
		\addplot [name path = random, darkgray, mark = o, mark size=3pt] table [x expr=4+\coordindex, y index=0, y error index=1] {\random};
		\addlegendentry{Random}
		\pgfplotstableread{CSV_Image_Minmax/Image_gamma5_k14_greed_mean.csv}\MaxOnly
		\pgfplotstablecreatecol[copy column from table={CSV_Image_Minmax/Image_gamma5_k14_greed_STD.csv}{[index] 0}] {error} {\MaxOnly}
		\addplot [name path = theirs, black, mark = triangle*, mark size = 3pt] table [x expr=4+\coordindex, y index=0, y error index = 1] {\MaxOnly};
		\addlegendentry{Max-and-then-Min}
		\pgfplotstableread{CSV_Image_Minmax/Image_gamma5_k14_br_mean.csv}\BestResponse
		\pgfplotstablecreatecol[copy column from table={CSV_Image_Minmax/Image_gamma5_k14_br_STD.csv}{[index] 0}] {error} {\BestResponse}
		\addplot [name path = theirs, red, mark = square*] table [x expr=4+\coordindex, y index=0, y error index = 1] {\BestResponse};
		\addlegendentry{Best-Response}
  	\pgfplotstableread{CSV_Image_Minmax/Image_gamma5_k14_sing_mean.csv}\oursTwo
		\pgfplotstablecreatecol[copy column from table={CSV_Image_Minmax/Image_gamma5_k14_sing_STD.csv}{[index] 0}] {error} {\oursTwo}
		\addplot [name path = our, orange, mark = diamond*, mark size=3pt] table [x expr=4+\coordindex, y index=0, y error index=1] {\oursTwo};
		\addlegendentry{\SingletonsAlg}
    \pgfplotstableread{CSV_Image_Minmax/Image_gamma5_k14_reg_mean.csv}\ours
		\pgfplotstablecreatecol[copy column from table={CSV_Image_Minmax/Image_gamma5_k14_reg_STD.csv}{[index] 0}] {error} {\ours}
		\addplot [name path = our, blue, mark = *] table [x expr=4+\coordindex, y index=0, y error index=1] {\ours};
		\addlegendentry{\IterativeAlg}
  
	\end{axis}\end{tikzpicture}
  \caption{Results for $\lambda=0.5$}\label{img:image2}
  \end{subfigure}\hspace*{\fill}
	\caption{Empirical results for adversarial attack on image summarization. Both plots compares the performance of our algorithms {\SingletonsAlg} and {\IterativeAlg} with $3$ benchmarks for different value of the regularization parameter $\lambda$ and the cardinality parameter $k$.} \label{fig:image_summarization}
\end{figure*}

In our experiments, we study the change in the quality of the summaries obtained by the various algorithms and benchmarks as a function of the allowed number $k$ of images and the regularization parameter $\lambda$. Figure~\ref{img:image1} presents the outputs of our algorithms {\SingletonsAlg} and {\IterativeAlg} (with $\beta=0.2$) and three benchmarks for $k = 5$ and a varying regularization parameter $\lambda$. Figure~\ref{img:image2} presents the outputs of the same algorithms and benchmarks for $\lambda = 0.5$ and a varying limitation $k$ on the number of images in the summary. One can observe that both of our algorithms consistently outperform the benchmarks of Best-Response, Max-and-then-Min and Random, with the more involved algorithm {\IterativeAlg} tending to do better than the simpler algorithm {\SingletonsAlg}. Both figures are based on averaging $400$ executions of the algorithms, leading to a standard error of the mean of less than $10$ for all data points. It is also worth noting that the basic scarecrow benchmark ``Random'' outperforms the Best-Response and Max-and-then-Min benchmarks in many cases. This hints that the last heuristics are unreliable despite being natural, and highlights the significance of the methods we propose.

\section{Conclusion and future work}

In this paper we have initiated the systematical study of minimax optimization for combinatorial (discrete) settings with large domains. We have developed theoretical results fully mapping the approximability of max-min submodular optimization, and also obtained some understanding of the approximability of min-max submodular optimization. The above theoretical work has been complemented with empirical experiments demonstrating the value of our technique for the machine-learning tasks of efficient prompt engineering, ride-share difficulty kernelization, adversarial attacks on image summarization, and robust ride-share optimization.

We hope future work will lead to a fuller understanding of minimax submodular optimization, and will also consider classes of discrete functions beyond submodularity. A natural class to consider in that regard is the class of weakly-submodular functions~\cite{das2011submodular}, which extends the class of submodular functions. However, minimax optimization of this class seems to be difficult because, as far as we know, no algorithm is currently known even for plain minimization of weakly-submodular functions. 
Another open problem is to prove a performance guarantee for the Best-Response method.


\bibliographystyle{plain}
\bibliography{SubmodularMinMax}

\appendix
\section{Benchmarks and algorithm implementations} \label{app:implementation}

In this section we define all the benchmarks that we compare in Section~\ref{sec:exp}  against our algorithms. We then discuss the implementation details of these benchmarks and our algorithms.
\begin{itemize}
	\item Random: Returns a random feasible solution. In the max-min setting this means random $k$ elements from the ground set $\cN_1$, and in the min-max setting this means a random subset of $\cN_2$.
	\item Max-Only: This benchmark makes sense only in the max-min setting. It uses a submodular maximization algorithm to find a feasible set $Y$ (approximately) maximizing the objective for $X =\varnothing$.
	\item Max-and-then-Min: A variant of Max-Only for use in the min-max setting. It returns a set $X$ minimizing the objective given the set $Y$ chosen by Max-Only. Note that this is essentially equivalent to a single iteration of Best-Response.
	\item Top-$k$: This benchmark makes sense only in the max-min setting. It returns the $k$ singletons from $\cN_2$ with the maximum value, where the value of every singleton $u \in \cN_2$ is defined as $\min_{X \subseteq \cN_1} f(X \dotcup \{u\})$.
  \item Best-Response: This benchmark proceeds in iterations. In the first iteration, one obtains a subset $Y \in \cF_2$ (approximately) maximizing $f(Y)$ through the execution of a maximization algorithm, which is followed by finding a set $X \subseteq \cN_2$ minimizing $f(X \dotcup Y)$ by running a minimization algorithm. Subsequent iterations are similar to the first iteration, except that the set $Y$ chosen in these iterations is a set that (approximately) maximizes $f(X \dotcup Y)$, where $X$ is the minimizing set chosen in the previous iteration. The output is then the last set $Y$ in the max-min setting, and the last set $X$ in the min-max setting.
\end{itemize}

In most of our applications, we aim to optimize objectives that are not $\cN_2$-monotone, which requires a procedure for (approximate) maximization of non-monotone submodular functions. As mentioned in Section~\ref{ssc:related_work}, the state-of-the-art approximation guarantee for the case in which the objective function $f$ is not guaranteed to be monotone is currently $0.385$~\cite{buchbinder2019constrained}. However, the algorithm obtaining this approximation ratio is quite involved, which limits its practicality. Arguably, the state-of-the-art approximation ratio obtained by a ``simple" algorithm is $1/e$-approximation obtained by an algorithm called Random Greedy~\cite{buchbinder2014submodular}. In practice, the performance of this algorithm is comparable to that of the standard greedy algorithm, despite the last algorithm not having any approximation guarantee for non-monotone objective functions. Hence, throughout the experiments, the maximization component used in all the relevant benchmarks and algorithms is either the standard greedy algorithm or an accelerated version of it (suggested by~\cite{badanidiyuru2014fast}) named Threshold Greedy.

In our experiment we often report the values of the objective function corresponding to the output sets produced by the various benchmarks and algorithms. In the max-min setting, given an output set $X$, computing the objective value is done by utilizing an efficient minimizing algorithm to identify a minimizing set $X$. In the min-max setting, the situation is more involved as calculating the true objective value for given an output set $X$ cannot be done efficiently in sub-exponential time (as it corresponds to maximizing a submodular function subject to a cardinality constraint). Therefore, we use Threshold Greedy algorithm mentioned above to find a set $Y$ that approximately maximize the objective with respect to $X$, and then report the value corresponding to this set $Y$ as a proxy for the true objective value.

Our experiments for the min-max setting use a slightly modified version of {\IterativeAlg} (Algorithm~\ref{alg:square_root_approximation}). Specifically, we make two modifications to the algorithm.
\begin{itemize}
	\item {\IterativeAlg} grows a solution in iterations. As written, it outputs the set obtained after the last iteration. However, we chose to output instead the best set obtained after any number of iterations. This is a standard modification often used when applying to practice an iterative theoretical algorithm.
	\item Line~\ref{line:prime_selection} of {\IterativeAlg} looks for a set $X'_i$ that minimizes an expression involving two terms. The first of these terms $\sqrt{n_1} \cdot f(X \cup X_{i - 1})$ has the large coefficient $\sqrt{n_1}$. The value of this coefficient was chosen to fit the largest number of possible iterations that the algorithm may perform ($n_1 + 1$). However, in practice we found that the algorithm usually makes very few iterations. Thus, the use of the large coefficient $\sqrt{n_1}$ becomes sub-optimal. To truly show the empirical performance of {\IterativeAlg}, we replaced the coefficient $\sqrt{n_1}$ with a parameter $\beta$ whose value is chosen based on the application in question.
\end{itemize}

\end{document}